\font\msbm=msbm10
\numberwithin{equation}{section}
\theoremstyle{plain}
\newtheorem{theorem}{Theorem}[section]
\newtheorem{lemma}[theorem]{Lemma}
\newtheorem{proposition}[theorem]{Proposition}
\newtheorem{open}[theorem]{Open Problem}
\newtheorem{remark}[theorem]{Remark}
\def\mathbb#1{\hbox{\msbm{#1}}}
\newcommand{\bn}{\boldsymbol{n}}
\newcommand{\A}{\mathcal{A}}
\newcommand{\pa}{\partial}
\newcommand{\CC}{\mathbb{C}}
\newcommand{\RR}{\mathbb{R}}
\newcommand{\lag}{\langle}
\newcommand{\rag}{\rangle}
\newcommand{\Tr}{\text{Tr}}
\newcommand{\eps}{\epsilon}
\newcommand{\TB}{T^{\bot}}
\renewcommand{\Pr}{\mathbb{P}}
\newcommand*\diff{\mathop{}\!\mathrm{d}}
\DeclareMathOperator{\E}{\mathbb{E}}
\DeclareMathOperator{\diag}{diag}
\DeclareMathOperator{\RatioCut}{RatioCut}
\DeclareMathOperator{\NCut}{NCut}
\DeclareMathOperator{\cut}{cut}
\DeclareMathOperator{\Vol}{vol}
\DeclareMathOperator{\sym}{sym}
\DeclareMathOperator{\rw}{rw}
\DeclareMathOperator{\iso}{iso}
\DeclareMathOperator{\rcut}{rcut}
\DeclareMathOperator{\ncut}{ncut}
\definecolor{xl}{RGB}{200,50,120}
\begin{document}
\title{\bf Certifying Global Optimality of Graph Cuts via Semidefinite Relaxation: \\
A Performance Guarantee for Spectral Clustering}

\author{Shuyang Ling\thanks{Courant Institute of Mathematical Sciences, New York University (Email: sling@cims.nyu.edu).}~~and Thomas Strohmer\thanks{Department of Mathematics, University of California at Davis (Email: strohmer@math.ucdavis.edu).}\thanks{T.~Strohmer acknowledges partial support from the NSF via grants DMS 1620455 and DMS 1737943.} }

\maketitle
\begin{abstract} 
Spectral clustering has become one of the most widely used clustering techniques when the structure of the individual clusters is non-convex or highly anisotropic. Yet, despite its immense popularity, there exists fairly little theory about performance guarantees for spectral clustering.
This issue is partly due to the fact that  spectral clustering typically involves two steps which complicated its theoretical analysis: 
first, the eigenvectors of the associated graph Laplacian are used to embed the dataset, and second,  k-means clustering algorithm is applied
to the embedded dataset to get the labels. This paper is devoted to the theoretical foundations of spectral clustering and graph cuts. We consider a convex relaxation of graph cuts, namely ratio cuts and normalized cuts, that makes the usual two-step approach of  spectral clustering obsolete and at the same time
gives rise to a rigorous theoretical analysis of graph cuts and spectral clustering. We derive deterministic bounds for successful spectral clustering via a~\emph{spectral proximity condition} that naturally depends on the algebraic connectivity of each cluster
and the inter-cluster connectivity. Moreover, we demonstrate by means of some popular examples that our bounds can achieve near-optimality. Our findings are also fundamental to the theoretical understanding of kernel k-means. Numerical simulations confirm and complement our analysis.
%
\end{abstract}

\section{Introduction}

Organizing data into meaningful groups is one of the most fundamental tasks in data analysis and machine learning~\cite{hastie2009unsupervised,jain2010data}. 
{ k-means} is probably the most well known and most widely used clustering method~\cite{Lloyd82,ArthurV07,hastie2009unsupervised} in unsupervised learning. Yet, its performance is severely limited by two obstacles: (i) The k-means objective function is non-convex and finding its actual minimum is computationally hard {
even if there are only two clusters~\cite{ADHP09} or if the points lie in a 2D plane~\cite{MNV09}.}
(ii) k-means operates under the tacit assumption that individual clusters lie within convex boundaries, and in addition are reasonably isotropic or widely separated. To address the first obstacle, heuristics such as Lloyd's algorithm~\cite{Lloyd82}, are usually employed in an attempt to compute the solution in a numerically efficient manner. The second obstacle is more severe and independent of the actual algorithm used to find the objective function's minimum. 

Spectral clustering has arguably become the most popular clustering technique when the structure of the individual clusters is non-convex and/or highly anisotropic~\cite{Von07,BelkinN03,NgJW02}. The spectral clustering algorithm typically involves two steps: 
(i) Laplacian eigenmaps: construct a similarity graph from the data and the eigenvectors of the associated graph Laplacian are used to embed the dataset into the feature space; (ii) rounding procedure: k-means is applied
to the embedded dataset to obtain the clustering. As pointed out in~\cite{Von07}, the immense success of spectral clustering lies in its flexibility to deal with data of various shapes and complicated geometry, mainly due to the Laplacian eigenmap based embedding prior to the k-means procedure. Thus spectral clustering is also regarded as a variant of kernel k-means~\cite{DhillonGK04}. 
However, despite its enormous popularity and success, our theoretical understanding of the performance of spectral clustering is still rather vague. While there is vast empirical evidence of clustering examples in which e.g.\ spectral clustering by far outperforms k-means, there exists little rigorous theoretical analysis---even for very simple cases---that would prove the superiority of spectral clustering, partly due to the two-step procedure which complicates its theoretical analysis. 

This paper is devoted to the theoretical foundations of spectral clustering and graph cuts. 
To begin with, we look at spectral clustering from a graph cut point of view and briefly review the state-of-the-art results which suggest to some extent why spectral clustering works.
The basic intuition behind data clustering is to partition points into different groups based on their similarity. A partition of the data points always corresponds to a graph cut of the associated adjacency/similarity matrix. From the perspective of graph cuts, instead of computing the minimal graph cuts to obtain the data clustering, 
it is preferable to find a graph cut such that the sizes of all clusters are balanced and the inter-cluster connectivity is minimized. This is made possible by considering minimal ratio cuts and normalized cuts,  which represent a traditional problem in graph theory~\cite{bondy1976graph,arora2009expander}. Those problems arise in a diverse range of applications besides spectral clustering, including community detection~\cite{Abbe17,Agarwal17,AminiL18}, computer vision and image segmentation~\cite{ShiM00}. While finding the optimal balanced graph cuts is a computationally hard problem in general, significant progresses have been made to relax this problem by linking it to the spectra of the associated Laplacian matrix. This link immediately leads to spectral graph theory~\cite{Chung97} which has made a great impact on many branches of mathematics and computer sciences. 
The two-step spectral clustering algorithm can be derived via graph ratio cuts~\cite{BelkinN02,BelkinN03,Von07,HagenK92} and normalized cuts~\cite{NgJW02,DhillonGK04,ShiM00}, which in turn are connected to graph Laplacian and normalized graph Laplacian, respectively. 

The rather limited existing theory on spectral clustering is based on plain matrix perturbation analysis~\cite{Von07,Stewart90,DK70SIAM} especially via the famous Davis-Kahan theorem. The main reasoning behind perturbation analysis relies on the (unrealistic) assumption that if all underlying clusters on the graph are disconnected from one another, the eigenvectors of graph Laplacian with respect to the first few smallest eigenvalues are exactly indicator vectors which identify the data labels automatically. 
In the case when the eigenvectors are not exactly the indicator vectors (i.e., when the graph is connected), the perturbation argument fails to give the exact clustering and thus k-means is needed to perform the ``rounding" procedure. 
Therefore, the perturbation argument, despite its simplicity, does not yield any optimality bounds that would establish under which conditions spectral clustering will succeed or fail to provide correct clustering.

Another direction of the state-of-the-art mathematical theories concentrates on spectral clustering for random data generative model especially for stochastic block model in~\cite{RoheCY11,LeiR15}. With the help of randomness, the performance bounds (such as misclassification rate) of the two-step spectral clustering algorithm are derived in~\cite{RoheCY11} and the consistency of spectral clustering is given in~\cite{LeiR15}. Yet another related line of research  focuses on understanding the convergence of the graph Laplacian associated with random samples to the 
Laplace-Beltrami operator on Riemannian manifolds~\cite{VonBB08,BelkinN03,BelkinN05,Singer06,SingerW16,Trillos18,TrillosS16}.  Those excellent works establish a  rigorous bridge between the discrete graph Laplacian and its continuous counterpart Laplace-Beltrami operator on the manifold~\cite{DoCarmo92}. 

\vskip0.25cm
While all the previous works are inspirational and illuminating, the key question is not fully addressed: under what type of conditions is spectral clustering  able to identify the planted underlying partition exactly? More generally, how to certify a graph cut as the global optimum of ratio cuts or normalized cuts by using only the spectral properties of the (either normalized or unnormalized) graph Laplacian?

In this paper we answer these fundamental theoretical questions by taking a different approach, namely  via considering convex relaxations of ratio cuts and normalized cuts, which solve spectral clustering as a special case (which may at first sound like a tautology, since spectral clustering in itself can be obtained as a relaxation of graph cuts). Our framework makes the standard
 two-step spectral clustering approach obsolete and at the same time 
gives rise to a rigorous theoretical analysis. We derive~\emph{deterministic} bounds under which our semidefinite programming (SDP) relaxation of spectral clustering will produce the correct planted clusters. One highlighted feature of our result is that no assumption  is imposed about the underlying probability distribution governing the data. Another important feature is that the derived bounds are independent of the number of clusters. This desirable property is a clear advancement over known theoretical results for SDP relaxation of k-means clustering, which do depend on the number of clusters. 

Moreover, our theory serves as a simple criterion to certify whether a graph cut is globally optimal under either ratio cuts or normalized cuts. The guarantees depend on a {\em spectral proximity condition}, a deterministic condition that  encodes the algebraic connectivity of each cluster (the Fiedler eigenvalue of the graph Laplacian) and the inter-cluster connectivity in the case of ratio cuts and graph Laplacian. For the normalized cuts and normalized graph Laplacian, the guarantees have an intuitive probabilistic interpretation from a random walk point of view. 
Our bounds can be seen as a kernel-space analog of the well-known (Euclidean-space) proximity condition appearing in the theoretical analysis of k-means~\cite{KumarK10,AwasthiS12,LLLSW17}. Furthermore, we demonstrate by means of simple, often used examples, that our theoretical framework can provide nearly optimal performance bounds for SDP spectral clustering.

Our approach is inspired by recent progress regarding the convex relaxation of k-means~\cite{PengW07,AwasBCKVW15,IguchiMPV17,LLLSW17} and follows the ``Relax, no need to round''-paradigm. 
Note however, while the convex relaxation of k-means in~\cite{AwasBCKVW15,IguchiMPV17,LLLSW17} can provide a theoretical analysis concerning a successful computation of the optimal solution to the k-means objective function, it cannot overcome the fundamental limitations of the k-means objective function itself vis-a-vis nonconvex and anisotropic clusters. One attempt to address the latter shortcomings of k-means consists in replacing the Euclidean distance with a kernel function, leading to the aptly named kernel k-means algorithm~\cite{DhillonGK04,XingJ03}. One can interpret the SDP spectral clustering framework derived in the current paper as an extension of the theoretical analysis of the convex relaxation k-means approach in~\cite{AwasBCKVW15,LLLSW17} to kernel k-means.

Moreover, due to the natural connection between graph Laplacians and diffusion maps~\cite{CLL05}, our paper also sheds light on our theoretical understanding of diffusion map based data organization. Finally, we  would like to acknowledge being influenced by the recent progress on the convex relaxation of community detection under stochastic block model~\cite{EmmanuelBH16,Bandeira18,Abbe17,Agarwal17,AminiL18,YanSC17} in which the adjacency matrix is a binary random matrix. In fact, community detection problem can be viewed as an example of {the graph partition problem} on random graphs and hence our approach shares certain similarities with these previous works to some extent.  However, as pointed out previously,  our theoretic framework is significantly different from the existing literature since our theory does not assume any randomness as prior information and thus applies to more general settings besides community detection problem.

\subsection{Organization of our paper}
This paper is organized as follows. In Section~\ref{s:spectral} we review the basics of spectral clustering, motivated by both ratio cuts and normalized cuts. The proposed semidefinite relaxation of spectral clustering and our main theorems are presented in Section~\ref{s:sdp}. We then demonstrate the near-optimality of the theoretical bounds by means of simple, well-known examples, see Section~\ref{s:bounds}, and propose two open problems. Section~\ref{s:numerics} is devoted to numerical experiments that illustrate and complement the theoretical analysis.  Finally, the proofs of our results can be found in Section~\ref{s:proofs}.

\subsection{Notation}

For a vector $z$, we denote by $\|z\|$ its Euclidean norm and by $\|z\|_{\infty}$ the maximum of the absolute values of its components. 
For a matrix $Z$, we denote by $Z^{(a,b)}$  the $(a,b)$-block of $Z$ (the size of the block $Z^{(a,b)}$ will be clear from the context) and by $Z^{\top}$  the transpose of $Z$. Furthermore,  $\|Z\|$ is the operator norm, $\|Z\|_F$ is its Frobenius norm, and $\|Z\|_{\infty} : = \max_{i}\sum_{j}|Z_{ij}|$ is the matrix infinity norm. We define $\lambda_l(Z)$ to be the $l$-th smallest eigenvalue of $Z$. Given two matrices $Z,Y\in\RR^{m\times n }$, we let $\lag Z,Y\rag$ be the canonical inner product of $Z$ and $Y$, i.e., $\lag Z,Y\rag = \Tr(Z^{\top} Y)$. For a vector $z$, we define $\diag(z)$ to be a diagonal matrix whose diagonal entries consist of $z$. For a scalar $z$, we let $\lfloor z\rfloor$ be the largest integer not exceeding $z$. 

The vector $1_m$ represents the $m\times 1$ vector with all entries equal to 1,  $J_{m\times n} = 1_m1_n^{\top}$ is the  $m\times n$  ``all-1" matrix, and $I_n$ is the $n\times n$ identity matrix. We say $Z\succeq Y$ if $Z- Y\succeq 0$, i.e., $Z-Y$ is positive semidefinite, and $Z\geq Y$ if every entry of $Z-Y$ is nonnegative, i.e., $Z_{ij} - Y_{ij}\geq 0$. {We denote $f(n) \gtrsim g(n)$ for two positive sequences $\{f(n)\}_{n\in\mathbb{Z}^+}$ and $\{g(n)\}_{n\in\mathbb{Z}^+}$ if there exists an absolute constant $c_0>0$ such that $f(n)\geq c_0 g(n)$ for all $n \in\mathbb{Z}^+$}. Finally,  ${\cal S}_n$ is the set of $n\times n$ symmetric matrices, ${\cal S}_n^+$ is the set of $n\times n$ symmetric positive semidefinite matrices, and $\RR^{n\times n}_+$ denotes the set of all $n\times n$ nonnegative matrices.

\section{Spectral clustering and graph cuts}\label{s:spectral}

Spectral clustering can be understood from the perspective of graph cuts. Here, we give a short introduction to spectral clustering and spectral graph theory.  The interested readers may refer to the excellent review~\cite{Von07} for more details about spectral clustering and its variations.
Spectral clustering is based on a similarity graph constructed from a given set of data points $\{x_i\}_{i=1}^N$ whose vertices correspond to data and edges are assigned a weight which encodes the similarity between any pair of data points,  i.e., if $x_i$ and $x_j$ are close with respect to some similarity measure, then a larger weight is assigned to the edge $(i,j)$. Once the graph is obtained, one can compute the graph Laplacian, either normalized or unnormalized, and get the eigenvectors of the graph Laplacian to embed the data, followed by k-means or other rounding procedures  to obtain the final clustering outcome. 
We refer the reader to~\cite{BelkinN02,BelkinN03,Von07,HagenK92} 
for spectral clustering based on the unnormalized graph Laplacian and~\cite{NgJW02,DhillonGK04,ShiM00} for the normalized version.

\subsection{A short tour of spectral clustering}
We introduce the basics in spectral graph theory such as several versions of graph Laplacian which will be used later, and also the standard algorithms of spectral clustering.
The first step of spectral clustering is to design the similarity matrix based on the data points.
A well-known way to construct such a graph is to employ a non-negative, even kernel function $\Phi_{\sigma}(x, y)$ where $\sigma$ determines the size of the neighborhood. Sometimes we call $\sigma$ the bandwidth. 
A common choice for kernel function is of the following form,
\[
\Phi_{\sigma}(x,y) = \Phi\left(\frac{\|x-y\|}{\sigma}\right),
\]
where $\Phi(t)$ is a decreasing function of $t$. Typical examples for $\Phi$ include:
\begin{itemize}
\item $\Phi(t) = 1_{\{|t| \leq 1\}}$ which connects points if their pairwise distance is smaller than $\sigma.$ This graph is known as $\sigma$-neighborhood graph and is more likely to be disconnected if some points are isolated.
\item $\Phi(t) = e^{-\frac{t^2}{2}}$, the heat kernel. The resulting similarity matrix is a weighted complete graph. This kernel is also related to the diffusion process on the graph. The heat kernel is probably the most widely used kernel in connection with spectral clustering and graph cuts.
\end{itemize}

Suppose we have $k$ planted clusters and the $a$-th cluster $\Gamma_a$ has $n_a$ data points, i.e., $|\Gamma_a| = n_a$. The data may not necessarily be linearly separable. 
Given a certain kernel $\Phi(\cdot)$, we denote the similarity matrix between cluster $\Gamma_a$ and cluster $\Gamma_b$ via
\begin{equation}\label{def:W}
W^{(a,b)}_{ij} := \Phi\left(\frac{\|x_{a,i}-x_{b,j}\|}{\sigma}\right), \quad W^{(a,b)} \in \RR^{n_a\times n_b},
\end{equation}
where $x_{a,i}$ is the $i$-th point in $\Gamma_a$.
A particularly popular choice is  the heat kernel, in which case  $W$ takes the form
\begin{equation}
\label{heatkernel}
W^{(a,b)}_{ij} := e^{-\frac{\|x_{a,i} - x_{b,j}\|^2}{2\sigma^2}}.
\end{equation}
 The total number of data points is $N = \sum_{a=1}^k n_a.$
Without loss of generality we assume that the vertices are ordered according to the clusters they are associated with, i.e., lexicographical order for $\{x_{a,i}\}_{1\leq i\leq n_a, 1\leq a\leq k}$. 
Hence, by combining all pairs of clusters, the full weight matrix becomes
\[
W := \begin{bmatrix}
W^{(1,1)} & W^{(1,2)} & \cdots & W^{(1,k)} \\
W^{(2,1)} & W^{(2,2)} & \cdots & W^{(2,k)} \\    
\vdots & \vdots & \ddots & \vdots \\
W^{(k,1)} & W^{(k,2)} & \cdots & W^{(k,k)}
\end{bmatrix}\in\RR^{N\times N}.
\]
From now on, we let $w_{ij}$ be the $(i,j)$ entry of the weight matrix $W$ and use $W_{ij}^{(a,b)}$ specifically for the $(i,j)$ entry in the $(a,b)$ block of $W$.
Given the full weight matrix $W$, the degree of vertex $i$ is $d_i = \sum_{j=1}^N w_{ij}$ and
the associated degree matrix is
\[
D := \diag(W1_N)
\]
where $D$ is an $N\times N$ diagonal matrix with $\{d_i\}_{i=1}^N$ on the diagonal.  
We  define the unnormalized graph Laplacian for the weight matrix $W$ as
\begin{equation}\label{def:L}
L := D - W
\end{equation}
and the symmetric normalized graph Laplacian as
\begin{equation}\label{def:Lsym}
L_{\sym} : = I_N - D^{-\frac{1}{2}}WD^{-\frac{1}{2}} = D^{-\frac{1}{2}}LD^{-\frac{1}{2}}.
\end{equation}
It is a simple exercise to verify that the quadratic form of $L$ satisfies
\begin{equation}\label{eq:qual}
v^{\top} L v = \sum_{1\leq i<j\leq N} w_{ij}(v_i - v_j)^2
\end{equation}
where $v_i$ is the $i$-th entry of $v$.

We also define
\begin{equation}\label{def:P}
P : = D^{-1}W, \qquad L_{\rw} := I_N - P
\end{equation}
where the row sums of $P$ are all equal to 1 and thus $P$ defines a Markov transition matrix on the graph; $L_{\rw}$ is called random walk normalized Laplacian. Here $P_{ij} = \frac{w_{ij}}{d_{i}}$, the $(i,j)$ entry of $P$, denotes the probability of a random walk starting from vertex $i$ and moving to the vertex $j$ in the next step.

\vskip0.5cm

For later use, we define a set of matrices with subscript ``$\iso$" which capture the within-cluster information. 
We denote the ``isolated'' weight matrix by $W_{\iso}$ that excludes the edges between different clusters, i.e., 
\[
 W_{\iso} := 
 \begin{bmatrix}
W^{(1,1)} & 0 & \cdots & 0 \\
0 & W^{(2,2)} & \cdots & 0 \\
\vdots & \vdots & \ddots & \vdots \\
0 & 0 & \cdots & W^{(k,k)}
 \end{bmatrix},
\] 
and the corresponding degree matrix 
\[
D_{\iso} := \diag(W_{\iso} 1_N).
\]
The unnormalized graph Laplacian associated with  $W_{\iso}$ is
\begin{equation}\label{def:Liso}
L_{\iso} := D_{\iso} - W_{\iso}.
\end{equation}
We also define the random walk normalized Laplacian and Markov transition matrix for $W_{\iso}$  as
\begin{equation}\label{def:Piso}
P_{\iso} := D_{\iso}^{-1}W_{\iso}, \quad L_{\rw, \iso} := I_N - D_{\iso}^{-1}W_{\iso}
\end{equation}
where $P_{\iso}$ and $L_{\rw, \iso}$ are block-diagonal matrices.

\vskip0.5cm

The following four matrices with subscript $``\delta"$ are used to measure the inter-cluster connectivity, namely,
\begin{align}\label{def:delta}
\begin{split}
W_{\delta} & := W - W_{\iso},  \\
D_{\delta} & : = D - D_{\iso} = \diag((W - W_{\iso})1_N), \\
 L_{\delta} & := L - L_{\iso} = D_{\delta} - W_{\delta}, \\
P_{\delta} & : = D^{-1} W_{\delta} = P - D^{-1}W_{\iso}.
\end{split}
\end{align}
From the definition above, we can see that $W_{\delta}$ and $P_{\delta}$ are the off-diagonal blocks of $W$ and $P$ respectively, and $D_{\delta}$ is a diagonal matrix whose diagonal entries equal the row sum of $W_{\delta}$. These three matrices contain information about the inter-cluster connectivity. 
\vskip0.25cm

We would like to point out that matrices with subscripts ``$\iso$" or ``$\delta$" are~\emph{depending} on the underlying partition $\{\Gamma_l\}_{l=1}^k$. 
So far, we also have seen graph Laplacian of three different weight matrices, i.e., $L$, $L_{\iso}$, and $L_{\delta}$, which are all positive semidefinite matrices because they are diagonally dominant and also can be seen from~\eqref{eq:qual}, and moreover the constant vector $1_N$ is in the null space. 
As long as a graph is connected, its corresponding graph Laplacian has a positive second smallest eigenvalue, cf.~\cite{Chung97}. Moreover, the dimension of the nullspace of the graph Laplacian equals the number of connected components. Therefore, if all edge weights satisfy $w_{ij} > 0$ (which is possible if e.g.\ the Gaussian kernel is used), we have
\begin{align*}
& \lambda_2(L)  > 0, \quad \lambda_k(L_{\iso}) = 0, \quad \lambda_{k+1}(L_{\iso}) = \min_{1\leq a\leq k}\lambda_2(L^{(a,a)}_{\iso}) > 0, 
\end{align*}
because $L_{\iso}$ has $k$ diagonal blocks and each one corresponds to a connected subgraph.
Moreover, the nullspace of $L_{\iso}$ is spanned by $k$ indicator vectors in $\RR^N$, i.e., the columns of $U_{\iso}$,
\[
U_{\iso} := 
\begin{bmatrix}
\frac{1}{\sqrt{n_1}}1_{n_1} & 0 & \cdots &  0 \\
0 &  \frac{1}{\sqrt{n_2}}1_{n_2} & \cdots & 0 \\
\vdots & \vdots & \ddots & \vdots\\
0 & 0 & \cdots & \frac{1}{\sqrt{n_k}}1_{n_k}
\end{bmatrix}\in\RR^{N\times k}, \quad U_{\iso}^{\top}U_{\iso} = I_k.
\]

\vskip0.25cm

Assume for the moment that the original data set has $k$ clusters {\em and} that the graph constructed  from the data has $k$ connected components. In this case $L$ will be a true block-diagonal matrix (after necessary permutations), it will have an eigenvalue $0$ with multiplicity $k$ and the corresponding eigenvectors will be indicator vectors that represent cluster membership of the data~\cite{Von07}. 

However, since initially we are {\em not} given the graph, but
the data, we would have to assume that we know the cluster membership already a priori to be able to chose the ideal kernel that would then yield a graph with exactly
$k$ connected components. Since this is a futile assumption, $L$ will~\emph{never} be an exact block-diagonal matrix, which in turn implies that the relevant eigenvectors will not be indicator vectors that represent cluster membership. Hence, standard spectral clustering essentially {\em always} necessitates a second step. This step may consist in rounding the eigenvectors to indicator vectors or, more commonly, in applying a method like k-means to the embedded data set. 

We summarize the two most frequently used versions of spectral clustering algorithms in Algorithm~\ref{alg:spectral} and~\ref{alg:spectral2} which use unnormalized and normalized graph Laplacian respectively.


\begin{algorithm}[h!]
\caption{Unnormalized spectral clustering}\label{alg:spectral}
\begin{algorithmic}[1]
\State {\bf Input:} Given the number of clusters $k$ and a dataset $\{x_i\}_{i=1}^N$, construct the similarity matrix $W$ from $\{x_i\}_{i=1}^N.$
\State Compute the unnormalized graph Laplacian $L = D- W$.
\State Compute the eigenvectors $\{u_l\}_{l=1}^k$ of $L$ w.r.t. the smallest $k$ eigenvalues.
\State Let $U = [u_1, u_2, \cdots, u_k]\in\RR^{N\times k}$. Perform k-means clustering on the rows of $U$ by using Lloyd's algorithm. 
\State Obtain the partition based on the outcome of k-means.
\end{algorithmic}
\end{algorithm}

\begin{algorithm}[h!]
\caption{Normalized spectral clustering}\label{alg:spectral2}
\begin{algorithmic}[1]
\State {\bf Input:} Given the number of clusters $k$ and a dataset $\{x_i\}_{i=1}^N$, construct the similarity matrix $W$ from $\{x_i\}_{i=1}^N.$
\State Compute the normalized graph Laplacian $L_{\sym} = I_N- D^{-\frac{1}{2}}WD^{-\frac{1}{2}}$.
\State Compute the eigenvectors $\{u_l\}_{l=1}^k$ of $L_{\sym}$ w.r.t. the smallest $k$ eigenvalues.
\State Let $U = [u_1, u_2, \cdots, u_k]\in\RR^{N\times k}$. Perform k-means clustering on the rows of $D^{-\frac{1}{2}}U$ by using Lloyd's algorithm. 
\State Obtain the partition based on the outcome of k-means.
\end{algorithmic}
\end{algorithm}
In the Step 4 of Algorithm~\ref{alg:spectral2}, one uses $D^{-\frac{1}{2}}U$ instead of $U$, which differs from Algorithm~\ref{alg:spectral}. This is to ensure that $D^{-\frac{1}{2}}U$ consists of $k$ indicator vectors when  the graph has $k$ connected components and $L_{\sym}$ is a block-diagonal matrix.

Despite the tremendous success of spectral clustering in applications, its theoretical understanding is still far from satisfactory. 
Some theoretical justification for spectral clustering has been built on basic perturbation theory, by considering $L$ as the sum of the block-diagonal matrix $L_{\iso}$ and the perturbation term $L_{\delta}$, cf~\cite{NgJW02}. 
One can then invoke the Davis-Kahan theorem~\cite{DK70SIAM}, which bounds the difference between eigenspaces of symmetric matrices under perturbations. Then an error bound is obtained between $U$ and $U_{\iso}$ in terms of $\|L_{\delta}\|$ (or $\|L_{\delta}\|_F$) where $U$ and $U_{\iso}$ are the eigenvectors w.r.t. the smallest $k$ eigenvalues of $L$ and $L_{\iso}$ respectively.
However, the statements obtained with this line of reasoning have been more of a qualitative nature since the error bound between $U$ and $U_{\iso}$ does not immediately reflect the quality of clustering, partly due to the difficulty of analyzing the performance of k-means applied to $U.$
Thus, the perturbation arguments have not yet provided explicit conditions under which spectral clustering
would succeed or fail, not to speak of bounds that are anywhere near optimality, or even theorems that would just prove that spectral clustering does actually outperform 
k-means in simple, often-used examples when promoting spectral clustering.

\subsection{Understanding spectral clustering via graph cuts}

Graph partitioning provides a powerful tool of understanding and deriving
spectral clustering; it also becomes the foundation of this work.
Given a graph, one wants to divide it into several pieces such that the inter-cluster connectivity is small and each cluster is well connected within itself. However, only based on this criterion, this does usually not give satisfactory results since one single vertex may likely be treated as one cluster. As a consequence, it is usually preferable to have clusters whose sizes are relatively large enough, i.e., clusters of very small size should be avoided. To realize that, one uses ratio cuts~\cite{Von07,HagenK92} and normalized cuts~\cite{DhillonGK04,ShiM00} to ensure the balancedness of cluster sizes. Hence, we now discuss ratio cuts and normalized cuts, and their corresponding spectral relaxation. We also want to point out that the discussion about graph cuts applies to more~\emph{general} settings and spectral clustering is viewed  to some extent as a special case of graph cuts.

\subsubsection*{Ratio cuts and their spectral relaxation}
Given a disjoint partition $\{\Gamma_a\}_{a=1}^k$ such that $\sqcup_{a=1}^k\Gamma_a = [N] : =\{1,\cdots, N\}$, we define ratio cuts (RatioCut) as
\begin{equation}\label{def:ratiocut}
\text{RatioCut}(\{\Gamma_a\}_{a=1}^k) : = \sum_{a=1}^k \frac{\cut(\Gamma_a, \Gamma_a^c)}{|\Gamma_a|}.
\end{equation}
Here, the cut is defined as the weight sum of edges whose two ends are in different subsets, 
\begin{equation}\label{eq:cut}
\cut(\Gamma, \Gamma^c) : = \sum_{i\in\Gamma, j\in\Gamma^c} w_{ij}
\end{equation}
where $\Gamma$ is a subset of vertices and $\Gamma^c$ is its complement. In fact,~\eqref{eq:cut} can be neatly written in terms of the graph Laplacian $L.$
By definition of $L$ in~\eqref{def:L}, 
\begin{equation}\label{eq:rcut-L}
\cut(\Gamma_a, \Gamma_a^c): = \sum_{i\in\Gamma_a, j\in\Gamma_a^c} w_{ij} =  \lag L^{(a,a)}, 1_{|\Gamma_a|}1_{|\Gamma_a|}^{\top}\rag,
\end{equation}
which follows from 
\begin{align*}
\lag L^{(a,a)}, 1_{|\Gamma_a|}1_{|\Gamma_a|}^{\top}\rag 
& = \lag D^{(a,a)} - W^{(a,a)}, 1_{|\Gamma_a|}1_{|\Gamma_a|}^{\top}\rag =
  \sum_{l=1}^k 1_{|\Gamma_a|}^{\top}W^{(a,l)}1_{|\Gamma_l|} - 1_{|\Gamma_a|}^{\top}W^{(a,a)}1_{|\Gamma_a|}\\ 
  & = \sum_{l\neq a}1_{|\Gamma_a|}^{\top}W^{(a,l)}1_{|\Gamma_l|} = \sum_{i\in\Gamma_a, j\in\Gamma_a^c} w_{ij}.
\end{align*}
Therefore, RatioCut is in fact the inner product between the graph Laplacian $L$ and a block-diagonal matrix $X_{\rcut},$
\begin{align*}
\RatioCut(\{\Gamma_a\}_{a=1}^k)& = \sum_{a=1}^k \frac{1}{|\Gamma_a|}\lag L^{(a,a)}, 1_{|\Gamma_a|}1_{|\Gamma_a|}^{\top}\rag = \lag L, X_{\rcut}\rag,
\end{align*}
where 
\begin{equation}\label{def:Xr}
X_{\rcut} := \sum_{a=1}^k \frac{1}{|\Gamma_a|} 1_{\Gamma_a}1_{\Gamma_a}^{\top} = \text{blockdiag}\left(\frac{1}{|\Gamma_1|}1_{|\Gamma_1|}1_{|\Gamma_1|}^{\top}, \cdots, \frac{1}{|\Gamma_k|}1_{|\Gamma_k|}1_{|\Gamma_k|}^{\top}\right)\in\RR^{N\times N},
\end{equation}
and $1_{\Gamma_a}(\cdot)$ is an indicator vector which maps a vertex to a vector in $\RR^N$ via
\[
1_{\Gamma_a}(l) = 
\begin{cases}
1, & l \in\Gamma_a, \\
0, & l \notin \Gamma_a.
\end{cases}
\]
Obviously, by putting the cardinality of $\Gamma_a$ in the denominator of~\eqref{def:ratiocut}, one can avoid small clusters and thus RatioCut is a more favorable criterion to conduct graph partition. However, minimizing RatioCut is an NP-hard problem,~{see~\cite{WW93} for a detailed discussion}. Here we discuss one very popular and useful relaxation of RatioCut which relates the RatioCut problem to an eigenvalue problem. 

From our previous discussion, we realize that to minimize RatioCut over all possible partitions $\{\Gamma_a\}_{a=1}^k$ of $[N]$, it suffices to minimize $\lag L, Z\rag$ for all matrices $Z$ as~\eqref{def:Xr} which is essentially a positive semidefinite projection matrix. 
Spectral clustering is a relaxation by these two properties,
\[
X_{\rcut} = U U^{\top}, \quad U^{\top}U = I_k, \quad U\in\RR^{N\times k}.
\]
Therefore, one instead considers a simple matrix eigenvalue/eigenvector problem,
\begin{equation}\label{prog:spectral}
\min_{U\in\RR^{N\times k}} \lag L, UU^{\top}\rag \quad \text{s.t.} \quad U^{\top}U = I_k,
\end{equation}
whose global minimizer is easily found via computing the eigenvectors w.r.t.\ the $k$ smallest eigenvalues of the graph Laplacian $L$. Therefore, the Laplacian eigenmaps step of Algorithm~\ref{alg:spectral} has a natural explanation via  the spectral relaxation of RatioCut.

\subsubsection*{Normalized cuts and their spectral relaxation}
The normalized cut (NCut) differs from RatioCut by using the volume to quantify the size of cluster $\Gamma_l$ instead of the cardinality $|\Gamma_l|$. RatioCut and NCut behave similarly if each node of the graph has very similar degree, i.e., the graph is close to a regular graph. The NCut of a given partition $\{\Gamma_l\}_{l=1}^k$ is defined as
\begin{equation}\label{def:ncut}
\NCut(\{ \Gamma_a\}_{a=1}^k) : = \sum_{a=1}^k \frac{\cut(\Gamma_a, \Gamma_a^c)}{\Vol(\Gamma_a)}
\end{equation}
where the volume of $\Gamma_a$ is defined as the sum of degrees of vertices in the subset $\Gamma_a$, 
\begin{equation}\label{def:vol}
\Vol(\Gamma_a) : = \sum_{i\in\Gamma_a} d_i = \sum_{i\in\Gamma_a} \sum_{j=1}^N w_{ij}.
\end{equation}
Just like the link between RatioCut and the graph Laplacian, we can relate~\eqref{def:ncut} to the normalized Laplacian~\eqref{def:Lsym}. By using~\eqref{eq:cut},~\eqref{eq:rcut-L}, and $\Vol(\Gamma_a) = \lag D, 1_{\Gamma_a}1_{\Gamma_a}^{\top}\rag = 1_{\Gamma_a}^{\top}D1_{\Gamma_a}$, and then
\begin{align*}
\NCut(\{\Gamma_a\}_{a=1}^k) & = \sum_{a=1}^k \frac{1_{\Gamma_a}^{\top}L 1_{\Gamma_a}}{ 1_{\Gamma_a}^{\top}D1_{\Gamma_a} } = \sum_{a=1}^k \left\lag L, \frac{ 1_{\Gamma_a}1_{\Gamma_a}^{\top}}{ 1_{\Gamma_a}^{\top}D1_{\Gamma_a} }\right\rag  \\
& =  \sum_{a=1}^k \left\lag D^{-\frac{1}{2}}LD^{-\frac{1}{2}}, \frac{ D^{\frac{1}{2}} 1_{\Gamma_a}1_{\Gamma_a}^{\top}D^{\frac{1}{2}}}{ 1_{\Gamma_a}^{\top}D1_{\Gamma_a} }\right\rag \\
& = \lag L_{\sym}, X_{\ncut}\rag.
\end{align*}
Here $L_{\sym}$ is the normalized Laplacian in~\eqref{def:Lsym} and 
\begin{equation}\label{def:Xncut}
X_{\ncut} := \sum_{a=1}^k \frac{1}{1_{\Gamma_a}^{\top}D1_{\Gamma_a}}	D^{\frac{1}{2}}1_{\Gamma_a}1_{\Gamma_a}^{\top} D^{\frac{1}{2}}.
\end{equation}
If we replace $D$ with an identify matrix multiplied by a scalar, then $X_{\ncut}$ is equal to $X_{\rcut}$.

Similar to RatioCut, minimizing RatioCut is  an NP-hard problem and one can instead use the following convenient spectral relaxation, 
\begin{equation}\label{prog:spectral-ncut}
\min_{U\in\RR^{N\times k}} \lag L_{\sym}, UU^{\top}\rag, \quad \text{s.t.} \quad U^{\top}U = I_k,
\end{equation}
because $X_{\ncut}$ in~\eqref{def:Xncut} is also a positive semidefinite orthogonal projection matrix and thus can be factorized into $X_{\ncut} =  UU^{\top}$ with $U^{\top}U = I_k.$

\vskip0.25cm
Although it is very convenient to compute the global minimizer $U_{\rcut}$ and $U_{\ncut}$ in~\eqref{prog:spectral} and~\eqref{prog:spectral-ncut} respectively, as mentioned earlier they unfortunately do not usually return the exact cluster membership,  unless the graph has exactly $k$ connected components. Suppose there are $k$ connected components, then it is straightforward to verify
\begin{equation}\label{eq:Ucut}
U_{\rcut} = U_{\iso}, \quad U_{\ncut} = D^{\frac{1}{2}}U_{\iso} (U_{\iso}^{\top}DU_{\iso})^{-\frac{1}{2}}
\end{equation}
are the global minimizer of~\eqref{prog:spectral} and~\eqref{prog:spectral-ncut} respectively up to an orthogonal transformation. Then all columns of $U_{\rcut}$ and $D^{-\frac{1}{2}}U_{\ncut}$ are indicator vectors and they imply the connected components automatically. However, in general, the minimizer $U_{\rcut}$ and $D^{-\frac{1}{2}}U_{\ncut}$ are not in the form of~\eqref{eq:Ucut} if the graph is connected. Thus, k-means, as a rounding procedure, is applied to $U_{\rcut}$ and $D^{-\frac{1}{2}}U_{\ncut}$ to estimate the underlying clusters. Those observations  lead to Algorithm~\ref{alg:spectral} and~\ref{alg:spectral2}, respectively.

\section{SDP relaxation of graph cuts and main results} \label{s:sdp}
In this section, we propose semidefinite relaxation of spectral clustering for both ratio cuts and normalized cuts, and present the~\emph{spectral proximity condition}, which certifies the global optimality of a graph cut under either ratio cuts or normalized cuts. The spectral proximity condition is purely deterministic; it depends only on the within-cluster connectivity (algebraic connectivity of a graph) and inter-cluster connectivity. We then apply our results to spectral clustering, as a special case of graph cuts, and thus obtain the desired theoretical guarantees for spectral clustering. 

\subsection{Graph cuts via semidefinite programming} 
We add one more constraint to both programs,~\eqref{prog:spectral} and~\eqref{prog:spectral-ncut}, and the so obtained modification results in the SDP relaxation of graph cuts. 

\emph{SDP relaxation of RatioCut:}
Note that minimizing RatioCut is equivalent to minimizing $\lag L, Z\rag$ over all matrices $Z$ in the form of~\eqref{def:Xr}, which is a semidefinite  block-diagonal orthogonal projection matrix up to a row/column permutation. Since this combinatorial optimization problem is NP-hard in nature, the idea of SDP relaxation in this context is to replace the feasible matrices in the form of~\eqref{def:Xr} by a convex set which contains all such matrices as a proper subset. We first try to find out what properties matrices $Z$ in the form of~\eqref{def:Xr} have for any given partition: 
\begin{enumerate}
\item $Z$ is positive semidefinite, $Z\succeq 0$; 
\item $Z$ is nonnegative, $Z\geq 0$ entrywisely; 
\item the constant vector is an eigenvector of $Z$ which means $Z1_N = 1_N$; 
\item the trace of $Z$ equals $k$, i.e., $\Tr(Z) = k$. 
\end{enumerate}
It is obvious that the first two conditions are convex, and both conditions 3) and 4) are linear. Therefore, instead of minimizing $\lag L, Z\rag$ over all $Z$ as in~\eqref{def:Xr}, we relax the originally combinatorial optimization by using the following convex relaxation:
\begin{equation}\label{prog:primal}
\min_{Z\in {\cal S}_N} \lag L, Z\rag \quad \text{s.t.} \quad Z\succeq 0, \quad Z\geq 0, \quad \Tr(Z) = k, \quad Z1_N =1_N.
\end{equation}
In fact, if $U_{\rcut}\in\RR^{N\times k}$ is the solution to the spectral relaxation~\eqref{prog:spectral}, then $\widehat{Z} = U_{\rcut}U_{\rcut}^{\top}$ satisfies all the conditions in~\eqref{prog:primal} except for the nonnegativity condition. 

\emph{SDP relaxation of NCut:}
The partition matrix in~\eqref{def:Xncut} shares three properties with those in~\eqref{def:Xr}:
\[
Z\succeq 0, \quad Z\geq 0, \quad \Tr(Z) = k.
\]
The only difference is the appearance of the term $D^{\frac{1}{2}}1_N$ instead of $1_N$, 
\[
Z D^{\frac{1}{2}}1_N = \sum_{a=1}^k \frac{1}{1_{\Gamma_a}^{\top}D1_{\Gamma_a}} D^{\frac{1}{2}}1_{\Gamma_a}1_{\Gamma_a}^{\top} D1_N = \sum_{a=1}^k \frac{1_{\Gamma_a}^{\top} D1_N}{1_{\Gamma_a}^{\top}D1_{\Gamma_a}}D^{\frac{1}{2}} 1_{\Gamma_a} =D^{\frac{1}{2}} 1_N.
\]
As a result, the corresponding convex relaxation of normalized cuts is 
\begin{equation}\label{prog:primal-ncut}
\min_{Z\in {\cal S}_N} \lag L_{\sym}, Z\rag, \quad \text{s.t.}  \quad Z\succeq 0, \quad Z\geq 0, \quad \Tr(Z) = k,\quad ZD^{\frac{1}{2}}1_N = D^{\frac{1}{2}}1_N.
\end{equation}
Similarly, we can also see that the main difference between~\eqref{prog:primal-ncut} and~\eqref{prog:spectral-ncut} is the nonnegativity condition. 
We summarize our approach in Algorithm~\ref{SDP}. 
\begin{algorithm}[h!]
\caption{SDP relaxation of spectral clustering:~{\bf RatioCut-SDP} and~{\bf NCut-SDP}}\label{SDP}
\begin{algorithmic}[1]
\State {\bf Input:} Given a dataset $\{x_i\}_{i=1}^N$ and the number of clusters $k$, construct the weight matrix $W$ from $\{x_i\}_{i=1}^N.$
\State Compute the unnormalized graph Laplacian $L = D- W$ or its normalized graph Laplacian $L_{\sym} = I_N - D^{-\frac{1}{2}}WD^{-\frac{1}{2}}$.
\State Solve the following semidefinite programs:

\noindent a) {\bf RatioCut-SDP:}
\[
\widehat{Z} :=\text{argmin}_{Z\in {\cal S}_N} \lag L, Z\rag \quad \text{s.t.} \quad Z\succeq 0, \quad Z\geq 0, \quad  \Tr(Z) = k, \quad Z1_N =1_N.
\]
 b) {\bf NCut-SDP:}
\begin{align*}
\widehat{Z} :=\text{argmin}_{Z\in {\cal S}_N} \lag L_{\sym}, Z\rag \quad \text{s.t.}\quad & Z\succeq 0, \quad Z\geq 0, \quad \Tr(Z) = k, \quad ZD^{\frac{1}{2}}1_N =D^{\frac{1}{2}}1_N.
\end{align*}

\State Obtain the cluster partitioning based on $\widehat{Z}$.
\end{algorithmic}
\end{algorithm}

From a numerical viewpoint Algorithm~\ref{SDP}  does not lend itself easily to an efficient implementation for large scale data clustering. The question of how to solve~\eqref{prog:primal} and~\eqref{prog:primal-ncut} in a computationally efficient manner is a topic for future research. { However, one can easily run spectral clustering, and then use Theorem~\ref{thm:main} and Theorem~\ref{thm:main2}  to check if the resulting partition is optimal.} 
In this paper our focus is on getting theoretical insights into the performance of graph cuts and spectral clustering. 

Define the ground truth partition matrix $X$ as
\begin{equation}\label{def:X}
X : = 
\begin{cases}
X_{\rcut}, & \text{for RatioCut in~\eqref{def:Xr}}, \\
X_{\ncut}, & \text{for NCut in~\eqref{def:Xncut}}. 
\end{cases}
\end{equation}
Thus, the key questions we need to address are:
\begin{quote}
{\em Under which conditions does  Algorithm~\ref{SDP}  exactly recover the underlying partition $X$ in~\eqref{def:X}? 
Are these conditions approximately optimal?}
\end{quote}
As discussed above, the main difference of RatioCut-SDP and NCut-SDP from the spectral relaxation~\eqref{prog:spectral} and~\eqref{prog:spectral-ncut} comes from the nonnegativity constraint. We would like to see how this constraint in the SDP relaxation contributes to the final performance.

In fact, this relaxation is not entirely new. Xing and Jordan~\cite{XingJ03} proposed a very similar SDP relaxation for normalized k-cut by considering the nonnegativity constraint and applied the SDP relaxation to several datasets. 
Another closely related type of convex relaxation has originally been proposed by Peng and Wei for k-means-type clustering~\cite{PengW07}. There, instead of $L$ (or $L_{\sym}$), one has a matrix containing the squared pairwise Euclidean distances between data points or a similarity matrix. In recent years, theoretical guarantees of the Peng-Wei relaxation have been derived for 
k-means~\cite{IguchiMPV17,MixonVW17,LLLSW17,TepperSC17}. Furthermore, the  Peng-Wei relaxation has  been extended to community detection problems~\cite{YanSC17,AminiL18}.  Note that the presence of the graph Laplacian instead of an Euclidean distance matrix does not only substantially (and positively) affect the clustering performance, but it also significantly changes the proof strategy (and resulting conditions) in order to establish exact clustering guarantees.

\subsection{Main theorems}

Simple perturbation theory directly applied to the graph Laplacian so far has not led to competitive performance bounds. It either requires the futile assumption of
a graph with  properly disconnected components, or the results are merely of handwaving nature. While our analysis will also invoke perturation theory at some stage,
a crucial difference is that we get competitive and rigorous quantitative performance guarantees without imposing the unrealistic assumption of a disconnected graph.

In the following theorem we give a natural condition, called~\emph{spectral proximity condition}, under which Algorithm~\ref{SDP}  yields the correct clustering of the data. Both conditions in~\eqref{mainbound} and~\eqref{mainbound2} can be interpreted as a kernel-space analog of the Euclidean-space proximity condition appearing in the theoretical analysis of k-means~\cite{KumarK10,AwasthiS12,LLLSW17}.

\begin{theorem}[{\bf Spectral proximity condition for RatioCut-SDP}]\label{thm:main}
The semidefinite relaxation~\eqref{prog:primal} gives $X_{\rcut}$ in~\eqref{def:Xr} as the unique global minimizer if the following spectral proximity condition holds
\begin{equation}
\|D_{\delta}\| < \frac{\lambda_{k+1}(L_{\iso})}{4}, \label{mainbound}
\end{equation}
where $\lambda_{k+1}(L_{\iso})$ is the $(k+1)$-th smallest eigenvalue of the graph Laplacian $L_{\iso}$ defined in~\eqref{def:Liso}. Here $\lambda_{k+1}(L_{\iso})$ satisfies 
\[
\lambda_{k+1}(L_{\iso}) = \min_{1\leq a\leq k} \lambda_2(L^{(a,a)}_{\iso})
\]
where $\lambda_2(L^{(a,a)}_{\iso})$ is the second smallest eigenvalue of graph Laplacian w.r.t. the $a$-th cluster.
\end{theorem}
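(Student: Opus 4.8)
The plan is to establish exact recovery via the standard dual-certificate (KKT) approach for semidefinite programs. We want to show that $X_{\rcut}$ defined in~\eqref{def:Xr} is the unique minimizer of~\eqref{prog:primal}. First I would write down the Lagrangian dual of the primal program: the constraints are $Z \succeq 0$ (multiplier a PSD matrix $S$), $Z \geq 0$ entrywise (multiplier a nonnegative matrix $B$), $\Tr(Z)=k$ (scalar multiplier $z_0$, or absorbed into a scaled identity $z_0 I_N$), and $Z 1_N = 1_N$ (vector multiplier, giving a symmetrized term like $1_N \boldsymbol{\lambda}^\top + \boldsymbol{\lambda}1_N^\top$). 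Complementary slackness and stationarity then demand the existence of a dual certificate $(S, B, z_0, \boldsymbol{\lambda})$ such that
\[
L - B - z_0 I_N - \tfrac{1}{2}(1_N\boldsymbol{\lambda}^\top + \boldsymbol{\lambda}1_N^\top) = S \succeq 0,
\]
with $S\, X_{\rcut} = 0$ (so $\Ran(X_{\rcut}) \subseteq \Ker(S)$), and $B$ supported off the block-diagonal support of $X_{\rcut}$ (since $X_{\rcut}$ is strictly positive on its diagonal blocks and zero off them). Uniqueness will follow if in addition we can guarantee $\Ker(S) = \Ran(X_{\rcut}) = \Ran(U_{\iso})$ exactly, i.e.\ $S$ has rank exactly $N-k$ with nullspace spanned by the cluster indicator vectors.

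The heart of the argument is constructing such a certificate. The natural guess, motivated by the disconnected-graph intuition, is to build $S$ out of $L_{\iso}$: since $L_{\iso} U_{\iso} = 0$ and $\lambda_{k+1}(L_{\iso}) > 0$, the matrix $L_{\iso}$ already has the desired nullspace. So I would try $B$ to essentially cancel the off-diagonal part of $L$ (which is $L_\delta = D_\delta - W_\delta$), choose $\boldsymbol{\lambda}$ and $z_0$ to absorb $D_\delta$ and the diagonal shift, and hope the remainder is $L_{\iso}$ plus a small perturbation that stays PSD with the right kernel. Concretely, $W_\delta \geq 0$ so setting $B$ equal to $W_\delta$ on the off-diagonal blocks handles the $-W_\delta$ term; the leftover is then $L_{\iso} + D_\delta$ minus the multiplier corrections, and one checks $D_\delta 1_N = W_\delta 1_N$ so the row-sum constraint multiplier can soak up $D_\delta$ in a way compatible with $X_{\rcut} 1_N = 1_N$. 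After these choices the candidate $S$ takes the form $L_{\iso} + (\text{correction involving } D_\delta \text{ and } \boldsymbol{\lambda})$, and we need $S \succeq 0$ with $S U_{\iso} = 0$.

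The main obstacle — and where the hypothesis~\eqref{mainbound} enters — is verifying $S \succeq 0$ together with the exact rank/kernel condition. On the complement of $\Ran(U_{\iso})$ we have $v^\top L_{\iso} v \geq \lambda_{k+1}(L_{\iso})\|v\|^2$, while the correction terms are controlled in operator norm by something of order $\|D_\delta\|$ (after the symmetrized rank-$\leq 2$ multiplier piece is estimated, which is why a factor like $4$ rather than $1$ appears). So a Weyl-type / eigenvalue-interlacing estimate gives
\[
\lambda_{\min}\big(S\big|_{\Ran(U_{\iso})^\perp}\big) \;\geq\; \lambda_{k+1}(L_{\iso}) - c\,\|D_\delta\| \;>\; 0
\]
precisely when $\|D_\delta\| < \lambda_{k+1}(L_{\iso})/4$, and one must also check $S U_{\iso} = 0$ holds exactly by construction (not just approximately), which pins down $\boldsymbol{\lambda}$ and $z_0$. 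I expect the delicate bookkeeping to be: (i) ensuring the entrywise-nonnegativity multiplier $B$ is genuinely $\geq 0$ on the off-diagonal support after all corrections, (ii) getting the constant in the spectral gap estimate down to $4$, and (iii) converting strict positivity of $S$ on $\Ran(U_{\iso})^\perp$ plus exactness of the kernel into uniqueness of the primal optimum via the standard complementary-slackness rigidity argument. These three, especially (ii), are where the real work lies; the rest is assembling the KKT conditions.
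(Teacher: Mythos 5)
Your overall framework---a dual certificate $S = L - B - z_0 I_N - \tfrac12(1_N\boldsymbol{\lambda}^{\top} + \boldsymbol{\lambda}1_N^{\top}) \succeq 0$ with $S X_{\rcut}=0$ and $B\geq 0$ supported off the diagonal blocks---is exactly the paper's setup, and the tension you anticipate (spectral gap of $L_{\iso}$ on $\Ran(U_{\iso})^{\perp}$ versus a quantity of order $\|D_{\delta}\|$) is the right one. However, the concrete construction you sketch does not go through, and you locate the source of the factor $4$ in the wrong step. The condition $SX_{\rcut}=0$ is equivalent to $S^{(a,b)}1_{n_b}=0$ for \emph{every} pair $(a,b)$, i.e.\ $k^2$ vector equations, while $z_0$ and $\boldsymbol{\lambda}$ supply only $k$ vector degrees of freedom plus a scalar; in particular the diagonal matrix $D_{\delta}$ cannot be "soaked up" by the rank-two matrix $\tfrac12(1_N\boldsymbol{\lambda}^{\top}+\boldsymbol{\lambda}1_N^{\top})$. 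In the paper the diagonal-block equations determine $\boldsymbol{\lambda}$ in terms of $z_0$, and the off-diagonal equations then \emph{prescribe} the row sums of $B$: one must have $B^{(a,b)}1_{n_b}=n_b u_{a,b}$ for the specific vector $u_{a,b}$ in~\eqref{def:u}, which rules out the ansatz $B=W_{\delta}$. The real work is exhibiting an entrywise-positive $B$ with these forced row sums; the paper takes the low-rank choice~\eqref{eq:conB} lying in the subspace $T$, and positivity of that $B$ is precisely where $\|D_{\delta}\|$ enters, via a bound of the form $B^{(a,b)}\geq(-2\|D_{\delta}\|-z_0/2)(n_a^{-1}+n_b^{-1})1_{n_a}1_{n_b}^{\top}$, which requires $z_0<-4\|D_{\delta}\|$. \emph{That} is where the $4$ comes from.

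Consequently the positive-semidefiniteness step involves no Weyl-type subtraction of $c\|D_{\delta}\|$ at all: because the chosen $B$ lies in $T$ and $L_{\delta}=D_{\delta}-W_{\delta}\succeq 0$ is itself a graph Laplacian, one gets $v^{\top}Sv = v^{\top}(L+z_0I_N)v \geq \lambda_{k+1}(L_{\iso})+z_0$ directly for unit $v$ with $1_{n_a}^{\top}v_a=0$, and the two requirements on the single parameter $z_0$ (namely $z_0\geq-\lambda_{k+1}(L_{\iso})$ and $z_0<-4\|D_{\delta}\|$) are simultaneously satisfiable exactly under~\eqref{mainbound}. Your estimate "$\lambda_{k+1}(L_{\iso})-c\|D_{\delta}\|$ from the multiplier piece" is therefore not the mechanism, and as stated it gives no route to the constant $4$. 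Finally, the paper's uniqueness argument does not require $\Ker(S)=\Ran(U_{\iso})$ exactly: it uses strict entrywise positivity of $B^{(a,b)}$ for $a\neq b$ to force any other optimizer to be block-diagonal, and then the trace and row-sum constraints finish. Your kernel-exactness route can be made to work but adds an extra verification the paper avoids.
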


As pointed out in~\cite{NgJW02}, the success of spectral clustering depends on the within-cluster connectivity (algebraic connectivity, which is captured by $\lambda_2(L_{\iso}^{(a,a)})$), as well as the ``noise" $\|D_{\delta}\|$ which measures the inter-cluster connectivity. If the latter quantity is close to 0,  spectral clustering should succeed, because the eigenspace of $L$ w.r.t.\ the smallest $k$ eigenvalues will be  close to $U_{\iso}$. 
Our condition~\eqref{mainbound} makes the intuition behind~\cite{NgJW02} precise. Note that the operator norm of $D_{\delta}$ equals 
\[
\|D_{\delta}\| = \|W_{\delta}1_{N}\|_{\infty} =\max_{1\leq a\leq k} \max_{i\in\Gamma_a} \sum_{j\notin  \Gamma_a}w_{ij},
\]
which quantifies the maximal inter-cluster degree. If this quantity is smaller than the within-cluster connectivity $\lambda_2(L_{\iso}^{(a,a)})$ (modulo a constant factor), then convex relaxation of  RatioCut is able to find the underlying partition~\emph{exactly}. 
\vskip0.25cm

For the SDP relaxation of the normalized cuts, we have the following theorem under slightly different conditions.
\begin{theorem}[{\bf Spectral proximity condition for NCut-SDP}]\label{thm:main2}
The semidefinite relaxation~\eqref{prog:primal-ncut} gives $X_{\ncut}$ in~\eqref{def:Xncut} as the unique global minimizer if the following spectral proximity condition holds
\begin{equation}
\frac{\|P_{\delta}\|_{\infty}}{1- \|P_{\delta}\|_{\infty}} < \frac{\lambda_{k+1}(L_{\rw, \iso})}{4 }, \label{mainbound2}
\end{equation}
where $\lambda_{k+1}(L_{\rw, \iso})$ is the $(k+1)$-th smallest eigenvalue of $L_{\rw,\iso}$.
Moreover, $\lambda_{k+1}(L_{\rw, \iso})$ satisfies
\[
\lambda_{k+1}(L_{\rw,\iso}) = \min_{1\leq a\leq k} \lambda_2(L_{\rw, \iso}^{(a,a)} ) = \min_{1\leq a\leq k} \lambda_2(I_{n_a} - P_{\iso}^{(a,a)})
\]
due to the block-diagonal structure of $L_{\rw,\iso}$ and $P_{\iso}$ in~\eqref{def:Piso}.
\end{theorem}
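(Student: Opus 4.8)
The plan is to establish Theorem~\ref{thm:main2} along the same lines as Theorem~\ref{thm:main}, namely by exhibiting a dual certificate for the semidefinite program~\eqref{prog:primal-ncut}. Write $q := D^{\frac12}1_N$ and $v_a := D^{\frac12}1_{\Gamma_a}$, so that $q = \sum_{a=1}^k v_a$, the $v_a$ are pairwise orthogonal, $\|v_a\|^2 = \Vol(\Gamma_a)$, and $\Ran(X_{\ncut}) = \mathrm{span}\{v_a\}_{a=1}^k$. Attaching a positive semidefinite multiplier to $Z\succeq0$, an entrywise-nonnegative multiplier $B$ to $Z\ge0$, a scalar $\mu$ to $\Tr Z=k$ and a vector $z$ to $ZD^{\frac12}1_N=D^{\frac12}1_N$, the Lagrangian of~\eqref{prog:primal-ncut} shows that it suffices to produce $\mu\in\RR$, $z\in\RR^N$ and $B\in\RR^{N\times N}_+$ such that the slack matrix
\[
\Lambda \;:=\; L_{\sym} - \mu I_N - \tfrac12\big(zq^\top + qz^\top\big) - B
\]
satisfies: (i) $B$ vanishes on every diagonal block $\Gamma_a\times\Gamma_a$ and has at least one positive entry in every off-diagonal block (complementary slackness for $Z\ge0$, since $X_{\ncut}$ is strictly positive on the diagonal blocks and zero off them); (ii) $\Lambda v_a=0$ for all $a$, i.e.\ $\Lambda X_{\ncut}=0$; and (iii) $\Lambda\succeq0$ with $\Ker\Lambda=\mathrm{span}\{v_a\}_{a=1}^k$. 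Granting such a certificate, if $\widehat Z$ is any other optimizer then $\Lambda\widehat Z=0$ and $\lag B,\widehat Z\rag=0$; the first confines $\Ran\widehat Z$ to $\mathrm{span}\{v_a\}$, so $\widehat Z=\sum_{a,b}c_{ab}v_av_b^\top$; entrywise nonnegativity of $\widehat Z$ forces $c_{ab}\ge0$, the second relation together with (i) forces $c_{ab}=0$ for $a\ne b$, and finally $\widehat Z D^{\frac12}1_N=D^{\frac12}1_N$ pins $c_{aa}=1/\Vol(\Gamma_a)$, so $\widehat Z=X_{\ncut}$.

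The construction of the certificate is the heart of the matter. Split $L_{\sym}=D^{-\frac12}L_{\iso}D^{-\frac12}+D^{-\frac12}L_{\delta}D^{-\frac12}$: the first term is positive semidefinite with kernel exactly $\mathrm{span}\{v_a\}$ (since $L_{\iso}1_{\Gamma_a}=0$) and its off-diagonal blocks vanish, while the second is itself positive semidefinite ($L_{\delta}$ being a graph Laplacian) with off-diagonal blocks $-D^{-\frac12}W_{\delta}D^{-\frac12}$ that encode the inter-cluster edges. From $L_{\sym}v_a=D^{-\frac12}L_{\delta}1_{\Gamma_a}$ one sees that condition (ii) is a small linear system (of rank at most two in each cluster block): it determines the cluster-blocks of $z$, and the off-diagonal blocks of $B$, in terms of the single free scalar $\mu$. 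One must then choose $\mu$ so that $B\ge0$ and $\Lambda\succeq0$ hold simultaneously; the nonnegativity of $B$ forces $\mu$ to be bounded below by a multiple of $\|P_{\delta}\|_{\infty}$, whereas positive semidefiniteness forces it to be small, and the exact trade-off is what produces the numerical constant in~\eqref{mainbound2}.

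For the positive-semidefinite step the useful observation is that $q=\sum_a v_a$, so for $x$ orthogonal to every $v_a$ we have $q^\top x=0$; hence the rank-two term $\tfrac12(zq^\top+qz^\top)$ and the diagonal-block part of $B$ drop out of the quadratic form $x^\top\Lambda x$, leaving
\[
x^\top\Lambda x \;=\; x^\top D^{-\frac12}L_{\iso}D^{-\frac12}x + x^\top D^{-\frac12}L_{\delta}D^{-\frac12}x - \mu\|x\|^2 + x^\top D^{-\frac12}W_{\delta}D^{-\frac12}x .
\]
Here $x^\top D^{-\frac12}L_{\iso}D^{-\frac12}x\ge\lambda_{k+1}(D^{-\frac12}L_{\iso}D^{-\frac12})\|x\|^2$ because $\{v_a\}^{\perp}$ is spanned by the remaining eigenvectors; the Courant--Fischer comparison $D\succeq D_{\iso}$ together with $d_i/d_i^{\iso}\le 1/(1-\|P_{\delta}\|_{\infty})$ gives $\lambda_{k+1}(D^{-\frac12}L_{\iso}D^{-\frac12})\ge(1-\|P_{\delta}\|_{\infty})\,\lambda_{k+1}(L_{\rw,\iso})$; the $L_{\delta}$-term is nonnegative; and $|x^\top D^{-\frac12}W_{\delta}D^{-\frac12}x|\le\|P_{\delta}\|_{\infty}\|x\|^2$ since $-D_{\delta}\preceq W_{\delta}\preceq D_{\delta}$. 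With $\mu$ of order $\|P_{\delta}\|_{\infty}$ this yields $x^\top\Lambda x\ge\big[(1-\|P_{\delta}\|_{\infty})\,\lambda_{k+1}(L_{\rw,\iso})-c\,\|P_{\delta}\|_{\infty}\big]\|x\|^2$ for an explicit constant $c$, which is strictly positive precisely under~\eqref{mainbound2}; combined with $\Lambda v_a=0$ this gives (iii).

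I expect the main obstacle to be exactly the construction of $(\mu,z,B)$: choosing $B\ge0$ with the required support while keeping $\mu$ — and hence the vector $z$ and the perturbation they jointly contribute to $\Lambda$ — as small as $\|P_{\delta}\|_{\infty}$, and tracking the constants so that the trade-off lands on the threshold $\frac{\|P_{\delta}\|_{\infty}}{1-\|P_{\delta}\|_{\infty}}<\frac14\lambda_{k+1}(L_{\rw,\iso})$. The remaining ingredients — the Lagrangian bookkeeping, the Courant--Fischer comparison between the $D$- and $D_{\iso}$-normalizations (which is where the factor $1/(1-\|P_{\delta}\|_{\infty})$ comes from), the estimate $-D_{\delta}\preceq W_{\delta}\preceq D_{\delta}$, and the final uniqueness argument — are routine once the certificate is in hand.
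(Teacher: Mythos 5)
Your strategy coincides with the paper's: a dual certificate for the SDP (your $\Lambda$ is the paper's $Q$), the split $L_{\sym}=D^{-\frac12}L_{\iso}D^{-\frac12}+D^{-\frac12}L_{\delta}D^{-\frac12}$ with $L_{\delta}\succeq0$, the constraint $\Lambda v_a=0$ determining the certificate up to one scalar, and --- the key step --- the comparison $\lambda_{k+1}(D^{-\frac12}L_{\iso}D^{-\frac12})\geq(1-\|P_{\delta}\|_{\infty})\,\lambda_{k+1}(L_{\rw,\iso})$ via $D^{-1}D_{\iso}\succeq(1-\|P_{\delta}\|_{\infty})I$, which is exactly the paper's claim~\eqref{eq:claim} and is indeed where the denominator $1-\|P_{\delta}\|_{\infty}$ originates. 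Two remarks on where your sketch, as written, would not land on the constant $4$. First, the obstacle you flag (constructing $B$) is resolved in the paper by taking $B^{(a,b)}=u_{a,b}\varphi_b^{\top}+\varphi_a u_{b,a}^{\top}-\frac{\varphi_a^{\top}u_{a,b}}{\|\varphi_a\|^2}\varphi_a\varphi_b^{\top}$ with $\varphi_a=(D^{(a,a)})^{\frac12}1_{n_a}$; this $B$ lies in the subspace $T$ of~\eqref{def:T}, so $x^{\top}Bx=0$ for every $x$ orthogonal to all $v_a$, and the quadratic form collapses to $x^{\top}L_{\sym}x-\mu\|x\|^2$ with \emph{no} residual $W_{\delta}$ term. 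Your version keeps $x^{\top}D^{-\frac12}W_{\delta}D^{-\frac12}x$ and bounds it separately by $\|P_{\delta}\|_{\infty}\|x\|^2$, which costs an extra $\|P_{\delta}\|_{\infty}$ and yields the threshold $\tfrac15$ rather than $\tfrac14$ --- unless you instead recombine it with the $L_{\delta}$ term into $x^{\top}D^{-\frac12}D_{\delta}D^{-\frac12}x\geq0$. The entrywise positivity $B^{(a,b)}>0$ for $\mu>4\|P_{\delta}\|_{\infty}$ then follows from $(D^{(a,a)})^{-1}W^{(a,b)}1_{n_b}\leq\|P_{\delta}\|_{\infty}1_{n_a}$, matching your ``bounded below by a multiple of $\|P_{\delta}\|_{\infty}$''. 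Second, your uniqueness argument requires $\Ker\Lambda=\mathrm{span}\{v_a\}_{a=1}^k$ exactly, i.e.\ strict positive definiteness on the orthogonal complement; the paper avoids proving strictness by instead establishing $B^{(a,b)}>0$ in \emph{every} entry (not merely one positive entry per off-diagonal block), so that $\lag B,\widetilde X\rag=0$ together with $\widetilde X\geq0$ forces $\widetilde X$ to be block diagonal outright, after which the constraints $\widetilde X\varphi=\varphi$, $\Tr(\widetilde X)=k$, $\widetilde X\succeq0$ pin down $\widetilde X=X_{\ncut}$. Both routes are viable, but the entrywise-positive one is what the explicit certificate delivers and is what produces the clean constant in~\eqref{mainbound2}.
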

The condition~\eqref{mainbound2} has a probabilistic interpretation. Note that $P = D^{-1}W$ is a Markov transition matrix in~\eqref{def:P}, $P_{\delta}$ consists of the off-diagonal blocks of $P$ in~\eqref{def:delta}, and $\|P_{\delta}\|_{\infty}$ is the maximal probability of a random walker leaving its own cluster after one step. Thus, if the left hand side in~\eqref{mainbound2} is small, for example less than $1$, it means a random walker starting from~\emph{any} node is~\emph{more} likely to stay in its own cluster than leave it after one step, and vice versa. In other words, the left hand side of~\eqref{mainbound2} characterizes the strength of inter-cluster connectivity. On the other hand, the right hand of $~\eqref{mainbound2}$ equals $\lambda_2(I_{n_a} - P_{\iso}^{(a,a)})$ which is the eigengap\footnote{The eigengap refers to the difference between the first and the second largest eigenvalues of the Markov transition matrix.} of the Markov transition matrix for the random walk restricted to the $a$-th cluster. It is well known that a larger eigengap implies stronger connectivity of each individual cluster as well as faster mixing time~\cite{LevinP17} of the Markov chain defined on $a$-th cluster. 
The matrix $P = D^{-1}W$ plays also a central role in the diffusion map framework~\cite{Coifman06}. 
Thus, our approach paves the way to derive theoretical guarantees for clustering based on diffusion maps.

While the convex relaxation approach to k-means leads to conditions that are directly expressible as separation conditions between clusters in terms of Euclidean distances, this is not the case in Theorem~\ref{thm:main} and Theorem~\ref{thm:main2}, nor should one expect this for general clusters. After all, the whole point of resorting to spectral clustering is that one may have to cluster datasets which are not neatly separated by the Euclidean distance, see e.g.\ the example in Section~\ref{ss:circles}. 
It is gratifying to note that the bounds in~\eqref{mainbound} and~\eqref{mainbound2} are independent of the number of clusters, $k$. This should be compared to known theoretical bounds for SDP relaxation of k-means clustering which have the undesirable property that they do depend on the number of clusters.

\vskip0.25cm
Theorems~\ref{thm:main} and~\ref{thm:main2} do not only apply to spectral clustering but also to graph cuts. The attentive reader may have noticed that Theorems~\ref{thm:main} and~\ref{thm:main2} do not rely on any information of a data generative model of the underlying clusters or  on the choice of kernel function $\Phi(\cdot)$. Instead, the assumptions in both theorems are purely algebraic conditions which only depend on the spectral properties of the graph Laplacian. Thus these two results not only apply to spectral clustering but also to general graph partition problems. 
Suppose we have an undirected graph with weight matrix $W$ (not necessarily in the form of~\eqref{def:W}) and compute the corresponding graph Laplacian $L.$ We try to partition the graph into several subgraphs such that RatioCut or NCut is minimized. Then if a given partition $\{\Gamma_a\}_{a=1}^k$ (any partition $\{\Gamma_a\}_{a=1}^k$ gives rise to $W_{\iso}$ and $L_{\iso}$) satisfies~\eqref{mainbound} or~\eqref{mainbound2}, then $\{\Gamma_a\}_{a=1}^k$ is the only global minimizer of RatioCut or NCut respectively. Moreover, this partition can be found via the SDP relaxation~\eqref{prog:primal} and~\eqref{prog:primal-ncut}.

As a result, Theorem~\ref{thm:main} and~\ref{thm:main2} also yield performance bounds for successful community detection under stochastic block model with multiple communities~\cite{Abbe17,Agarwal17,AminiL18,Bandeira18} because the community detection problem is an important example 
of {the graph partition problem}. We apply Theorem~\ref{thm:main} to the stochastic block model and present the corresponding performance bound in Section~\ref{ss:sbm}. However, the bounds obtained here will not be as tight as those found in the state-of-the-art literature~\cite{EmmanuelBH16} (by a factor of constant). {The main reason is that our derivation of Theorem~\ref{thm:main} does not assume there are exactly two clusters of equal size~\cite{EmmanuelBH16}. } 

\section{Near-optimality of spectral proximity condition}\label{s:bounds}

It is natural to ask whether the semidefinite relaxation of spectral clustering can achieve better results than ordinary k-means. 
In this section we will demonstrate by means of concrete examples that our framework can indeed achieve near-optimal clustering performance. The first two examples are deterministic examples in which the data are placed on two concentric circles or two parallel lines. {Those two examples,  where the planted clusters are either highly anisotropic or not linearly separable, are often cited to demonstrate better performance of spectral clustering over that of ordinary k-means.}
However, to the best of our knowledge, rigorous theoretic performance analysis of spectral clustering on these examples is still lacking. We will apply Theorem~\ref{thm:main} to show that the SDP relaxation of spectral clustering will work with guarantees while, on the other hand, k-means fails. 

The key ingredient to invoke Theorem~\ref{thm:main} is the estimation of the second smallest eigenvalue of the graph Laplacian associated with each cluster. While we are able to show the estimation of this quantity for deterministic examples, it is more appealing to find out a framework to compute the algebraic connectivity of graph Laplacians with data generated from a probability distribution on a manifold. This is an important mathematical problem by itself and we will discuss it briefly in Section~\ref{ss:laplacian}. In Section~\ref{ss:sbm}, we apply Theorem~\ref{thm:main} to stochastic block model and compare our performance bound with the state-of-the-art results.

\subsection{Two concentric circles}\label{ss:circles}
We first present an example in which k-means clustering obviously must fail, but spectral clustering is known to
succeed empirically, cf.~Figure~\ref{fig:twocircles}. While this example is frequently used to motivate the use of spectral clustering, 
kernel k-means, or diffusion
maps over  standard k-means, so far this motivation was solely based on empirical evidence, since until now no
theoretical guarantees have been given to justify it.
We will give an explicit condition derived from Theorem~\ref{thm:main} under which  Algorithm~\ref{SDP} is
provably able to recover the underlying clusters exactly and in addition it can do so at a nearly minimal cluster separation, 
thereby putting this popular empirical example finally on firm theoretical ground.

Suppose we have two circles centered at the origin. The data are equispaced on the circles, i.e., 
\begin{equation}\label{eq:model-2c}
x_{1,i} = r_1 \begin{bmatrix}\cos(\frac{2\pi i}{n}) \\ \sin(\frac{2\pi i}{n}) \end{bmatrix}, ~1\leq i\leq n; \qquad x_{2,j} =
r_2\begin{bmatrix}\cos(\frac{2\pi j}{m}) \\ \sin(\frac{2\pi j}{m}) \end{bmatrix},~1\leq j\leq m
\end{equation}
where $m = \lfloor n\kappa\rfloor$ and $\kappa =
\frac{r_2}{r_1} > 1$. The parameters are chosen so that the distance between adjacent points in each individual
cluster is approximately $\frac{2\pi r_1}{n}$. In this example, we pick the Gaussian kernel $\Phi_{\sigma}(x,y) = e^{-\frac{\|x-y\|^2}{2\sigma^2}}$ to construct weight matrix and graph Laplacian.

\begin{figure}[h!]
\centering
\begin{minipage}{0.48\textwidth}
\subcaptionbox{Result via Matlab's built-in k-means++ and ``$*$" stands for the centroids of clusters.}{
\includegraphics[width=60mm]{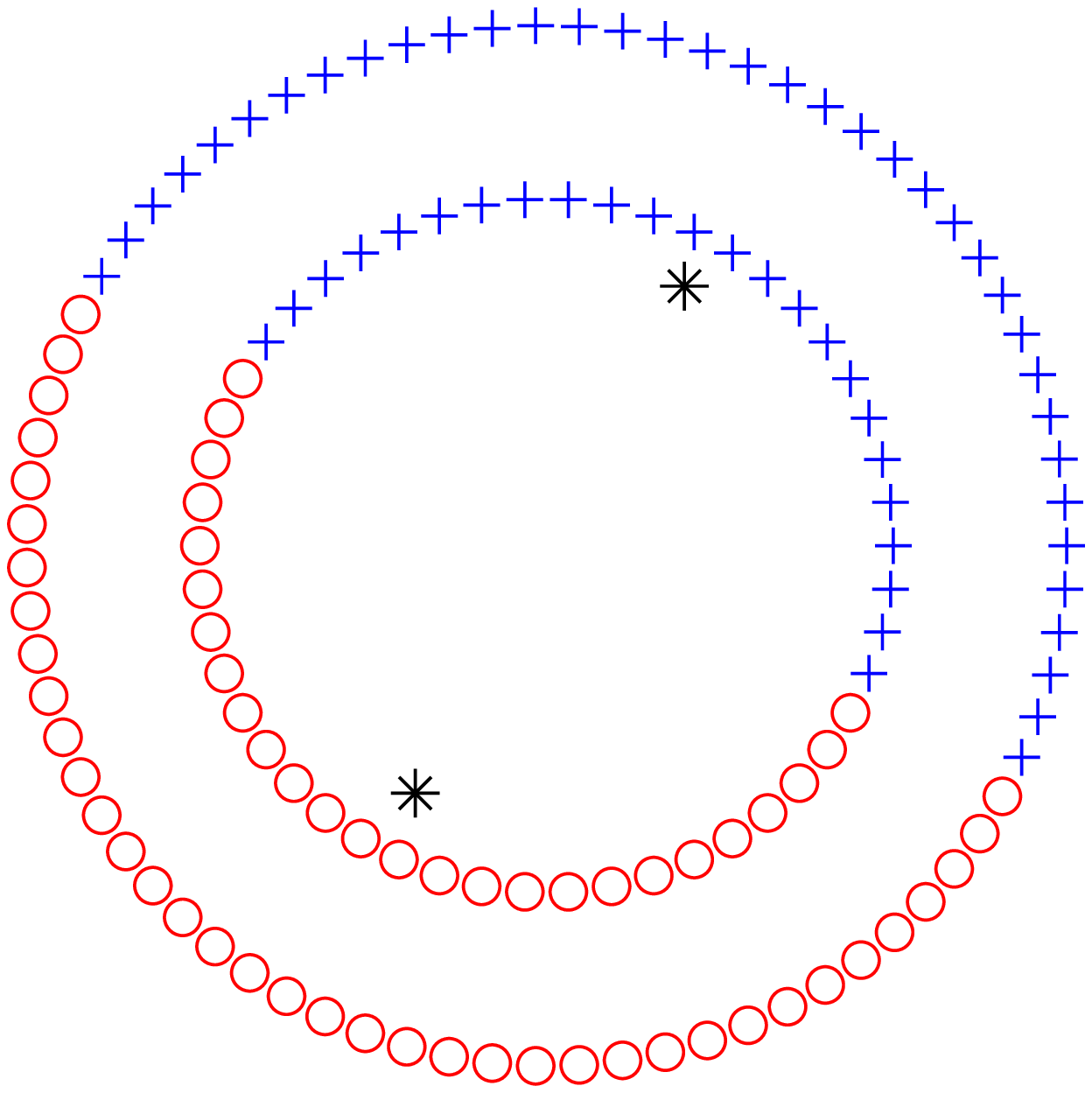}}
\end{minipage}
\hfill
\begin{minipage}{0.48\textwidth}
\subcaptionbox{Result via SDP relaxation of spectral clustering}{
\includegraphics[width=60mm]{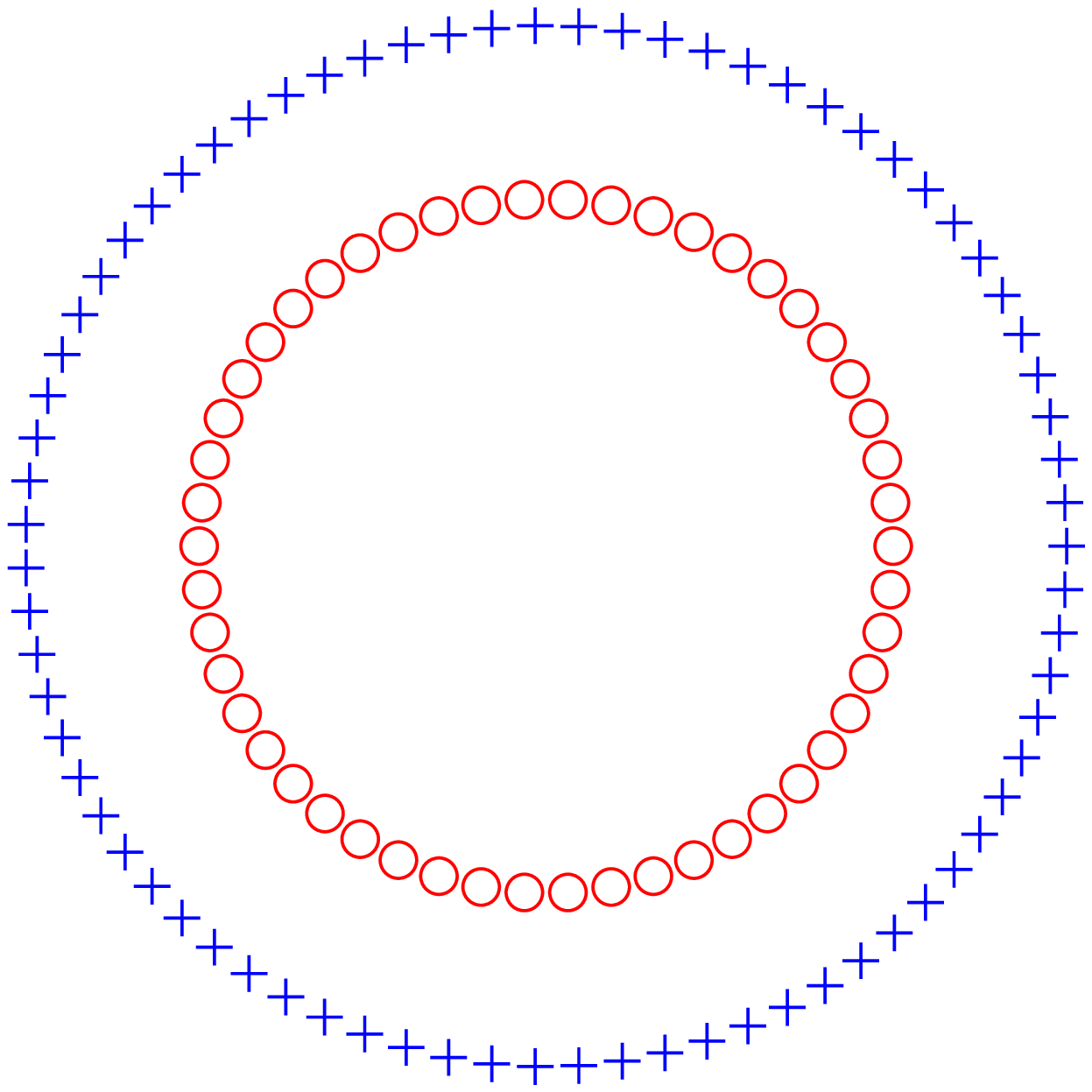}}
\end{minipage}
\caption{Two concentric circles with radii $r_1 = 1$ and $r_2 = \frac{3}{2}$, and $n = 50$ and $m = 75$. (a) If one performs Matlab's built-in
kmeans++, the underlying clusters of the data will obviously not be extracted since two clusters are not linearly separable; (b) If one performs SDP relaxation of spectral clustering with bandwidth $\sigma = \frac{4}{n}$, the two circles are recovered exactly.
}
\label{fig:twocircles}
\end{figure}

\begin{theorem}\label{thm:circles}
Let the data $\{x_i\}_{i=1}^{n+m}$  be given by $\{x_{1,i}\}_{i=1}^n \cup \{x_{2,i}\}_{i=1}^m$ as defined in~\eqref{eq:model-2c} and let the ratio of the two radii be $\kappa = \frac{r_2}{r_1}$ and 
$\Delta := \frac{r_2 - r_1}{r_1} = \kappa-1$. Let $\Phi$ be the heat kernel with $\sigma$  chosen as follows
\[
\sigma^2 = \frac{16r_1^2 \gamma}{n^2 \log(\frac{m}{2\pi})}.
\] 
Then Algorithm~\ref{SDP} recovers the underlying two clusters  $\{x_{1,i}\}_{i=1}^n$ and  $\{x_{2,i}\}_{i=1}^m$ exactly if the separation $\Delta$ satisfies
\begin{equation}\label{delta_circle}
 \Delta \geq \frac{4}{n}\sqrt{ 1+ 2\gamma\left(2 + \frac{\log (4 m)}{\log (\frac{m}{2\pi})}\right)}.
\end{equation}

\end{theorem}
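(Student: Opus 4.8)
The plan is to deduce Theorem~\ref{thm:circles} from Theorem~\ref{thm:main} applied with $k=2$ to the unnormalized graph Laplacian of the heat-kernel similarity matrix. Taking $\{\Gamma_1,\Gamma_2\}$ to be the planted partition (inner circle, outer circle), it suffices to verify the spectral proximity condition $\|D_\delta\| < \lambda_3(L_{\iso})/4$; once this holds, Theorem~\ref{thm:main} certifies $X_{\rcut}$ (block-diagonal with the two circles as blocks) as the unique minimizer of the RatioCut-SDP, so Algorithm~\ref{SDP} returns exactly the two circles. The whole proof therefore reduces to (i) an upper bound on the inter-cluster degree $\|D_\delta\|$, (ii) a lower bound on the per-cluster algebraic connectivity $\lambda_3(L_{\iso})=\min_{a}\lambda_2(L^{(a,a)}_{\iso})$, and (iii) substituting the prescribed $\sigma$.

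For (i), any inner-circle point and any outer-circle point lie at Euclidean distance at least $r_2-r_1=\Delta r_1$, so every cross weight is at most $e^{-(r_2-r_1)^2/(2\sigma^2)}$; since a vertex has at most $\max(n,m)=m$ neighbors in the other cluster, $\|D_\delta\|=\max_a\max_{i\in\Gamma_a}\sum_{j\notin\Gamma_a}w_{ij}\le m\,e^{-(r_2-r_1)^2/(2\sigma^2)}$ (a sharper bound follows from summing the one-dimensional theta-type series coming from $\|x_{1,i}-x_{2,j}\|^2=(r_2-r_1)^2+4r_1r_2\sin^2(\theta_{ij}/2)$, but the crude estimate already has the right exponential rate). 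For (ii), the heart of the argument, I use that each block $W^{(a,a)}$ is circulant with constant row sums, so $D^{(a,a)}$ is a scalar matrix and $L^{(a,a)}_{\iso}=\bigl(\sum_l w_l^{(a)}\bigr)I_{n_a}-W^{(a,a)}$ is circulant with eigenvalues $\sum_{l}w_l^{(a)}\bigl(1-\cos(2\pi jl/n_a)\bigr)$, $j=0,\dots,n_a-1$, where $w_l^{(a)}=e^{-2r_a^2\sin^2(\pi l/n_a)/\sigma^2}$. Retaining only the two lowest-frequency terms gives $\lambda_2(L^{(a,a)}_{\iso})\ge 4\sin^2(\pi/n_a)\,w_1^{(a)}$, and then $\sin x\asymp x$ on $[0,\pi/2]$ (applied to both the polynomial prefactor and the exponent) together with $m=\lfloor n\kappa\rfloor$ — which makes the nearest-neighbor chord $2r_a\sin(\pi/n_a)\asymp 2\pi r_1/n$ in either cluster, the outer cluster giving the smaller bound — reduces the estimate to $\lambda_3(L_{\iso})\ge q(n)\,e^{-c^2/(2\sigma^2)}$ for a polynomial $q$ and a within-cluster chord $c\asymp r_1/n$.

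Finally, substituting $\sigma^2=16r_1^2\gamma/\bigl(n^2\log(m/2\pi)\bigr)$ turns $\|D_\delta\|<\lambda_3(L_{\iso})/4$, after taking logarithms, into the requirement that $\frac{(r_2-r_1)^2}{2\sigma^2}=\frac{\Delta^2 n^2\log(m/2\pi)}{32\gamma}$ exceed $\frac{c^2}{2\sigma^2}$ plus a logarithmic term of order $\log(4m)$ coming from the factors $m$ and $1/q$; rearranging and bounding the lower-order pieces yields precisely~\eqref{delta_circle}. I expect step (ii) to be the main obstacle: squeezing out the exact constants in the lower bound for $\lambda_2(L^{(a,a)}_{\iso})$ is delicate because the exact circulant sum and the ``nearest-neighbor-only'' bound behave differently according to how $\gamma$ compares with $\log m$, and the floor in $m=\lfloor n\kappa\rfloor$ must be carried through carefully so that the final condition stays free of spurious $\kappa$-factors. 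Steps (i) and (iii) are then routine once that estimate is in hand.
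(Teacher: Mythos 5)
Your proposal is correct in its overall reduction and matches the paper's at the top level: apply Theorem~\ref{thm:main} with $k=2$, bound the inter-cluster degree by $\|D_\delta\|\le m\,e^{-(r_2-r_1)^2/(2\sigma^2)}$ (exactly the paper's estimate), lower-bound $\min_a\lambda_2(L^{(a,a)}_{\iso})$, and substitute the prescribed $\sigma^2$ so that the factor $(2\pi/m)^{\log(4m)/\log(m/2\pi)}=1/(4m)$ absorbs the prefactor $m$. Where you genuinely diverge is in the key eigenvalue lemma. The paper (Lemma~\ref{lem:lambda-2c}) does not diagonalize the circulant block; it compares the rescaled Laplacian $e^{2r^2\sin^2(\pi/n)/\sigma^2}L$ to the cycle Laplacian $L_0$ via Weyl's inequality, controls the residual by Gershgorin (twice the maximal degree of the truncated weight graph), obtains the bound only for small $\gamma\le\tfrac14$, and then extends to all $\gamma>0$ through a monotonicity-in-$\sigma^2$ argument based on Gr\"onwall's inequality. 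Your route --- exact circulant eigenvalues $\sum_l w_l(1-\cos(2\pi jl/n_a))$ truncated to the nearest-neighbor terms, giving $\lambda_2(L^{(a,a)}_{\iso})\ge 4w_1^{(a)}\sin^2(\pi/n_a)$ --- is valid (every summand is nonnegative and the minimum over $j\neq 0$ of $1-\cos(2\pi j/n_a)$ is attained at $j=1$), holds uniformly in $\gamma$, and dispenses with both the perturbation step and the Gr\"onwall extension. What it costs is the constant in the exponent: your honest computation gives $w_1^{(a)}\gtrsim(2\pi/m)^{\pi^2/(8\gamma)}$ whereas the paper's verification of~\eqref{delta_circle} uses the exponent $1/(2\gamma)$ (obtained there by replacing $2\sin^2(\pi/n)$ with $8/n^2$, a step whose direction is questionable since $\sin(\pi/n)>2/n$), so to land exactly on the stated constant $1$ inside the square root you would need to enlarge it to roughly $\pi^2/4$ or absorb the discrepancy into the $\widetilde{\cal O}$-statement. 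You flagged this constant-chasing as the delicate point, and it is; but it afflicts the paper's own derivation equally, so it does not undermine your approach.
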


To see that the separation $\Delta$ in Theorem~\ref{thm:circles} is $\widetilde{\cal O}$-optimal, assume w.l.o.g.\ $r_1=1$. In this case the minimum distance 
between points in the same circle is about $\frac{2\pi}{n}$. Therefore, we can only expect spectral clustering to recover the two circle clusters correctly if the minimum distance between points of  different circles is larger than $\frac{2\pi}{n}$, i.e. $r_2 \ge 1+\frac{2\pi}{n}$. 
Indeed, the condition in~\eqref{delta_circle} shows that a separation
$\Delta = \widetilde{\cal O}(\frac{1}{n})$ suffices for successful recovery of the two clusters.

\subsection{Two parallel lines} \label{ss:lines}

Here is another example showing the limitation of k-means, even though the two clusters are perfectly within convex boundaries.
The issue here is that the two clusters are highly anisotropic, which is a major problem for k-means\footnote{It is clear that in the given example simple rescaling of the data would make them more isotropic, but this is not the point we try to illustrate. Also, in more involved examples  consisting of anisotropic clusters of different orientation, rescaling or resorting e.g.\ to the Mahalanobis distance instead of the Euclidean distance will not really overcome the sensibility of k-means to ``geometric distortions''.}.

Suppose the data points are distributed on two lines with separation $\Delta$  as illustrated in Figure~\ref{fig:twolines}, 
\begin{equation}\label{eq:distlines}
x_{1,i} = \begin{bmatrix}
-\frac{\Delta}{2} \\
\frac{i-1}{n-1} 
\end{bmatrix}, \quad 
x_{2,i} = \begin{bmatrix}
\frac{\Delta}{2} \\
\frac{i-1}{n-1} 
\end{bmatrix}, \quad 1\leq i\leq n,
\end{equation}
where $n$ is the number of data points on each time and there are $2n$ points in total.

We claim that if $n$ is large and $\Delta < \frac{1}{2}$, the k-means optimal solution will not
return the underlying partition, as  suggested by the following calculations. For simplicity, we also assume $n$ as an even number.
If we set the cluster centers to be $c_1 = [- \frac{\Delta}{2}\,\,\, \frac{1}{2}]^\top$ and $c_2 = [ \frac{\Delta}{2}\,\,\, \frac{1}{2}]^\top$, as the geometry suggests, 
then the k-means objective function value is 
\begin{equation*}
\Psi_n(c_1, c_2)   = 2\sum_{i=1}^n \left( \frac{i-1}{n-1} - \frac{1}{2}\right)^2
 = \frac{n(2n-1)}{3(n-1)} - \frac{n}{2} 
\end{equation*}
which follows from $\sum_{i=1}^n i^2 = \frac{n(n+1)(2n+1)}{6}.$
As $n$ goes to infinity, the average objective function value over $2n$ points becomes
\[
\lim_{n\rightarrow \infty}\frac{\Psi_n(c_1,c_2)}{2n} = \frac{1}{3} - \frac{1}{4}=  \frac{1}{12}.
\]
However, if we pick the cluster centers as $c_1 = [ 0 \,\,\, \frac{n-2}{4(n-1)}]^{\top}$ and $c_2= [ 0 \,\,\, \frac{3n-2}{4(n-1)}]^{\top}$, then 
\begin{align*}
\Psi_n(c_1,c_2)& = 4\sum_{i=1}^{\frac{n}{2}} \left(\frac{\Delta^2}{4} + \left( \frac{i-1}{n-1} -
\frac{n-2}{4(n-1)}\right)^2\right)  
= \frac{n\Delta^2}{2} + \frac{n(n-2)}{6(n-1)} - \frac{n(n-2)^2}{8(n-1)^2}.
\end{align*}
The limit of average objective function value in this case is
\[
\lim_{n\rightarrow \infty}\frac{\Psi_n(c_1,c_2)}{2n} = \frac{\Delta^2}{4} + \frac{1}{48}.
\]
If $\Delta < \frac{1}{2}$, the second case gives a smaller objective function value. Thus, k-means must fail to recover the two clusters
if $\Delta < \frac{1}{2}$ and $n$ is large.

However, the SDP relaxation of spectral clustering will not have this issue as demonstrated by the following theorem:
\begin{figure}[h!]
\centering
\begin{minipage}{0.45\textwidth}
\subcaptionbox{Clustering via $k$-means. The centroids are placed at $c_1 \approx [0 \,\,\, \frac{1}{4}]^{\top}$ and $c_2\approx [0\,\,\, \frac{3}{4}]^{\top}$.}{\includegraphics[width=75mm]{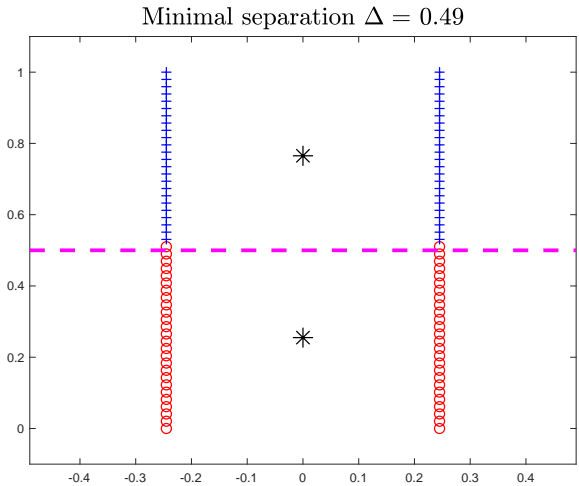}}
\end{minipage}
\hfill
\begin{minipage}{0.45\textwidth}\subcaptionbox{Clustering via Algorithm~\ref{SDP}. The centroids are placed at $c_1 = [-\frac{\Delta}{2} \,\,\, \frac{1}{2}]^{\top}$ and $c_2 =  [\frac{\Delta}{2}\,\,\, \frac{1}{2}]^{\top}$.}{
\includegraphics[width=75mm]{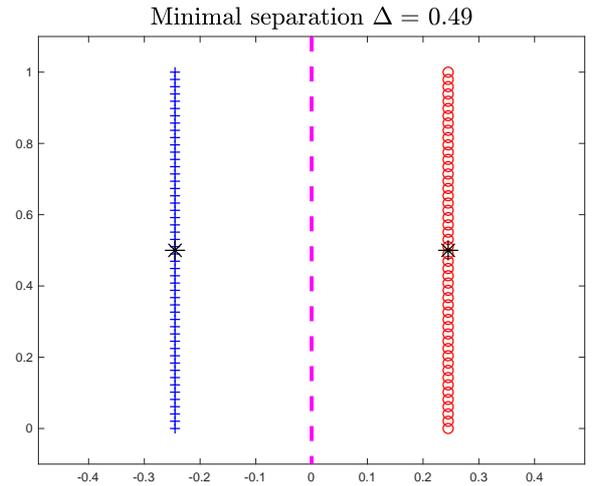}}
\end{minipage}
\caption{The lines are separated by $\Delta = 0.49$ and 50 points are equi-spaced on the unit interval. The
objective function values of the two scenarios above are approximately 8.1787 and 8.6735  for (a) and (b) respectively. In this case,
the centroids in the left plot  gives a smaller k-means objective function value. Hence, the
k-means criterion is unable to disentangle the two linearly separable manifolds if the two clusters are not well separated. }
\label{fig:twolines}
\end{figure}

\begin{theorem}\label{thm:lines}
Let the data $\{x_i\}_{i=1}^{2n}$ be given by $\{x_{1,i}\}_{i=1}^n \cup \{x_{2,i}\}_{i=1}^n$ as defined in~\eqref{eq:distlines}. 
Let $\Phi$ be the heat kernel  with bandwidth
\[
\sigma^2 = \frac{\gamma }{(n-1)^2 \log (\frac{n}{\pi})}, \qquad \gamma >0.
\]
Assume the separation $\Delta$ satisfies
\[
\Delta \geq \frac{1}{n-1} \sqrt{1 + \frac{6\gamma \log n }{ \log (\frac{n}{\pi})}}.
\]
Then Algorithm~\ref{SDP} recovers the underlying two clusters  $\{x_{1,i}\}_{i=1}^n$ and  $\{x_{2,i}\}_{i=1}^n$ exactly.
\end{theorem}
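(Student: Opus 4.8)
The plan is to obtain Theorem~\ref{thm:lines} as a direct consequence of Theorem~\ref{thm:main} applied with $k=2$. Concretely, I would verify the spectral proximity condition~\eqref{mainbound}, which for two clusters reads $\|D_{\delta}\| < \tfrac14 \min_{a\in\{1,2\}}\lambda_2(L_{\iso}^{(a,a)})$. This requires two ingredients: an upper bound on the largest inter-cluster degree $\|D_{\delta}\| = \max_{a}\max_{i\in\Gamma_a}\sum_{j\notin\Gamma_a}w_{ij}$, and a lower bound on the algebraic connectivity $\lambda_2(L_{\iso}^{(a,a)})$ of the weighted graph on each line. Once both are in hand, substituting the prescribed bandwidth $\sigma^2=\gamma/((n-1)^2\log(n/\pi))$ collapses~\eqref{mainbound} into a single inequality between $\Delta$ and $n$, which I then solve for $\Delta$.

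\textbf{Step 1: upper bound on $\|D_{\delta}\|$.} From~\eqref{eq:distlines}, points on opposite lines satisfy $\|x_{1,i}-x_{2,j}\|^2=\Delta^2+(i-j)^2/(n-1)^2$, so the inter-cluster degree of any vertex equals $e^{-\Delta^2/(2\sigma^2)}\sum_{j=1}^n e^{-(i-j)^2/(2\sigma^2(n-1)^2)}$. With the chosen $\sigma$ one has $1/(2\sigma^2(n-1)^2)=\log(n/\pi)/(2\gamma)$, so for $n$ large (depending on $\gamma$) the along-the-line Gaussian sum is dominated by a geometric series and is at most an absolute constant, in fact at most $2$. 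Hence $\|D_{\delta}\|\le C\, e^{-\Delta^2/(2\sigma^2)}=C\,(\pi/n)^{\Delta^2(n-1)^2/(2\gamma)}$ with $C\le 2$, and by the $x$-coordinate symmetry of the configuration the same bound holds for both clusters.

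\textbf{Step 2: lower bound on $\lambda_2(L_{\iso}^{(a,a)})$ and conclusion.} Within one line, $\|x_{a,i}-x_{a,i'}\|=|i-i'|/(n-1)$, so every nearest-neighbor weight equals $a_n:=e^{-1/(2\sigma^2(n-1)^2)}=(\pi/n)^{1/(2\gamma)}$. Splitting $L_{\iso}^{(a,a)}=a_n L_{\mathrm{path}}+R$, where $L_{\mathrm{path}}$ is the unweighted path Laplacian on $n$ vertices and $R$ is the (nonnegatively weighted, hence PSD) Laplacian of the longer-range edges, gives $L_{\iso}^{(a,a)}\succeq a_n L_{\mathrm{path}}$; by eigenvalue monotonicity under the Loewner order, the known path spectrum $\{2(1-\cos(l\pi/n))\}_{l=0}^{n-1}$, and $1-\cos\theta\ge 2\theta^2/\pi^2$ on $(0,\pi]$, one gets $\lambda_2(L_{\iso}^{(a,a)})\ge 2a_n(1-\cos(\pi/n))\ge \tfrac{4}{n^2}(\pi/n)^{1/(2\gamma)}$. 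Thus~\eqref{mainbound} is implied by $2(\pi/n)^{\Delta^2(n-1)^2/(2\gamma)}<\tfrac{1}{n^2}(\pi/n)^{1/(2\gamma)}$; taking logarithms (all quantities positive, $\pi/n<1$) and solving for $\Delta$ yields $\Delta^2(n-1)^2>1+\tfrac{4\gamma\log n+2\gamma\log 2}{\log(n/\pi)}$, which the stated condition $\Delta\ge\frac{1}{n-1}\sqrt{1+\frac{6\gamma\log n}{\log(n/\pi)}}$ implies (using $\log 2\le\log n$). Theorem~\ref{thm:main} then certifies that $X_{\rcut}$ in~\eqref{def:Xr}, the indicator matrix of the two-line partition, is the unique minimizer of~\eqref{prog:primal}, so Algorithm~\ref{SDP} recovers the two clusters exactly; the normalized-cut version follows analogously from Theorem~\ref{thm:main2}, since here all vertex degrees are mutually comparable.

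\textbf{Main obstacle.} The argument is conceptually routine; the work is in the constants. The three delicate points are: bounding the along-the-line Gaussian sum in Step~1 by a constant that does not degrade the exponent of $\pi/n$ (so $\|D_{\delta}\|$ keeps the clean form $(\pi/n)^{\Delta^2(n-1)^2/(2\gamma)}$); choosing a lower bound for $1-\cos(\pi/n)$ sharp enough that the $1/n^2$ factor, once absorbed into a power of $n$ in Step~2, contributes exactly the right multiple of $\gamma$; and verifying that the large-$n$ simplifications (geometric-series domination, the estimate $C\le 2$) are valid under the hypotheses. This bookkeeping is precisely where the slack between a naive ``$4\gamma$'' and the stated ``$6\gamma\log n/\log(n/\pi)$'' is spent.
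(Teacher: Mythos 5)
Your proposal is correct and reaches the stated threshold, but your key lemma is proved by a genuinely different (and simpler) mechanism than the paper's. The paper's Lemma on the single-line Laplacian sets $\widetilde{L}=e^{1/(2\sigma^2(n-1)^2)}L$ and applies Weyl's inequality, $\lambda_2(\widetilde{L})\geq\lambda_2(L_0)-\|\widetilde{L}-L_0\|$, bounding the perturbation via Gershgorin; because that subtraction only yields a nontrivial bound when $\sigma$ is small, the paper first proves the estimate for $\gamma\leq\tfrac12$ and then extends it to all $\gamma>0$ through a separate Gr\"onwall/ODE-comparison argument on $\lambda_2(L(t))$. You instead observe that $\widetilde{L}-L_0$ is itself a graph Laplacian, hence positive semidefinite, so $L_{\iso}^{(a,a)}\succeq a_n L_{\mathrm{path}}$ with $a_n=(\pi/n)^{1/(2\gamma)}$, and eigenvalue monotonicity on $1_n^{\perp}$ gives $\lambda_2(L_{\iso}^{(a,a)})\geq \frac{4}{n^2}(\pi/n)^{1/(2\gamma)}$ in one step, with an explicit constant and valid for every $\gamma>0$; this bypasses both the small-$\gamma$ restriction and the Gr\"onwall extension, and removes the hidden constant in the paper's final ``$\lesssim$''. (Amusingly, the paper's own text notes that $\widetilde{L}-L_0$ is a graph Laplacian but still subtracts its norm.) On the noise side you sharpen $\|D_{\delta}\|$ from the paper's $ne^{-\Delta^2/(2\sigma^2)}$ to $2e^{-\Delta^2/(2\sigma^2)}$ via the geometric-series bound; this is where your only caveat enters ($n\gtrsim\pi 3^{2\gamma}$ so that $(\pi/n)^{1/(2\gamma)}\leq\tfrac13$), but it is also unnecessary: the crude bound $ne^{-\Delta^2/(2\sigma^2)}$ combined with your $\lambda_2$ estimate already yields exactly the stated condition $\Delta^2(n-1)^2>1+\frac{6\gamma\log n}{\log(n/\pi)}$ with no largeness assumption, so you could drop the constant-$C$ refinement entirely. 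Your closing remark that the NCut version ``follows analogously'' is left unproved, but the paper's own proof likewise only verifies the RatioCut condition.
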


The separation distance in Theorem~\ref{thm:lines} is nearly optimal, since the distance between adjacent points within a cluster
is about $\frac{1}{n}$ and the distance between the clusters for which Algorithm~\ref{SDP}
is guaranteed to return the correct clustering is $\Delta = \widetilde{\cal O}(\frac{1}{n})$.

\subsection{Examples with random data and open problems}\label{ss:laplacian}
From a practical viewpoint it can be more appealing to consider random data instead of deterministic examples discussed above. However, in general, it is not an easy task to control the lower bound of the graph Laplacian from random data that are sampled from a probability density function supported on a manifold. Several factors will influence the spectrum of the graph Laplacian, e.g., the number of data points, the geometry of the manifold (shape, volume, connectivity, dimension, etc), the properties of probability density function, and the choice of kernel function $\Phi(\cdot)$ (w.l.o.g. we assume $\Phi$ is normalized, i.e., $\int \Phi(z)\diff z = 1.$) and its parameters, such as the bandwidth $\sigma.$ We propose the following open problem and point out one possible solution. 
\begin{open}
Suppose there are $n$ data points drawn from a probability density function $p(x)$ supported on a manifold ${\cal M}$. 
How can we estimate the second smallest eigenvalue of the graph Laplacian (either normalized or unnormalized) given the kernel function $\Phi$ and $\sigma$? 
\end{open}

In fact, numerous connections exist between graph Laplacians and Laplace-Beltrami operators on the manifold~\cite{Trillos18,TrillosS16,SingerW16,Singer06,BelkinN05,Chung97}. 
Let $L$ be the graph Laplacian constructed from $\{x_i\}_{i=1}^n$ sampled from a probability density function $p(x)$ supported on a Riemannian manifold ${\cal M}$ with/without boundary. 
Define the weighted Laplace-Beltrami operator $\Delta_{{\cal M}}$ on ${\cal M}$ as
\[
\Delta_{{\cal M}}(f) : = -\frac{1}{p}\text{div}(p^2 \nabla f)
\]
where the divergence operator $``\text{div}"$ and gradient $``\nabla"$ are defined according to the Riemannian metric, cf~\cite{DoCarmo92}. The pointwise convergence of the graph Laplacian $L$ to $\Delta_{{\cal M}}$ as well as the convergence of the normalized graph Laplacian have been discussed in several excellent works such as~\cite{BelkinN05,Singer06,Coifman06}.

From our discussion in Section~\ref{s:sdp}, one may have realized that the more relevant convergence of the graph Laplacian is spectral convergence: the convergence of the spectra of the graph Laplacian to those of its continuous limit and more importantly,~\emph{the convergence rate}. We make it more precise here: for the differential operator $\Delta_{\cal M}$, one considers the eigenvalue/eigenfunction problem with Neumann boundary condition:
\begin{equation}\label{eq:neumann}
\Delta_{{\cal M}} f=\lambda f \quad \text{in } {\cal M}, \qquad \frac{\pa f}{\pa \bn} = 0 \quad \text{on } \pa {\cal M}
\end{equation}
where $\bn$ is the normal vector and $\pa {\cal M}$ is the boundary of ${\cal M}$. In particular, this problem reduces to an eigenvalue/eigenfunction problem if the manifold has no boundary. 

We let $\lambda_2(\Delta_{\cal M})$ be the second smallest eigenvalue to~\eqref{eq:neumann}. 
It has been shown in~\cite{TrillosS16} that if ${\cal M}$ is an open, bounded, and connected domain in $\RR^m$ with $m\geq 2$  and $\sigma = \widetilde{{\cal O}}\left( \left( \frac{\log n}{n}\right)^{\frac{1}{2m}}\right)$, the rescaled second smallest eigenvalue $\frac{2}{n\sigma^2}\lambda_2(L)$  will converge to $\eps_{\Phi}\lambda_2(\Delta_{\cal M})$ almost surely when $n$ gets larger where  $\eps_{\Phi}$ represents the surface tension\footnote{Surface tension is defined as $\eps_{\Phi} = \int_{R^m}|z^{(1)}|^2\Phi(z) \diff z $ where $z^{(1)}$ is the first component of $z$.}. Similar results also hold for the normalized graph Laplacian as shown in~\cite{TrillosS16}. 
Moreover,~\cite{SingerW16} has extended the spectral convergence from graph Laplacians to connection Laplacians. 

If one knows $\lambda_2(\Delta_{{\cal M}})$ for certain simple but important cases such as a line segment or a circle equipped with uniform distribution $p(x)$, it is possible to get an estimate of $\lambda_2(L)$ via $\lambda_2(\Delta_{{\cal M}})$ and obtain the performance guarantee of spectral clustering SDP from Theorems~\ref{thm:main} and~\ref{thm:main2}. A rigorous justification of this connection relies on the spectral convergence rate of the graph Laplacian to the Laplacian eigenvalue problem with Neumann boundary condition, which is still missing, to the best of our knowledge.  Under proper conditions,~\cite{Trillos18} gives the spectral convergence rate of ${\cal O}\left(\frac{\log n}{n}\right)^{\frac{1}{2m}}$ for the graph Laplacian to converge to the Laplace-Beltrami operator on a Riemannian manifold ${\cal M}$. However, the kernel function $\Phi$ has a compact support which the heat kernel does not satisfy, and more severely,  the manifold there is assumed to have no boundary. Thus we give another open problem, the solution of which will lead to a better and more complete understanding of SDP relaxation of spectral clustering for random data.
\begin{open}
Assume $n$ data points are sampled independently from a probability density function $p(x)$ supported on ${\cal M}$ and construct a graph Laplacian $L$ with kernel function $\Phi(\cdot)$ with the size of neighborhood $\sigma$.
What is the spectral convergence rate of the graph Laplacian to the Laplacian eigenvalue problem with Neumann boundary condition in~\eqref{eq:neumann}?
\end{open}

\subsection{Stochastic block model}
\label{ss:sbm}
The stochastic block model has been studied extensively as an example of community detection problem in the recent few years ~\cite{EmmanuelBH16,Bandeira18,Abbe17,Agarwal17,AminiL18,YanSC17}. Here we treat community detection problem under the stochastic block model  as a special case of {the graph partition problem}. Let us quickly review the basics of {the stochastic block model}. Assume there are two communities and each of them has $n$ members and in total $N = 2n$ members. The adjacency matrix is a binary random matrix whose entries are given as follows, 
\begin{enumerate}
\item if member $i$ and $j$ are in the same community, $\Pr(w_{ij} = 1) = p$ and $\Pr(w_{ij} = 0) = 1-p$;
\item if member $i$ and $j$ are in different communities, $\Pr(w_{ij} = 1) = q$ and $\Pr(w_{ij} = 0) = 1-q$.
\end{enumerate}
Here $w_{ij} = w_{ji}$ and each $w_{ij}$ is independent. 
We assume $p > q$ so that the connectivity within each individual community is stronger than that between different communities.
The core question regarding the stochastic block model is to study when we are able to recover the underlying community exactly. Remarkable progress have been made by analyzing different types of convex relaxation and many performance bounds have been obtained so far. Interested readers may refer to the literature mentioned above for more details. Here we provide our performance bound in terms of $p$ and $q$ as an application of our theory to the stochastic block model. 

\begin{theorem}\label{thm:sbm}
Let $p = \frac{\alpha\log N}{N}$ and $q = \frac{\beta\log N}{N}$. The RatioCut-SDP~\eqref{prog:primal} recovers the underlying communities exactly if
\[
\alpha > 26\left(  \frac{1}{3} + \frac{\beta}{2} + \sqrt{\frac{1}{9} + \beta} \right)
\]
with high probability.
\end{theorem}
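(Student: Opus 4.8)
The plan is to deduce Theorem~\ref{thm:sbm} directly from the general guarantee of Theorem~\ref{thm:main} applied with $k=2$: it suffices to verify the spectral proximity condition~\eqref{mainbound}, which here reads
\[
\|D_{\delta}\| \;<\; \tfrac14\,\lambda_{3}(L_{\iso}) \;=\; \tfrac14\,\min_{a\in\{1,2\}}\lambda_{2}\!\big(L^{(a,a)}_{\iso}\big),
\]
with high probability over the draw of the stochastic block model. For the two-community model the isolated weight matrix $W_{\iso}$ is block diagonal with two \emph{independent} $n\times n$ binomial random-graph adjacency matrices ($n=N/2$, edge probability $p=\alpha\log N/N$), the inter-community block $W_{\delta}$ has i.i.d.\ $\mathrm{Bernoulli}(q)$ entries with $q=\beta\log N/N$, and $D_{\delta}$ is the diagonal matrix of cross-degrees. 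Thus $\|D_{\delta}\|$ is the largest inter-community degree, a maximum of $N$ variables each $\mathrm{Bin}(n,q)$ with mean $\tfrac{\beta}{2}\log N$, while each $L^{(a,a)}_{\iso}$ is the Laplacian of a $G(n,p)$ graph with mean degree $(n-1)p\approx\tfrac{\alpha}{2}\log N$; in the regime forced by the hypothesis ($\alpha$ larger than an absolute constant $>2$) these subgraphs are connected with high probability, so the Fiedler values $\lambda_2(L^{(a,a)}_{\iso})$ are positive and the variational characterization applies.

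\emph{Upper bound on $\|D_{\delta}\|$.} Apply a Chernoff (or Bernstein) upper-tail bound to $\mathrm{Bin}(n,q)$ together with a union bound over the $N$ vertices to obtain $\|D_{\delta}\|\le f(\beta)\log N$ with probability $1-O(N^{-c})$, where $f(\beta)$ is the explicit threshold coming out of the exponential estimate; for large $\beta$ one has $f(\beta)=\tfrac{\beta}{2}+O(\sqrt{\beta})$.

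\emph{Lower bound on the Fiedler value.} Write $L^{(a,a)}_{\iso}=D^{(a,a)}_{\iso}-A^{(a,a)}$ and use the variational characterization over vectors $v\perp 1_n$, $\|v\|=1$:
\[
\lambda_2\!\big(L^{(a,a)}_{\iso}\big) \;=\; \min_{v\perp 1_n,\ \|v\|=1}\big(v^{\top}D^{(a,a)}_{\iso}v-v^{\top}A^{(a,a)}v\big) \;\ge\; d_{\min} - \big\|A^{(a,a)}-\E A^{(a,a)}\big\|,
\]
using $v^{\top}D^{(a,a)}_{\iso}v\ge d_{\min}$ and, since $\E A^{(a,a)}=p(1_n1_n^{\top}-I_n)$ satisfies $v^{\top}\E A^{(a,a)}v=-p\le 0$ on $1_n^{\perp}$, $v^{\top}A^{(a,a)}v\le\|A^{(a,a)}-\E A^{(a,a)}\|$. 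Then bound $d_{\min}$ from below by a lower-tail Chernoff estimate plus a union bound over vertices, and $\|A^{(a,a)}-\E A^{(a,a)}\|$ from above by the standard spectral-norm concentration for binomial random graphs, valid once $np\gtrsim\log n$; in fact the latter term is of order $\sqrt{\log N}$ and is therefore negligible next to the $\Theta(\log N)$ degree fluctuations, so this step yields $\lambda_2(L^{(a,a)}_{\iso})\ge\big(g(\alpha)-o(1)\big)\log N$ with $g(\alpha)=\tfrac{\alpha}{2}-O(\sqrt{\alpha})$, simultaneously for $a=1,2$ by a further union bound.

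\emph{Combining, and the main obstacle.} Dividing by $\log N$, the proximity condition becomes the deterministic inequality $4f(\beta)<g(\alpha)$; substituting the explicit forms of $f$ and $g$ and rearranging — the resulting relation is essentially quadratic in $\sqrt{\alpha}$, and the positive root of that quadratic is what produces the $\sqrt{\tfrac19+\beta}$ term — gives the stated sufficient condition $\alpha>26\big(\tfrac13+\tfrac\beta2+\sqrt{\tfrac19+\beta}\big)$. The one genuinely delicate point is that in the regime $p,q=\Theta(\log N/N)$ the mean degree, the degree fluctuations, and the inter-community degree all sit at the common scale $\log N$, so none of the error terms is truly lower order and one cannot be cavalier about constants: the substance of the proof is to choose the sharpest convenient Chernoff-type tail bounds, union-bound them over the $N$ vertices, and track every constant so that $4f(\beta)<g(\alpha)$ collapses to exactly the claimed form. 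The resulting constant is admittedly not optimal — consistent with the remark after Theorem~\ref{thm:main} that generality (no assumption of two equal-sized clusters) costs a constant factor relative to the sharpest SBM-specific bounds — and producing a sharp lower bound for $g(\alpha)$, i.e.\ for the algebraic connectivity of a sparse binomial random graph near the connectivity threshold, is the hardest single ingredient.
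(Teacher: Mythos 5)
Your overall architecture is the same as the paper's: verify the spectral proximity condition of Theorem~\ref{thm:main} with high probability by (a) upper-bounding $\|D_{\delta}\|$, the maximal cross-community degree, via a scalar Bernstein/Chernoff bound plus a union bound over the $N$ vertices, and (b) lower-bounding $\min_a\lambda_2(L^{(a,a)}_{\iso})$. Step (a) is exactly what the paper does. Step (b) is where you diverge: the paper writes $L^{(a,a)}_{\iso}=\sum_{i<j}w_{ij}(e_i-e_j)(e_i-e_j)^{\top}$ as a sum of independent PSD rank-one matrices and applies the matrix Chernoff inequality restricted to $1_n^{\perp}$, obtaining $\lambda_2(L^{(a,a)}_{\iso})\geq(1-\eta)np$ with failure probability $n\exp(-np\eta^2/4)$; you instead use $\lambda_2(L)\geq d_{\min}-\|A-\E A\|$ and invoke Feige--Ofek-type spectral-norm concentration for sparse binomial random graphs. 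Both routes work in this regime, but the paper's is self-contained and cleaner (no external spectral-norm estimate is needed, and the PSD structure of the summands is exactly what matrix Chernoff wants), whereas yours imports the hardest ingredient as a black box --- which you correctly flag, but do not supply.

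The genuine weakness is that you never carry out the constant-tracking that you yourself identify as ``the substance of the proof,'' and your one concrete claim about where the constants come from is wrong: the term $\sqrt{\tfrac19+\beta}$ does \emph{not} arise from a quadratic in $\sqrt{\alpha}$. In the paper it comes entirely from solving the Bernstein tail condition for the cross-degrees, namely $t^2/2>(nq+t/3)\log N$ with $nq=\tfrac{\beta}{2}\log N$, whose positive root is $t>(\tfrac13+\sqrt{\tfrac19+\beta})\log N$; the factor $26$ is then $8/(1-\eta)$ with $\eta=0.6861$ chosen so that the side condition $\eta^2\alpha>8$ from the matrix Chernoff bound is also met. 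On your route the Fiedler bound $g(\alpha)=\tfrac{\alpha}{2}-O(\sqrt{\alpha})$ does introduce a quadratic in $\sqrt{\alpha}$, and one can check that $4f(\beta)<g(\alpha)$ is indeed implied by the stated hypothesis, but this verification (that $26S\geq 2+8S+2\sqrt{1+8S}$ for $S=\tfrac13+\tfrac{\beta}{2}+\sqrt{\tfrac19+\beta}\geq\tfrac23$) is a nontrivial final step that your proposal asserts rather than performs.
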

We defer the proof of this theorem to Section~\ref{ss:proof-sbm}.
Compared with the state-of-the-art results such as~\cite{EmmanuelBH16,Bandeira18} where $\sqrt{\alpha} - \sqrt{\beta} > \sqrt{2}$ is needed for exact recovery, our performance bound is slightly looser by a constant factor. 
 The near-optimal performance guarantee given by our analysis is not entirely surprising. As pointed out in~\cite{Bandeira18}, the Goemans-Williamson type of SDP relaxation\footnote{Here, the Goemans-Williamson type of SDP relaxation is given by $\max \Tr( (2W - (1_N1_N^{\top} -I_N ))Z )$, s.t. $Z\succeq 0$ and $Z_{ii} = 1$. Note this relaxation is designed specifically for the case of two clusters.} succeeds if
\[
\lambda_2(D_{\iso} - D_{\delta} - W + \frac{1}{2} 1_{N}1_{N}^{\top}) > 0.
\]
In fact, the condition above is implied by
\begin{equation}\label{eq:gw}
\min \lambda_2(L_{\iso}^{(a,a)}) > 2\|D_{\delta}\|
\end{equation}
which differs from our Theorem~\ref{thm:main} only by a factor of 2. We present the proof of this claim~\eqref{eq:gw} in Section~\ref{ss:proof-sbm}.

\section{Numerical explorations}\label{s:numerics}
In this section, we present a few examples to complement our theoretic analysis. One key ingredient in Theorem~\ref{thm:main} is the estimation of the second smallest eigenvalue of the graph Laplacian of each individual cluster. In general, it is not easy to estimate this quantity, especially for random instances. Therefore, we turn to certain numerical simulations to see if the~\emph{spectral proximity condition} holds for the data drawn from an underlying distribution supported on a manifold. We are in particular interested in the performances under different choices of minimal separation $\Delta$ and bandwidth $\sigma.$ In general, the larger $\sigma$ gets,  the stronger the within and inter-cluster connectivity are. Thus $\sigma$ cannot be arbitrarily large (just think about the extreme case $\sigma = \infty$ and the whole graph turns into a complete graph with equal edge weight); on the other hand, it is easier to pick a proper $\sigma$  if the minimal separation $\Delta$ is larger. The rule of thumb of choosing $\sigma$ is to increase the within-cluster connectivity while controlling the inter-cluster connectivity.
We will also compare those numerical results with ordinary k-means (or k-means SDP) and demonstrate the advantage of spectral clustering. 
\subsection{Two concentric circles}
In Section~\ref{s:bounds}, we discuss two deterministic examples in which k-means fails to recover the underlying partition, as well as the conditions under which~\eqref{prog:primal} and~\eqref{prog:primal-ncut} succeed. Here we run numerical examples for their corresponding random instances and see how~\eqref{mainbound} and~\eqref{mainbound2} work for RatioCut-SDP and NCut-SDP relaxation respectively.

In the first example, we assume the data are uniformly distributed on two concentric circles with radii $r_1=1$ and $r_2=1 + \Delta$. We sample $n=250$ and $m = \lfloor250(1+\Delta)\rfloor$ for these two circles respectively so that the distance between two adjacent points on each circle is approximately ${\cal O}(\frac{1}{n}).$ For each pair of $(\Delta, \sigma)$, we run 50 experiments to see how many times the condition~\eqref{mainbound} and~\eqref{mainbound2} hold respectively. More precisely, in the RatioCut-SDP, we compute the second smallest eigenvalue of Laplacian for each individual circle and $\|D_{\delta}\|$, and we treat the recovery is successful if~\eqref{mainbound} satisfies. Similar procedures are performed for NCut-SDP and count how many instances satisfy~\eqref{mainbound2}.

The size of the neighborhood $\sigma$ is chosen as $\sigma = \frac{p}{n}$ with the horizontal parameter $p$ in Figure~\ref{fig:toy1} varying from $1$ to $25$; we test different values for the minimal separation $\Delta$ between two circles. The results are illustrated in Figure~\ref{fig:toy1}: the performances of RatioCut-SDP and NCut-SDP are quite similar. 
If $\Delta\geq 0.2 $ and $5 \leq p\leq 75\Delta - 8$, then exact recovery is guaranteed with high probability. The distance between two adjacent points on one circle is approximately $\frac{2\pi}{n} \approx 0.025$ and our theorem succeeds if the minimal separation $\Delta$ is about 8 times larger than the ``average" distance between adjacent points within one cluster.

\begin{figure}[h!]
\centering
\begin{minipage}{0.48\textwidth}
\includegraphics[width=75mm]{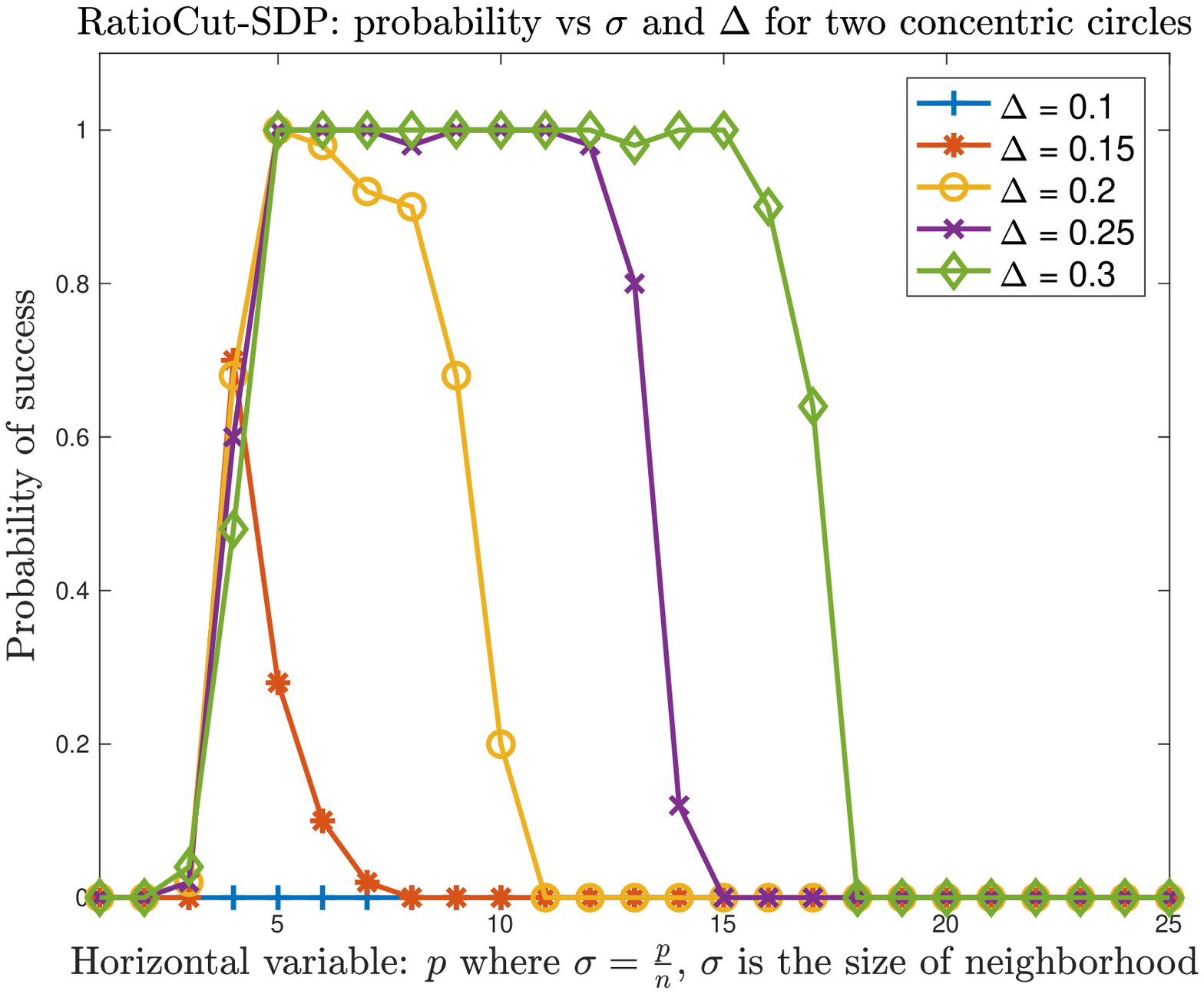}
\end{minipage}
\hfill
\begin{minipage}{0.48\textwidth}
\includegraphics[width=75mm]{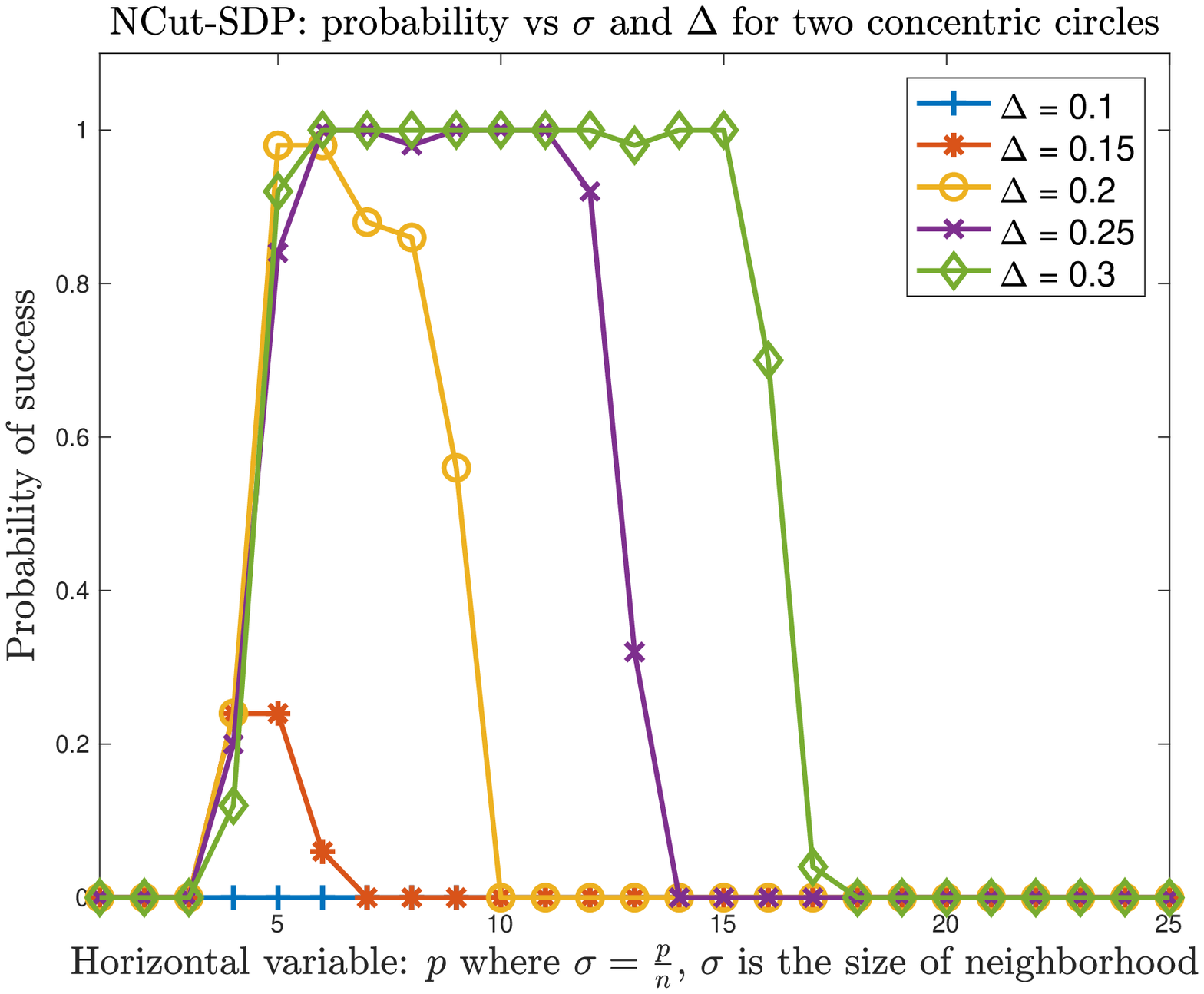}
\end{minipage}
\caption{Two concentric circles with radii $r_1=1$ and $r_2 = 1+\Delta$. The smaller circle has $n=250$ uniformly distributed points and the larger one has $m = \lfloor 250(1+\Delta)\rfloor$ where $\Delta = r_2-r_1$. Left: RatioCut-SDP; Right: NCut-SDP.}
\label{fig:toy1}
\end{figure}

\subsection{Two parallel lines}

For the two-lines case, we define  two clusters via $[-\frac{\Delta}{2} \,\,\, x_i]^{\top}$ and $[\frac{\Delta}{2} \,\,\, y_i]^{\top}, 1\leq i\leq 250$ where all $\{x_i\}_{i=1}^n$ and $\{y_i\}_{i=1}^n$ are i.i.d. uniform random variables on $[0,1]$. Thus the two clusters are exactly $\Delta$ apart. We also run 50 experiments for any  pair of $(\Delta,\sigma)$ where $\sigma=\frac{p}{2n}$ with $1\leq p\leq 20$ and $n=250$. Then we compute how many of those random instances satisfy conditions~\eqref{mainbound} and~\eqref{mainbound2}, similar to what we have done previously. Empirically, the SDP relaxation of spectral clustering achieves exact recovery with high probability if $\Delta \geq 0.05$ and $2\leq p\leq 150\Delta-4$, which outperforms the ordinary k-means by a huge margin. Recall that in k-means, when $\Delta < \frac{1}{2}$ and $n$ is large, the global minimum of k-means is unable to detect the underlying clusters correctly. Here, the SDP relaxation works provably even if $\Delta\geq 0.05$ which is very close to $\frac{\log n}{n}$ where $n = 250.$

\begin{figure}[h!]
\centering
\begin{minipage}{0.48\textwidth}
\includegraphics[width=75mm]{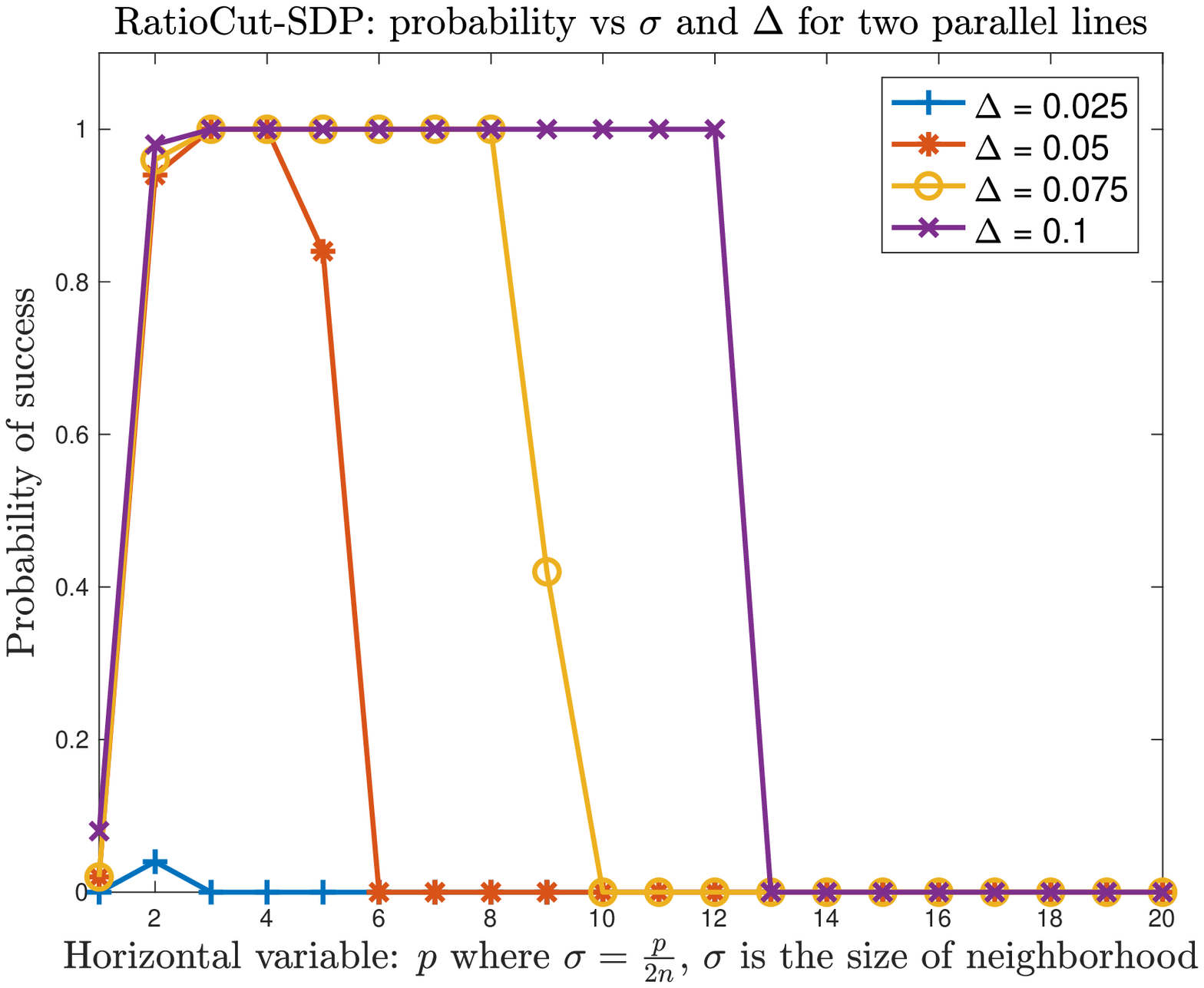}
\end{minipage}
\hfill
\begin{minipage}{0.48\textwidth}
\includegraphics[width=75mm]{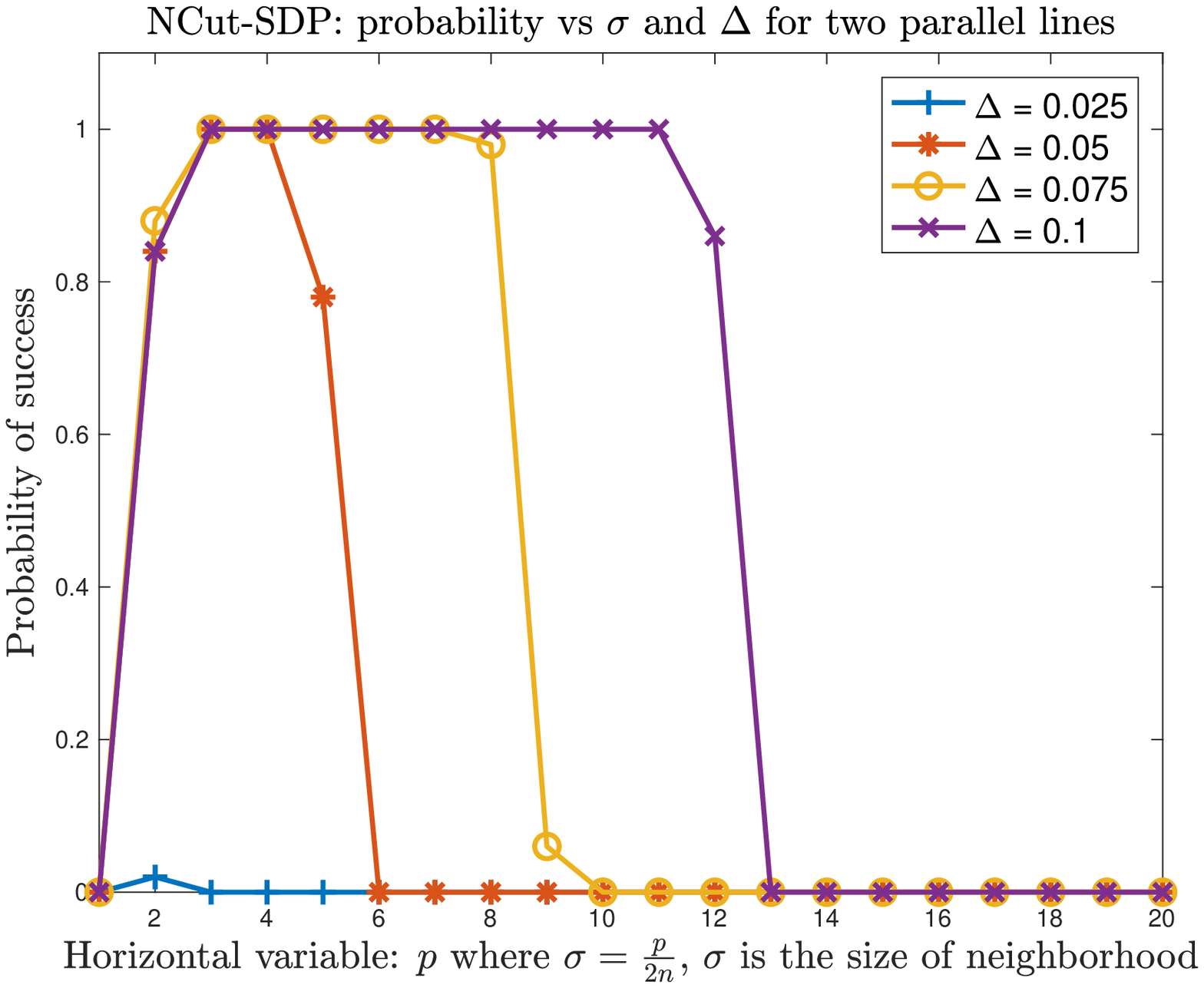}
\end{minipage}
\caption{Two parallel line segments of unit length with separation $\Delta$ and 250 points are sampled uniformly on each line. Left: RatioCut-SDP; Right: NCut-SDP.}
\label{fig:toy2}
\end{figure}

\subsection{Stochastic ball model and comparison with k-means}
Now we apply Theorems~\ref{thm:main} and~\ref{thm:main2} to the stochastic ball model with two clusters located on a 2D plane and compare the results with those via k-means SDP. 
The stochastic ball model is believed in a way optimal for k-means: the clusters are within convex boundaries and perfectly isotropic. However, we will find spectral clustering SDP performs much better. 
Consider the stochastic ball model with two clusters which satisfy
\[
x_{1,i} = \begin{bmatrix}
-\frac{\Delta}{2} - 1 \\
0
\end{bmatrix} + r_{1,i}, \qquad 
x_{2,i} = \begin{bmatrix}
\frac{\Delta}{2} + 1\\
0
\end{bmatrix} + r_{2,i}
\]
where $\{r_{1,i}\}_{i=1}^n$ and $\{r_{2,i}\}_{i=1}^n$ are i.i.d. uniform random vectors on  the 2D unit disk. From the definition, we know that the support of probability density function of each cluster is included in a unit disk centered at $[-\frac{\Delta}{2} - 1\,\,\, 0]^{\top}$ and $[\frac{\Delta}{2} +1\,\,\, 0]^{\top}$ respectively. If $\Delta > 0$, then the supports of those two distributions are separated by at least $\Delta$. 
\begin{figure}[h!]
\centering
\includegraphics[width=80mm]{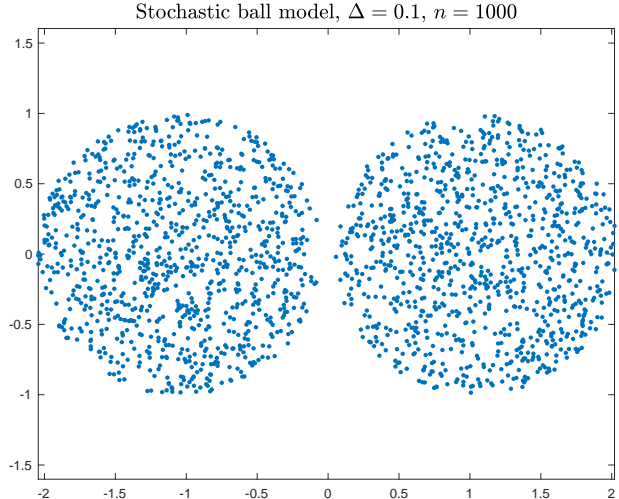}
\caption{Stochastic ball model with two clusters and the separation between the centers is $2 + \Delta$. Each cluster has 1000 points.}
\end{figure}

The results are illustrated in Figure~\ref{fig:toy3} and~\ref{fig:toy4}, and we summarize the empirical sufficient condition for exact recovery in terms of $n$, $\sigma$, and $\Delta$:
\begin{enumerate}
\item if $n = 250,$ we require $\Delta \geq 0.2$, $\sigma = \frac{p}{5\sqrt{n}}$ and $2\leq p \leq 40\Delta-4$;
\item if $n = 1000$, we require $\Delta \geq 0.1,$ $\sigma = \frac{p}{5\sqrt{n}}$, and $2\leq p\leq 60\Delta - 2$.
\end{enumerate}
In other words, when the number of points increases, the minimal separation $\Delta$ for exact recovery will also decrease because a smaller $\sigma$ can be picked to ensure strong within-cluster connectivity while the inter-cluster connectivity diminishes  simultaneously  for a fixed $\Delta$.

To compare the performance of spectral clustering SDP with that of k-means SDP, we use a necessary condition in~\cite{LLLSW17}. The k-means SDP is exactly in the form of~\eqref{prog:primal} but the graph Laplacian $L$ is replaced by the squared distance matrix of data. The necessary condition in~\cite{LLLSW17} states that the exact recovery via k-means SDP is impossible if
\[
\Delta \leq \sqrt{\frac{3}{2}} - 1 \approx 0.2247.
\]
On the other hand, we see much better performance via~\eqref{prog:primal} and~\eqref{prog:primal-ncut} from Figure~\ref{fig:toy3} and~\ref{fig:toy4} respectively. Even if the separation $\Delta$ is below $0.2$, one can still achieve exact recovery with high probability with a proper choice of $\sigma  = {\cal O}(\frac{1}{\sqrt{n}}).$ 

\begin{figure}[h!]
\centering
\begin{minipage}{0.48\textwidth}
\includegraphics[width=75mm]{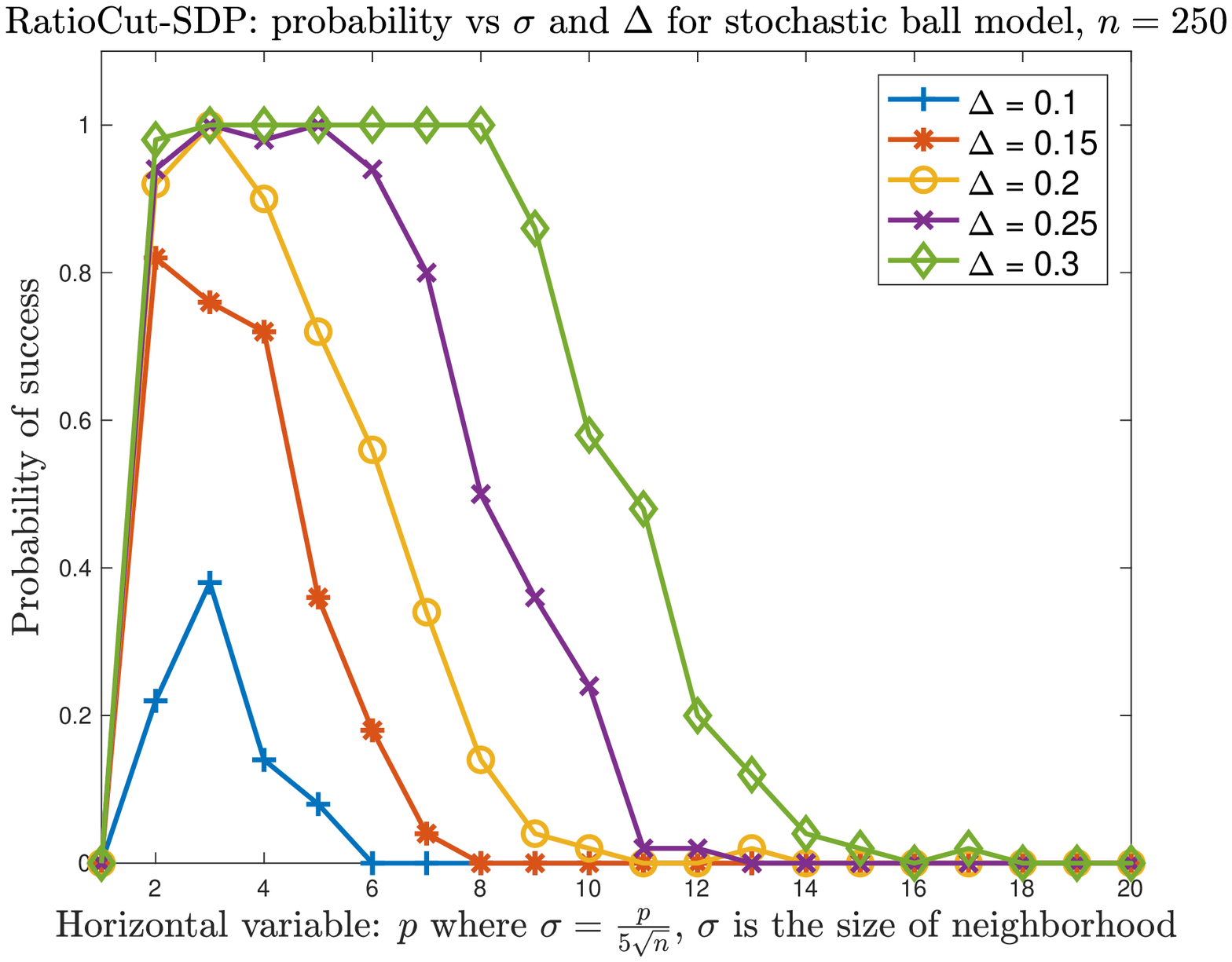}
\end{minipage}
\hfill
\begin{minipage}{0.48\textwidth}
\includegraphics[width=75mm]{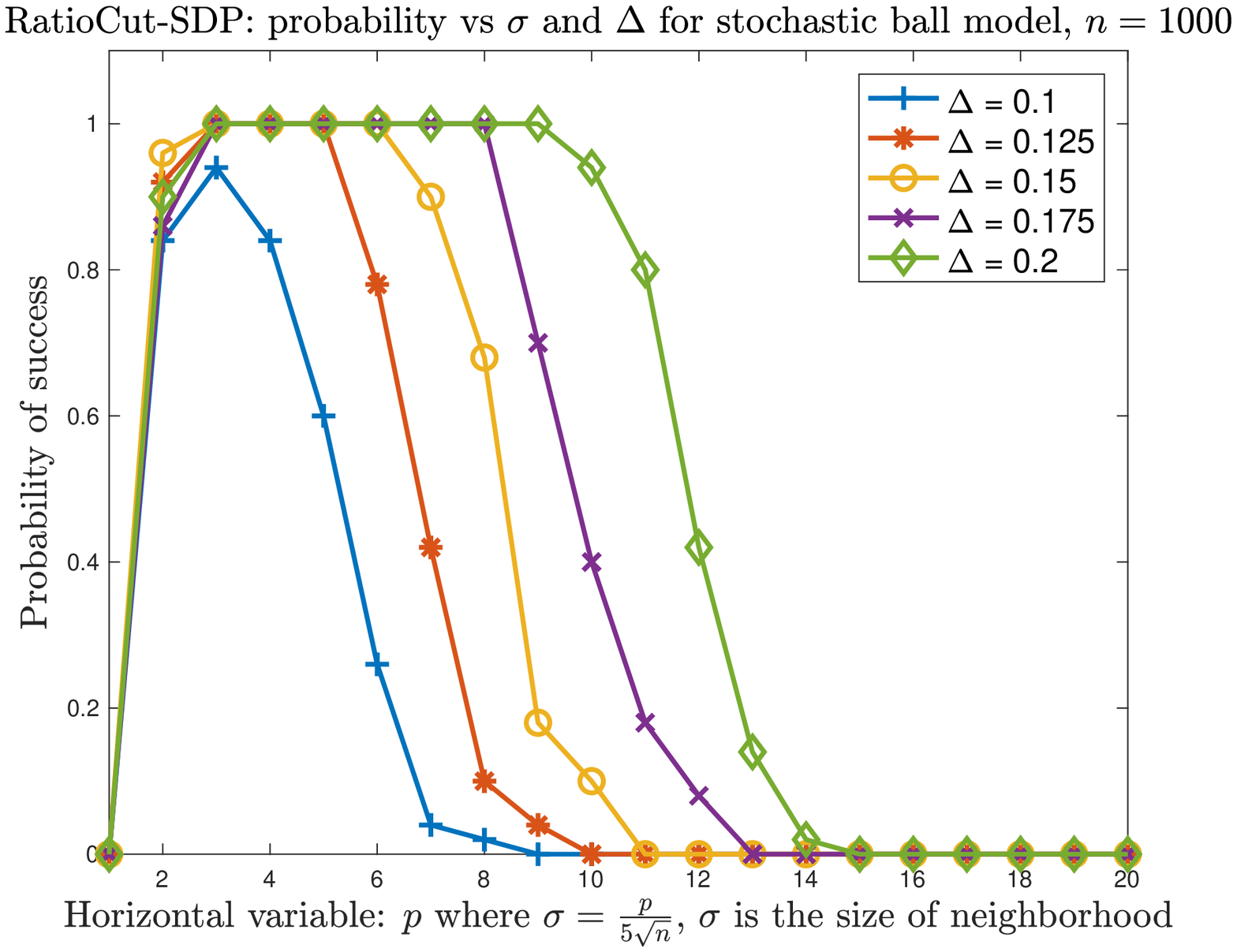}
\end{minipage}
\caption{Performance of RatioCut-SDP  for 2D stochastic ball model. Left: each ball has $n = 250$ points; Right: each ball contains $n = 1000$ points.}
\label{fig:toy3}
\end{figure}

\begin{figure}[h!]
\centering
\begin{minipage}{0.48\textwidth}
\includegraphics[width=75mm]{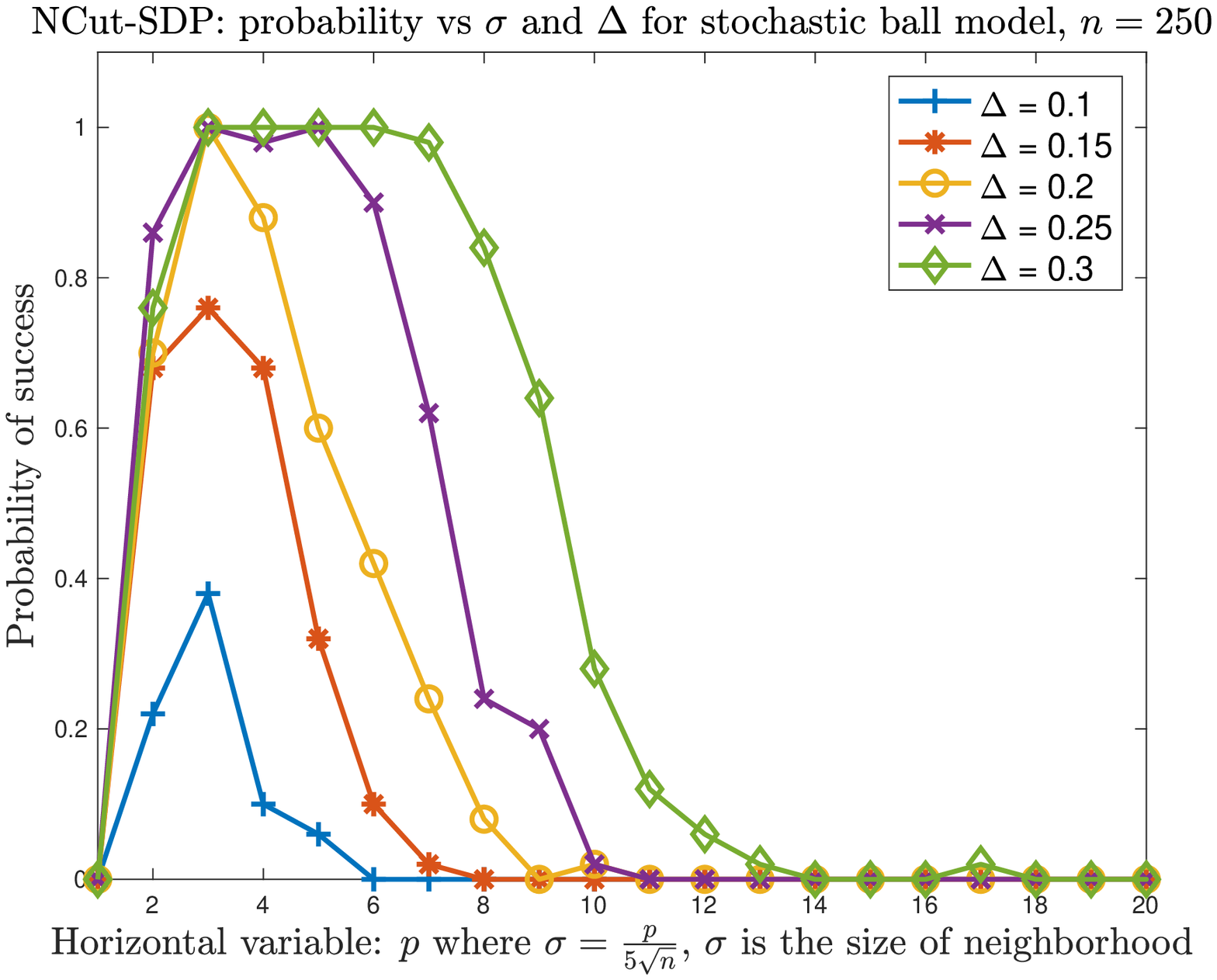}
\end{minipage}
\hfill
\begin{minipage}{0.48\textwidth}
\includegraphics[width=75mm]{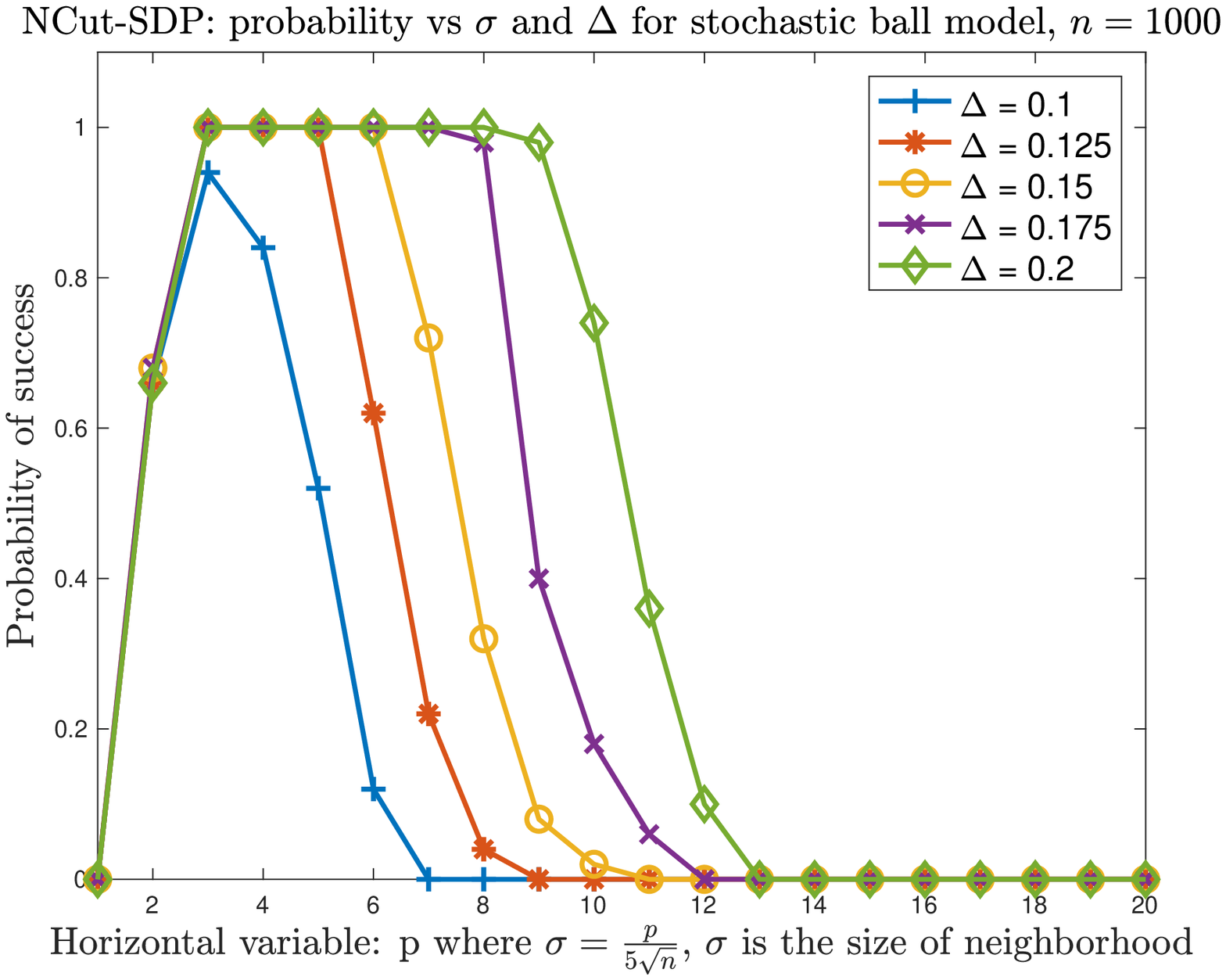}
\end{minipage}
\caption{Performance of NCut-SDP for 2D stochastic ball model. Left: each ball has $n = 250$ points; Right: each ball contains $n = 1000$ points.}
\label{fig:toy4}
\end{figure}

\section{Proofs}\label{s:proofs}
\subsection{Proofs of Theorem~\ref{thm:main} and~\ref{thm:main2}}
To certify $X$ in~\eqref{def:X} as the global minimizer of~\eqref{prog:primal} and~\eqref{prog:primal-ncut}, we resort to the powerful tool of Lagrangian duality theory~\cite{BoydV04,BenN01}. While some of the calculations follow from our previous paper~\cite{LLLSW17}, we include them to make the presentation more self-contained. The proof starts with finding a sufficient condition that guarantees $X$ to be the global minimizer and then we show the assumptions in Theorem~\ref{thm:main} and~\ref{thm:main2} satisfy the proposed sufficient conditions. 

In our proof, we will use the famous Gershgorin circle theorem repeatedly, the proof of which can be found in many sources such as~\cite[Chapter 7]{GolubV96}.
\begin{theorem}[\bf {Gershgorin circle theorem}]\label{thm:gersh}
Given a matrix $Z = (z_{ij})_{1\leq i,j\leq n}\in\RR^{n\times n}$ and all of its eigenvalues $\{\lambda_l(Z)\}_{l=1}^n$ are contained in the union of the circles centered at $\{z_{ii}\}_{i=1}^n$,
\[
\{ \lambda_l(Z)\}_{l=1}^n \subseteq \bigcup_{i=1}^n \Big\{ x\in\CC \Big| |x - z_{ii}| \leq \sum_{j\neq i} |z_{ij}| \Big\}.
\]
In particular, if $Z$ is also nonnegative, all the eigenvalues satisfy
\[
\max_{1\leq l\leq n}|\lambda_l(Z)| \leq \|Z\|_{\infty}.
\]
\end{theorem}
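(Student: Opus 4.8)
The plan is to run the classical eigenvector–coordinate argument. Let $\lambda$ be any eigenvalue of $Z$ and let $v\neq 0$ be a corresponding eigenvector; note $v$ may be complex even though $Z$ is real, since $Z$ need not be symmetric. First I would pick an index $i$ at which the largest component in modulus is attained, i.e.\ $|v_i|=\|v\|_{\infty}$, and observe $|v_i|>0$ because $v\neq 0$. Reading off the $i$-th coordinate of $Zv=\lambda v$ gives $\sum_{j} z_{ij}v_j=\lambda v_i$, which I would rewrite as $(\lambda-z_{ii})v_i=\sum_{j\neq i}z_{ij}v_j$.

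Next I would take moduli and apply the triangle inequality together with $|v_j|\leq |v_i|$ for every $j$, obtaining $|\lambda-z_{ii}|\,|v_i|\leq \sum_{j\neq i}|z_{ij}|\,|v_j|\leq |v_i|\sum_{j\neq i}|z_{ij}|$. Dividing by $|v_i|>0$ yields $|\lambda-z_{ii}|\leq \sum_{j\neq i}|z_{ij}|$, so $\lambda$ lies in the $i$-th Gershgorin disc, hence in the union of all $n$ discs. Since $\lambda$ was an arbitrary eigenvalue, this proves the first assertion.

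For the second assertion I would argue that each $\lambda_l(Z)$ lies in some disc, say the $i$-th one, so that $|\lambda_l(Z)|\leq |z_{ii}|+\sum_{j\neq i}|z_{ij}|=\sum_{j=1}^n|z_{ij}|\leq \max_{1\leq i\leq n}\sum_{j=1}^n|z_{ij}|=\|Z\|_{\infty}$; when $Z\geq 0$ the absolute values are superfluous, so $\|Z\|_{\infty}$ is simply the maximal row sum of $Z$. Taking the maximum over $l$ gives $\max_{1\leq l\leq n}|\lambda_l(Z)|\leq \|Z\|_{\infty}$.

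There is essentially no obstacle here: the only point meriting a moment of care is that the eigenvector can be complex and that the division by $|v_i|$ is legitimate precisely because $i$ is chosen to maximize $|v_i|$ while $v\neq 0$; everything else is the triangle inequality. Accordingly I would record this as a short, self-contained lemma, which also makes the subsequent repeated invocations of Gershgorin's theorem in the proofs of Theorems~\ref{thm:main} and~\ref{thm:main2} fully rigorous without relying on an external reference.
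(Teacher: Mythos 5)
Your argument is the standard, correct proof of Gershgorin's theorem (maximal-modulus coordinate of an eigenvector, triangle inequality, divide by $|v_i|>0$), and the deduction of $\max_l|\lambda_l(Z)|\leq\|Z\|_{\infty}$ from the disc containment is also right. The paper does not prove this result itself but defers to \cite[Chapter 7]{GolubV96}, where the same classical argument appears, so your proposal matches the intended proof; the only small remark is that the bound $\max_l|\lambda_l(Z)|\leq\|Z\|_{\infty}$ holds for arbitrary $Z$, with nonnegativity merely letting one drop the absolute values in the row sums, as you note.
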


\subsubsection{Notation and preliminaries}
To begin with, we introduce a few notations.
Due to the similarity between the programs~\eqref{prog:primal} and~\eqref{prog:primal-ncut}, we combine them into the following general form:
\begin{equation}\label{prog:prim}
\min_{Z\in{\cal S}_N} \lag A, Z\rag, \quad \text{s.t.} \quad Z\succeq 0, \quad Z\geq 0, \quad \Tr(Z) = k, \quad Z\varphi = \varphi.
\end{equation}
Here $\varphi\in\RR^N$, $\varphi_a\in\RR^{n_a}$ (which is the $a$-th block of $\varphi$), and $A$ are defined as follows:
\begin{enumerate}
\item For RatioCut-SDP in~\eqref{prog:primal}, 
\begin{equation}\label{def:r-varphi}
\varphi := 1_N, \quad \varphi_a := 1_{n_a}, \quad A := L.
\end{equation}
\item For NCut-SDP in~\eqref{prog:primal-ncut}, 
\begin{equation}\label{def:n-varphi}
\varphi := D^{\frac{1}{2}}1_N, \quad \varphi_a := (D^{(a,a)})^{\frac{1}{2}}1_{n_a}, \quad A := L_{\sym}.
\end{equation}
\end{enumerate}

With the definition of $\varphi_a$ in~\eqref{def:r-varphi} and~\eqref{def:n-varphi}, we can put the ground truth $X$ into a unifying form
\[
X^{(a,a)}  = \frac{1}{\|\varphi_a\|^2}\varphi_a\varphi_a^{\top}, \qquad X^{(a,b)} = 0, \quad a\neq b
\]
which is a block-diagonal matrix.

Our theoretic analysis also rely on a few commonly used convex cones.
Let ${\cal K}$ and ${\cal K}^*$ be a pair of cone and dual cone\footnote{The dual cone ${\cal K}^*$ of ${\cal K}$ is defined as $\{W : \lag W, Z\rag\geq 0, \forall Z\in {\cal K}\}$; {in particular, $({\cal K}^*)^* = {\cal K}$ holds.}}:
\begin{equation}\label{def:Kcone}
{\cal K} = {\cal S}_N^+ \cap \RR^{N\times N}_{+}, \quad {\cal K}^* =   {\cal S}_N^+ + \RR^{N\times N}_+ = \{Q + B: Q\succeq 0, B\geq 0\}
\end{equation} 
where ${\cal K}$ is the intersection of two self-dual cones, i.e., the positive semi-definite cone ${\cal S}_N^+$ and the nonnegative cone $\RR^{N\times N}_{+}$. By definition, ${\cal K}$ is a pointed\footnote{The cone ${\cal K}$ is pointed if for $Z\in {\cal K}$ and $-Z\in {\cal K}$, $Z$ must be $0$, see Chapter 2 in~\cite{BenN01}.} and closed convex cone with a nonempty interior. 
 For the last two constraints in~\eqref{prog:prim}, we define a useful linear operator $\A$ which maps ${\cal S}_N$ to $\RR^{N+1}$ as follows
\begin{equation}\label{def:A}
\A: {\cal S}_N \rightarrow \RR^{N+1}: \quad \A(Z) = \begin{bmatrix}
\lag I_N, Z\rag \\
\frac{1}{2}(Z + Z^{\top})\varphi
\end{bmatrix}.
\end{equation}
Obviously, {it holds that} $\Tr(Z) = \lag I_N, Z\rag$ and $Z\varphi = \frac{1}{2}(Z + Z^{\top})\varphi$, and thus the last two constraints in~\eqref{prog:prim} can be written as
\[
\A(X) = \begin{bmatrix}k \\ \varphi \end{bmatrix} = : b.
\]
Its dual operator $\A^*$ under the canonical inner product over $\RR^{N\times N}$ is given by
\[
\A^*(\lambda) := \frac{1}{2}(\alpha \varphi^{\top} + \varphi\alpha^{\top}) + zI_N
\]
where $\lambda : = \begin{bmatrix} z \\ \alpha\end{bmatrix}\in\RR^{N+1}$ is the dual variable with $z\in\RR$ and $\alpha\in\RR^N$. Therefore, an equivalent form of~\eqref{prog:prim} is 
\[
\min_{Z\in {\cal K}} \lag A, Z\rag, \quad \text{s.t.} \quad \A(Z) = b.
\]
The Lagrangian function can be expressed in the form of
\[
\mathcal{L}(Z, \lambda) := \lag A, Z\rag + \lag \lambda, \A(Z) - b\rag =  \lag \A^*(\lambda) + A, Z\rag - \lag \lambda, b\rag.
\]
By taking the infimum over ${\cal K} : = \{Z: Z\succeq 0, Z\geq 0\}$, we have
\[
\inf_{Z\in{\cal K}}{\cal L}(Z, \lambda) = -\lag \lambda, b\rag
\]
if $\A^*(\lambda) + A \in{\cal K}^*$ and then obtain the dual program of~\eqref{prog:prim}:
\[
\max - \lag\lambda, b\rag, \quad \text{s.t.} \quad \A^*(\lambda) + A\in {\cal K}^*.
\]
Here $\A^*(\lambda) + A\in {\cal K}^*$ means that $\A^*(\lambda) + A$ can be written as the sum of a positive semidefinite matrix and a nonnegative matrix, i.e.,
\begin{equation}\label{def:QB}
\A^*(\lambda) + A = Q + B,
\end{equation}
where $Q\succeq 0$ and $B\geq 0$. 

Finally, we define two linear subspaces on ${\cal S}_N$ which will be useful later:
\begin{equation}\label{def:T}
\begin{array}{rl}%
T & : =  \{ XZ + ZX - XZX: Z\in {\cal S}_N\}, \\
\TB  & := \{ (I_N - X)Z(I_N - X): Z\in {\cal S}_N\}.
\end{array}
\end{equation}
We then denote $Z_{T}$ and $Z_{\TB}$ as the orthogonal projection of $Z$ onto $T$ and $\TB$ respectively. More specifically, the corresponding $(a,b)$-block of $Z_T$ and $Z_{\TB}$ can be written into
\begin{align*}
Z^{(a,b)}_{\TB} & : = \left(I_{n_a} - \frac{\varphi_a\varphi_a^{\top}}{\|\varphi_a\|^2}\right)Z^{(a,b)}\left(I_{n_b} - \frac{\varphi_b\varphi_b^{\top}}{\|\varphi_b\|^2}\right), \\
Z^{(a,b)}_{\TB} & : =  \frac{\varphi_a\varphi_a^{\top}}{\|\varphi_a\|^2}Z^{(a,b)}+Z^{(a,b)}\frac{\varphi_b\varphi_b^{\top}}{\|\varphi_b\|^2} -  \frac{\varphi_a^{\top}Z^{(a,b)}\varphi_b}{\|\varphi_a\|^2\|\varphi_b\|^2} \varphi_a\varphi_b^{\top}.
\end{align*}

\subsubsection{Optimality condition and dual certificate}
From the theory of convex optimization~\cite{BenN01}, we know that $X$ is a global minimizer (not necessarily unique) of~\eqref{prog:prim} if complementary slackness holds
\[
\lag \A^*(\lambda) + A, X\rag = \lag Q+ B, X\rag= 0
\]
where $\lambda$ is the dual variable. From complementary slackness, we are able to find a few useful relations regarding $B,Q$, and $X.$ 

From $Q\succeq 0$, $B\geq 0$, and $X\in{\cal K}$, we have 
\[
\lag Q,X\rag = \lag B, X\rag = 0
\]
because both $\lag Q,X\rag$ and $\lag B, X\rag$ are nonnegative, and their sum equals 0.
Moreover, it holds that
\begin{equation}\label{cond:cs}
B^{(a,a)} = 0, \quad QX = 0
\end{equation}
where $B^{(a,a)} = 0$ follows from $X^{(a,a)} = \frac{1}{\|\varphi_a\|^2}\varphi_a\varphi_a^{\top} > 0$ and
\[
0 = \lag B, X\rag = \sum_{a=1}^k \frac{1}{\|\varphi_a\|^2}\lag B^{(a,a)}, \varphi_a \varphi_a^{\top}\rag.
\]
On the other hand, $ QX=0$ follows from $\lag Q,X\rag=0$, $Q\succeq 0$, and $X\succeq 0.$ By definition of $\TB$ in~\eqref{def:T}, we can see that  $QX = 0$ implies $Q\in \TB$.

With the discussion above, we are ready to present a sufficient condition to certify $X$ as the unique global minimizer of~\eqref{prog:prim}.
\begin{proposition}[{\bf Sufficient condition}]\label{prop:suff}
Suppose $X$ is a feasible solution of~\eqref{prog:prim} and there exists $Q$ such that $Q\succeq 0$ and $X$ satisfies $QX = 0$ where $Q$ is defined in~\eqref{def:QB}. Then $X$ is the unique minimizer of~\eqref{prog:prim} if $B^{(a,b)} > 0$ for all $a\neq b$ and $B^{(a,a)} = 0.$
\end{proposition}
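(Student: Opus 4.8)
The plan is to establish strong duality and then upgrade feasibility-plus-complementary-slackness to \emph{strict} complementarity, which forces uniqueness. First I would verify that $X$ together with the dual data $(Q,B,\lambda)$ forms a primal-dual optimal pair: by hypothesis $X$ is primal feasible, and the decomposition $\A^*(\lambda)+A = Q+B$ with $Q\succeq 0$, $B\geq 0$ exhibits dual feasibility (i.e.\ $\A^*(\lambda)+A\in{\cal K}^*$), while $QX=0$ and $B^{(a,a)}=0$ give $\lag Q,X\rag = \lag B,X\rag = 0$, hence $\lag \A^*(\lambda)+A,X\rag = 0$, which is exactly complementary slackness. Since $\lag A,X\rag = \lag \A^*(\lambda)+A,X\rag - \lag\lambda,\A(X)-b\rag + \lag\lambda,\A(X)-b\rag = -\lag\lambda,b\rag$ (using $\A(X)=b$), the primal objective at $X$ equals the dual objective at $\lambda$, so $X$ is a global minimizer. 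This part is routine Lagrangian duality, exactly as set up in the preceding subsection.

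The core of the argument is uniqueness. Let $\widetilde X$ be any optimal solution of~\eqref{prog:prim}. Then $\lag A,\widetilde X\rag = \lag A,X\rag = -\lag\lambda,b\rag$, and since $\widetilde X$ is feasible, $\A(\widetilde X)=b$, so $\lag \A^*(\lambda)+A,\widetilde X\rag = \lag A,\widetilde X\rag + \lag\lambda,\A(\widetilde X)\rag = -\lag\lambda,b\rag + \lag\lambda,b\rag = 0$. Writing this as $\lag Q,\widetilde X\rag + \lag B,\widetilde X\rag = 0$ with both terms nonnegative (because $Q\succeq 0$, $\widetilde X\succeq 0$, and $B\geq 0$, $\widetilde X\geq 0$), we get $\lag Q,\widetilde X\rag = 0$ and $\lag B,\widetilde X\rag = 0$. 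From $Q\succeq 0$, $\widetilde X\succeq 0$ and $\lag Q,\widetilde X\rag=0$ we obtain $Q\widetilde X = 0$, so $\widetilde X\in\TB$, meaning $\widetilde X = (I_N-X)\widetilde X(I_N-X)$; in particular $\widetilde X$ has the block structure $\widetilde X^{(a,b)} = (I_{n_a}-\tfrac{\varphi_a\varphi_a^{\top}}{\|\varphi_a\|^2})\widetilde X^{(a,b)}(I_{n_b}-\tfrac{\varphi_b\varphi_b^{\top}}{\|\varphi_b\|^2})$, hence $\widetilde X^{(a,b)}\varphi_b = 0$ and $\varphi_a^{\top}\widetilde X^{(a,b)} = 0$ for all $a,b$. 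Separately, from $\lag B,\widetilde X\rag = 0$, $B\geq 0$, $\widetilde X\geq 0$ we get entrywise $B_{ij}(\widetilde X)_{ij} = 0$; since $B^{(a,b)} > 0$ (strictly, every entry) for $a\neq b$, this forces $\widetilde X^{(a,b)} = 0$ for all $a\neq b$, i.e.\ $\widetilde X$ is block-diagonal.

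It remains to identify the diagonal blocks. For each $a$, $\widetilde X^{(a,a)}\succeq 0$, $\widetilde X^{(a,a)}\geq 0$, $\widetilde X^{(a,a)}\varphi_a = 0$ (from the $\TB$ membership with $b=a$, combined with block-diagonality — one needs to check the projection formula collapses correctly when $\widetilde X^{(a,b)}=0$ for $b\neq a$), and $\sum_a \Tr(\widetilde X^{(a,a)}) = k$. The feasibility constraint $\widetilde X\varphi = \varphi$ restricted to block $a$ reads $\widetilde X^{(a,a)}\varphi_a = \varphi_a$. But I just argued $Q\widetilde X=0$ gives $\widetilde X^{(a,a)}\varphi_a = 0$ — this is a contradiction unless $\varphi_a = 0$, which is false. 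So I expect the \textbf{main obstacle}, and the point requiring the most care, is reconciling these: the resolution must be that $Q\widetilde X = 0$ does \emph{not} by itself give $\widetilde X^{(a,a)}\varphi_a=0$; rather, $\Ker Q$ is spanned by the columns of $X$ (equivalently the $\varphi_a$'s placed block-diagonally), so $Q\widetilde X=0$ means each column of $\widetilde X$ lies in $\mathrm{span}\{\varphi_1\text{-block},\dots,\varphi_k\text{-block}\}$; combined with block-diagonality, $\widetilde X^{(a,a)} = c_a \varphi_a\varphi_a^{\top}$ for scalars $c_a\geq 0$; then $\widetilde X^{(a,a)}\varphi_a=\varphi_a$ pins $c_a = 1/\|\varphi_a\|^2$, yielding $\widetilde X^{(a,a)} = X^{(a,a)}$ and hence $\widetilde X = X$. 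The delicate step is therefore establishing $\Ker Q = \Ran X$ (not merely $\Ran X\subseteq\Ker Q$); this should follow because $Q = \A^*(\lambda)+A - B$ and one verifies its rank is exactly $N-k$ — but since the proposition only assumes the listed conditions, I would instead argue directly: $Q\widetilde X = 0$ and $\widetilde X\succeq 0$ give $\Ran\widetilde X\subseteq\Ker Q = \TB\cap(\text{image constraints})$, and the combination of $\widetilde X\varphi=\varphi$, $\Tr\widetilde X = k$, block-diagonality, and $\widetilde X^{(a,a)}\succeq 0$ with its range annihilated by $Q$ leaves only $\widetilde X = X$. I would close by double-checking that the rank-one conclusion on each diagonal block is genuinely forced and does not secretly require $Q$ to be rank exactly $N-k$, adjusting the hypotheses' usage accordingly.
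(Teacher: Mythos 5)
Your overall strategy (weak duality to certify optimality, then $\langle B,\widetilde X\rangle=0$ plus $B^{(a,b)}>0$ to force any other optimizer $\widetilde X$ to be block-diagonal) matches the paper's proof in substance; the paper just phrases it as a direct chain $\langle A,\widetilde X\rangle-\langle A,X\rangle\geq\langle B,\widetilde X\rangle\geq 0$ rather than invoking the dual program explicitly. The gap is in your final step, identifying the diagonal blocks. You correctly catch that $Q\widetilde X=0$ does \emph{not} put $\widetilde X$ in $\TB$, but your proposed repair rests on $\Ker Q=\Ran X$, which the hypotheses do not supply: the proposition only assumes $Q\succeq 0$ and $QX=0$, and these are satisfied e.g.\ by $Q=0$, in which case $\Ker Q$ is everything and your argument that $\widetilde X^{(a,a)}=c_a\varphi_a\varphi_a^{\top}$ collapses. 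You acknowledge this and say you would "argue directly," listing the right ingredients, but you never execute that argument, and it is precisely the step the proposition hinges on.

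The correct closing argument needs no information from $Q$ at all. Once $\widetilde X$ is block-diagonal, the feasibility constraint $\widetilde X\varphi=\varphi$ reads $\widetilde X^{(a,a)}\varphi_a=\varphi_a$ for every $a$, so each of the $k$ diagonal blocks has $1$ as an eigenvalue. Since $\widetilde X\succeq 0$, all eigenvalues are nonnegative and their sum is $\Tr(\widetilde X)=k$; having already accounted for $k$ eigenvalues equal to $1$, every remaining eigenvalue must vanish. Hence each block $\widetilde X^{(a,a)}$ is rank one with unit eigenvalue and eigenvector $\varphi_a$, i.e.\ $\widetilde X^{(a,a)}=\varphi_a\varphi_a^{\top}/\|\varphi_a\|^2=X^{(a,a)}$, so $\widetilde X=X$. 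This is exactly how the paper finishes; with this substitution your proof is complete and otherwise sound.
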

Note that the choices of $Q$ and $B$ are not arbitrary; the sum of $Q$ and $B$ must satisfy $Q + B= \A^*(\lambda) +A$ for certain $\lambda.$ 
\begin{proof}[{\bf Proof of Proposition~\ref{prop:suff}}]
Let $\widetilde{X}$ be another feasible solution, i.e., $\widetilde{X}\in{\cal K}$ and $\A(\widetilde{X}) = b$ but $\widetilde{X}\neq X$. The goal is to show that $\lag A, \widetilde{X}\rag > \lag A, X\rag$, i.e., the objective function value evaluated at $\widetilde{X}$ is strictly larger than that evaluated at $X$.

Assume $\A^*(\lambda) + A= Q + B$ for certain $\lambda$. Since $\widetilde{X}$ and $Q$ are positive semidefinite, we have $\lag \widetilde{X}, Q\rag\geq 0$. Hence, it holds that
\begin{align*}
\lag Q, \widetilde{X}\rag & = \lag Q, \widetilde{X} - X\rag  = \lag \A^*(\lambda) + A - B, \widetilde{X} - X\rag \\ 
& = \lag A - B, \widetilde{X} - X\rag \geq 0
\end{align*}
which follows from $Q\succeq 0$, $QX = 0$, and both $\widetilde{X}$ and $X$ satisfy the linear constraints. Therefore, combined with $\lag B, X\rag=0$, we get
\[
\lag A, \widetilde{X} \rag - \lag A, X\rag \geq \lag B, \widetilde{X} - X\rag = \lag B, \widetilde{X}\rag \geq 0.
\]
Hence, it suffices to show that $\lag B, \widetilde{X}\rag > 0$ under $B^{(a,b)} > 0$. We achieve this by proving $\lag B, \widetilde{X}\rag = 0$ if and only if $\widetilde{X} = X$.

Suppose $\lag B, \widetilde{X}\rag = 0$ and $B^{(a,b)} > 0$, then $\widetilde{X}^{(a,b)} = 0$ follows from 
\[
\sum_{a=1}^k\sum_{b=1}^k \lag B^{(a,b)}, \widetilde{X}^{(a,b)}\rag =0 \Longleftrightarrow \lag B^{(a,b)}, \widetilde{X}^{(a,b)}\rag = 0.
\]

In other words, the support of $\widetilde{X}$ is contained in that of $X$, i.e., $\widetilde{X}$ is also a block-diagonal matrix. 
Therefore, combined with $\widetilde{X}\varphi = \varphi$, we have $\widetilde{X}^{(a,a)}\varphi_a = \varphi_a$ which means $\widetilde{X}$ has 1 as an eigenvalue with multiplicity $k$. On the other hand, $\Tr(\widetilde{X}) = k$ along with $\widetilde{X}\succeq 0$ implies that each block of $\widetilde{X}$ is rank-1 and must satisfy $\widetilde{X} = X.$
\end{proof}

According to Proposition~\ref{prop:suff}, it suffices to construct $B$ and $Q$ such that $QX = 0$, $Q\succeq 0$, $B^{(a,a)} = 0$, and $B^{(a,b)} > 0$ for all $a\neq b.$ Note that
\begin{equation}\label{def:Q}
Q = \frac{1}{2}(\alpha \varphi^{\top} + \varphi\alpha^{\top}) + zI_N + A -B
\end{equation}
which contains  three unknowns $\alpha$, $z$, and $B.$ In fact, we are able to determine $\alpha$ in terms of $z$ and hence by the following lemma, we express $Q$ explicitly in terms of $z$ and $B$ which can be found in~\eqref{def:Qab} and~\eqref{def:Qaa}. This is made possible by $QX = 0$ which is equivalent to 
\begin{equation}\label{eq:Qab}
Q^{(a,b)}\varphi_b = 0, \qquad \forall a,b.
\end{equation}

\begin{lemma}
Given $QX = 0$ and $Q\succeq 0$, the $a$-th block of $\alpha\in\RR^N$ is determined by
\begin{equation}\label{def:alphaa}
\alpha_a = -\frac{2}{\|\varphi_a\|^2}A^{(a,a)}\varphi_{a} - \frac{1}{\|\varphi_a\|^2}\left(z -  \frac{\varphi_a^{\top}A^{(a,a)}\varphi_{a}}{\|\varphi_a\|^2}\right)\varphi_a.
\end{equation}
\end{lemma}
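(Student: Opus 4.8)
The plan is to exploit the two facts already extracted from complementary slackness, namely $B^{(a,a)}=0$ and $QX=0$, where the latter is equivalent by~\eqref{eq:Qab} to $Q^{(a,b)}\varphi_b=0$ for all $a,b$. Specializing to $b=a$ turns the $(a,a)$-block of the defining relation~\eqref{def:Q} for $Q$ into a single linear equation in the unknown vector $\alpha_a$, which I will solve explicitly. The only mild subtlety is that the scalar $\varphi_a^{\top}\alpha_a$ appearing in that equation is itself determined by a self-consistency relation, rather than being a free parameter.

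First I would read off from~\eqref{def:Q}, using $B^{(a,a)}=0$, that
\[
Q^{(a,a)} = \tfrac12\bigl(\alpha_a\varphi_a^{\top} + \varphi_a\alpha_a^{\top}\bigr) + z\,I_{n_a} + A^{(a,a)} .
\]
Multiplying on the right by $\varphi_a$ and using $Q^{(a,a)}\varphi_a=0$, then writing $c_a:=\varphi_a^{\top}\alpha_a\in\RR$ and $\varphi_a^{\top}\varphi_a=\|\varphi_a\|^2$, this collapses to
\[
\tfrac12\|\varphi_a\|^2\,\alpha_a \;=\; -A^{(a,a)}\varphi_a \;-\; \Bigl(z + \tfrac12 c_a\Bigr)\varphi_a ,
\]
so that $\alpha_a = -\dfrac{2}{\|\varphi_a\|^2}A^{(a,a)}\varphi_a - \dfrac{2}{\|\varphi_a\|^2}\bigl(z+\tfrac12 c_a\bigr)\varphi_a$.

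To pin down $c_a$, I would take the inner product of the last identity with $\varphi_a$, which gives
\[
c_a \;=\; -\frac{2}{\|\varphi_a\|^2}\varphi_a^{\top}A^{(a,a)}\varphi_a \;-\; 2z \;-\; c_a ,
\qquad\text{hence}\qquad
c_a \;=\; -z - \frac{\varphi_a^{\top}A^{(a,a)}\varphi_a}{\|\varphi_a\|^2} .
\]
Substituting $z+\tfrac12 c_a = \tfrac12\bigl(z - \varphi_a^{\top}A^{(a,a)}\varphi_a/\|\varphi_a\|^2\bigr)$ back into the displayed expression for $\alpha_a$ yields exactly~\eqref{def:alphaa}.

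I do not expect any real obstacle here: the argument is a short manipulation of a rank-one perturbation, and even uniqueness is immediate, since the reasoning above determines $\alpha_a$ from $z$, $A^{(a,a)}$ and $\varphi_a$ with no remaining freedom. The point worth stating carefully is simply that $\varphi_a^{\top}\alpha_a$ is forced to equal $-z-\varphi_a^{\top}A^{(a,a)}\varphi_a/\|\varphi_a\|^2$ by the same block equation. The residual freedom in the scalar $z$ and in the off-diagonal blocks $B^{(a,b)}$ ($a\neq b$) is then removed in the subsequent steps, where the constraints $Q\succeq 0$ and $B^{(a,b)}>0$ are used to obtain the explicit block expressions for $Q$ that feed into the Gershgorin-based estimates.
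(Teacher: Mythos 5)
Your proof is correct and follows essentially the same route as the paper's: use $B^{(a,a)}=0$ and $Q^{(a,a)}\varphi_a=0$ to solve for $\alpha_a$ up to the scalar $\varphi_a^{\top}\alpha_a$, then determine that scalar by pairing the resulting identity with $\varphi_a$. The only difference is cosmetic (introducing the name $c_a$ for $\varphi_a^{\top}\alpha_a$), and all the algebra checks out.
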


\begin{proof}
By definition of $Q^{(a,a)}$ in~\eqref{def:Q} and $B^{(a,a)} = 0$, we have
\[
Q^{(a,a)}\varphi_a = \left(\frac{1}{2}(\alpha_a \varphi_a^{\top} + \varphi_a \alpha_a^{\top}) + zI_{n_a}  + A^{(a,a)}\right)\varphi_a = 0.
\]
Solving for $\alpha_a$ gives
\begin{equation}\label{eq:alphaa}
\alpha_a = -\frac{2}{\|\varphi_a\|^2}A^{(a,a)}\varphi_{a} - \frac{1}{\|\varphi_a\|^2}(\alpha_a^{\top}\varphi_a + 2z )\varphi_{a}.
\end{equation}
Multiplying both sides with $\varphi^{\top}_{a}$ from the left gives an expression of $\alpha_a^{\top}\varphi_a$, i.e.,
\[
\alpha_a^{\top}\varphi_{a} = -\frac{2}{\|\varphi_a\|^2}\varphi_a^{\top}A^{(a,a)}\varphi_{a} - (\alpha_a^{\top}\varphi_{a} + 2z ) \Longrightarrow \alpha_a^{\top}\varphi_a  = -\frac{1}{\|\varphi_a\|^2}\varphi_a^{\top}A^{(a,a)}\varphi_{a} - z 
\]
Plugging $\alpha_a^{\top}\varphi_a$ back to~\eqref{eq:alphaa} finishes the proof.
\end{proof}

Once we have $\alpha_a$ in~\eqref{def:alphaa} and substitute it into~\eqref{def:Q}, we have an explicit expression for $Q^{(a,b)}$:
\begin{align}
\begin{split}
Q^{(a,b)} &  = -\left(\frac{A^{(a,a)}\varphi_a\varphi_b^{\top}}{\|\varphi_a\|^2} + \frac{\varphi_a\varphi_b^{\top} A^{(b,b)}}{\|\varphi_b\|^2}\right)  
+ \frac{1}{2}\left( \frac{\varphi_a^{\top} A^{(a,a)}\varphi_a } {\|\varphi_a\|^4} + \frac{ \varphi_b^{\top} A^{(b,b)}\varphi_b} {\|\varphi_b\|^4} \right) \varphi_a\varphi_b^{\top}  \\
& \qquad  - \frac{z}{2}\left( \frac{1}{\|\varphi_a\|^2} + \frac{1}{\|\varphi_b\|^2}\right)\varphi_a\varphi_b^{\top} + (A^{(a,b)} - B^{(a,b)}). \label{def:Qab}
\end{split}
\end{align}
For diagonal blocks $Q^{(a,a)}$,
\begin{align}
Q^{(a,a)} & = \frac{1}{2}(\alpha_a \varphi_a^{\top} + \varphi_a\alpha_a^{\top}) + zI_{n_a} + A^{(a,a)} \nonumber \\
& = -\frac{A^{(a,a)}\varphi_a\varphi_a^{\top} + \varphi_a\varphi_a^{\top} A^{(a,a)}}{\|\varphi_a\|^2} + z\left(I_{n_a} -  \frac{\varphi_a\varphi_a^{\top}}{\|\varphi_a\|^2}\right) + \frac{\varphi_a^{\top} A^{(a,a)}\varphi_a} {\|\varphi_a\|^4}\varphi_a\varphi_a^{\top} + A^{(a,a)} \nonumber \\
& =  \left(I_{n_a} - \frac{ \varphi_a\varphi_a^{\top}}{\|\varphi_a\|^2}\right) (A^{(a,a)} + zI_{n_a})\left(I_{n_a} - \frac{\varphi_a\varphi_a^{\top}}{\|\varphi_a\|}\right). \label{def:Qaa}
\end{align}

Now we summarize the discussion and clarify our goal: we want to pick up $z$ and $B$ such that
\begin{enumerate}
\item $Q^{(a,b)}\varphi_b = 0$ which is equivalent to $QX = 0$. Moreover, with direct computation, one can show that $B^{(a,b)}$ must satisfy
\begin{equation}\label{eq:Bphi}
B^{(a,b)}\varphi_b = \|\varphi_b\|^2 u_{a,b}\in\RR^{n_a}
\end{equation}
where $u_{a,b}$ only depends on $z$ and satisfies
\begin{align}
\begin{split}
u_{a,b} & := \frac{A^{(a,b)}\varphi_{b}}{\|\varphi_b\|^2} - \frac{A^{(a,a)}\varphi_{a}}{\|\varphi_a\|^2} + \frac{1}{2}\left( \frac{\varphi_a^{\top} A^{(a,a)}\varphi_a } {\|\varphi_a\|^4} - \frac{\varphi_b^{\top} A^{(b,b)}\varphi_b} {\|\varphi_b\|^4} \right)\varphi_a \\
& \qquad  - \frac{z}{2}\left( \frac{1}{\|\varphi_a\|^2} + \frac{1}{\|\varphi_b\|^2}\right)\varphi_{a}.  \label{def:u}
\end{split}
\end{align}

\item $Q$ is positive semidefinite, i.e., $Q \succeq 0.$ In fact, we have $Q\in\TB$ since $QX = 0.$
\item $B^{(a,b)} > 0$, $B^{(a,a)} = 0$, and $B$ is symmetric.
\end{enumerate}
Finally, we arrive at the following requirements for the constructions of dual certificate:
\begin{equation}\label{cond:Q}
B^{(a,b)}\varphi_b = \|\varphi_b\|^2 u_{a,b}, \quad  Q_{\TB}\succeq 0, \quad B^{(a,b)} > 0, \quad B^{(a,a)} = 0
\end{equation}
for all $a\neq b$ where $u_{a,b}$ depends on $z$ in~\eqref{def:u}.

Here we choose $B$ as 
\begin{equation}\label{eq:conB}
B^{(a,b)} := u_{a,b}\varphi_b^{\top} + \varphi_a u_{b,a}^{\top} - \frac{\varphi_a^{\top}u_{a,b}}{\|\varphi_a\|^2}\varphi_a\varphi_b^{\top}.
\end{equation} 

Note that $B$ whose $(a,b)$-block satisfies~\eqref{eq:conB} belongs to $T$ in~\eqref{def:T} since $B_{\TB} = 0$. This fact significantly simplifies our argument later. Moreover,
Lemma~\ref{lem:Bsym} implies the $B$ satisfies~\eqref{eq:Bphi} and $B$ is also symmetric. 

Now it is easy to see that there is~\emph{only one} variable $z$ to be determined since $B$ is a function of $z$, as implied by~\eqref{eq:conB} and~\eqref{def:u}.  Next, we will prove that a choice of $z$ exists such that $B \geq 0$ and $Q_{\TB}\succeq 0$ hold simultaneously under the assumptions in Theorem~\ref{thm:main} and~\ref{thm:main2}. Combining all those results together finishes our proof. 

\begin{lemma}\label{lem:Bsym}
The matrix $B$ is symmetric  and  satisfies~\eqref{eq:Bphi} under the construction of $B$ in~\eqref{eq:conB}.
\end{lemma}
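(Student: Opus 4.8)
The plan is to reduce both assertions to a single scalar identity and then obtain each claim by inspection. Recall that the diagonal blocks are taken to be $B^{(a,a)}=0$ (hence trivially symmetric), so only the off-diagonal blocks defined by \eqref{eq:conB} are at issue. The identity in question is
\[
\frac{\varphi_a^{\top}u_{a,b}}{\|\varphi_a\|^2}=\frac{\varphi_b^{\top}u_{b,a}}{\|\varphi_b\|^2}\qquad\text{for all }a\neq b .
\]

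First I would show that this identity implies both conclusions. Transposing \eqref{eq:conB} gives $(B^{(a,b)})^{\top}=\varphi_b u_{a,b}^{\top}+u_{b,a}\varphi_a^{\top}-\bigl(\varphi_a^{\top}u_{a,b}/\|\varphi_a\|^2\bigr)\varphi_b\varphi_a^{\top}$, whereas interchanging $(a,b)$ with $(b,a)$ in \eqref{eq:conB} gives $B^{(b,a)}=u_{b,a}\varphi_a^{\top}+\varphi_b u_{a,b}^{\top}-\bigl(\varphi_b^{\top}u_{b,a}/\|\varphi_b\|^2\bigr)\varphi_b\varphi_a^{\top}$; subtracting, $(B^{(a,b)})^{\top}-B^{(b,a)}$ is a scalar multiple of $\varphi_b\varphi_a^{\top}$ that vanishes exactly under the identity, so $B=B^{\top}$. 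Similarly, contracting \eqref{eq:conB} against $\varphi_b$ and using $\varphi_b^{\top}\varphi_b=\|\varphi_b\|^2$,
\[
B^{(a,b)}\varphi_b=\|\varphi_b\|^2 u_{a,b}+\bigl(\varphi_b^{\top}u_{b,a}\bigr)\varphi_a-\|\varphi_b\|^2\,\frac{\varphi_a^{\top}u_{a,b}}{\|\varphi_a\|^2}\,\varphi_a ,
\]
and the last two terms cancel precisely under the identity, which leaves \eqref{eq:Bphi}.

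It then remains to prove the identity, which is a short computation. Substituting the definition \eqref{def:u} of $u_{a,b}$ and contracting with $\varphi_a^{\top}$, the terms carrying $A^{(a,a)}$ partially combine and one is left with
\[
\frac{\varphi_a^{\top}u_{a,b}}{\|\varphi_a\|^2}=\frac{\varphi_a^{\top}A^{(a,b)}\varphi_b}{\|\varphi_a\|^2\|\varphi_b\|^2}-\frac{\varphi_a^{\top}A^{(a,a)}\varphi_a}{2\|\varphi_a\|^4}-\frac{\varphi_b^{\top}A^{(b,b)}\varphi_b}{2\|\varphi_b\|^4}-\frac{z}{2}\Bigl(\frac{1}{\|\varphi_a\|^2}+\frac{1}{\|\varphi_b\|^2}\Bigr).
\]
Every term on the right is unchanged when $a$ and $b$ are interchanged: this is immediate for the last three, and for the first one uses that $A$ (equal to $L$ or $L_{\sym}$, hence symmetric) gives $\varphi_a^{\top}A^{(a,b)}\varphi_b=\varphi_b^{\top}A^{(b,a)}\varphi_a$, since the quantity is a scalar. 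Hence the right-hand side is symmetric in $a,b$, which is exactly the identity. The only point requiring care is the bookkeeping of the powers of $\|\varphi_a\|$ in this last contraction; there is no conceptual obstacle, as the symmetry of $A$ is the sole structural ingredient.
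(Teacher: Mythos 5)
Your proposal is correct and follows essentially the same route as the paper: both reduce symmetry and \eqref{eq:Bphi} to the single scalar identity $\varphi_a^{\top}u_{a,b}/\|\varphi_a\|^2=\varphi_b^{\top}u_{b,a}/\|\varphi_b\|^2$, compute that quantity explicitly from \eqref{def:u}, and observe that the only term needing care is $\varphi_a^{\top}A^{(a,b)}\varphi_b=\varphi_b^{\top}A^{(b,a)}\varphi_a$. Your justification of that last equality directly from the symmetry of $A$ is marginally slicker than the paper's case-by-case reduction to $1_{n_a}^{\top}L^{(a,b)}1_{n_b}$, but the argument is the same.
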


\begin{proof}[{\bf Proof of Lemma~\ref{lem:Bsym}}]
By the construction of $B^{(a,b)}$ in~\eqref{eq:conB}, we have
\[
B^{(a,b)}\varphi_b = \|\varphi_b\|^2 u_{a,b} + \|\varphi_b\|^2 \left(  \frac{u_{b,a}^{\top}\varphi_b}{\|\varphi_b\|^2}  - \frac{\varphi_a^{\top}u_{a,b}}{\|\varphi_a\|^2} \right) \varphi_a.
\]
Then $B^{(a,b)}$ satisfies~\eqref{eq:Bphi} if the second term above vanishes and thus it suffices to show
\begin{equation}\label{eq:unorm}
\frac{u_{b,a}^{\top}\varphi_b}{\|\varphi_b\|^2} = \frac{u_{a,b}^{\top}\varphi_a}{\|\varphi_a\|^2}, \quad \forall a\neq b.
\end{equation}
With $u_{a,b}$ in~\eqref{def:u}, direct computations gives
\begin{align*}
\frac{u_{a,b}^{\top}\varphi_a}{\|\varphi_a\|^2} & = \frac{\varphi_a^{\top}A^{(a,b)}\varphi_b}{\|\varphi_a\|^2\|\varphi_b\|^2} - \frac{1}{2}\left( \frac{\varphi_a^{\top} A^{(a,a)}\varphi_a } {\|\varphi_a\|^4} + \frac{\varphi_b^{\top} A^{(b,b)}\varphi_b} {\|\varphi_b\|^4} \right) - \frac{z}{2}\left( \frac{1}{\|\varphi_a\|^2} + \frac{1}{\|\varphi_b\|^2}\right).
\end{align*}
Note that $\frac{u_{b,a}^{\top}\varphi_b}{\|\varphi_b\|^2}$ and $\frac{u_{a,b}^{\top}\varphi_a}{\|\varphi_a\|^2}$ have the same terms except their first terms $\varphi_a^{\top}A^{(a,b)}\varphi_b$ and $\varphi_b^{\top}A^{(b,a)}\varphi_a$.
By the definition of $\varphi_a$ and $A^{(a,b)}$ in~\eqref{def:r-varphi} and~\eqref{def:n-varphi}, we have
\[
\varphi_a^{\top}A^{(a,b)}\varphi_b = 
\begin{cases}
1_{n_a}^{\top}L^{(a,b)}1_{n_b}, & \text{ if }A = L, \\
1_{n_a}^{\top}(D^{(a,a)})^{\frac{1}{2}} L_{\sym}^{(a,b)}  (D^{(b,b)})^{\frac{1}{2}}1_{n_b} = 1_{n_a}^{\top}L^{(a,b)}1_{n_b}, &  \text{ if } A = L_{\sym},
\end{cases}
\]
where $L^{(a,b)}_{\sym} = (D^{(a,a)})^{-\frac{1}{2}}L^{(a,b)}(D^{(b,b)})^{-\frac{1}{2}}.$
Since $L^{(a,b)} = -W^{(a,b)}$ and $W^{(a,b)} = (W^{(b,a)})^{\top}$, it holds that
\[
\varphi_a^{\top}A^{(a,b)}\varphi_b = \varphi_b^{\top}A^{(b,a)}\varphi_a, \quad a\neq b.
\]
Then we have~\eqref{eq:unorm} and $B$ satisfies~\eqref{eq:Bphi}. 
The proof of the symmetry of $B$ is straightforward by using~\eqref{eq:unorm}.

\end{proof}

By substituting $u_{a,b}$ into~\eqref{eq:conB}, we have the expression for $B^{(a,b)}$ with $a\neq b$,
\begin{align}
\begin{split}\label{eq:conB-ex}
B^{(a,b)} & = \frac{\varphi_{a}\varphi_{a}^{\top}A^{(a,b)}}{\|\varphi_a\|^2}  +  \frac{A^{(a,b)}\varphi_{b}\varphi_b^{\top}}{\|\varphi_b\|^2} - \frac{A^{(a,a)}\varphi_{a}\varphi_b^{\top}}{\|\varphi_a\|^2} - \frac{\varphi_a \varphi_b^{\top}A^{(b,b)}}{\|\varphi_b\|^2}  - \frac{z}{2}\left( \frac{1}{\|\varphi_a\|^2} + \frac{1}{\|\varphi_b\|^2}\right)\varphi_{a}\varphi_b^{\top} \\
& \qquad -\left( \frac{\varphi_a^{\top}A^{(a,b)}\varphi_b}{\|\varphi_a\|^2 \|\varphi_b\|^2} - \frac{1}{2}\left( \frac{\varphi_a^{\top} A^{(a,a)}\varphi_a } {\|\varphi_a\|^4} + \frac{\varphi_b^{\top} A^{(b,b)}\varphi_b} {\|\varphi_b\|^4} \right) \right)\varphi_a\varphi_b^{\top}
\end{split}
\end{align}
by using $A^{(a,b)} = (A^{(b,a)})^{\top}.$

\subsubsection{Proof of Theorem~\ref{thm:main}}

\begin{proposition}[{\bf Proof of Theorem~\ref{thm:main}}]\label{prop:z}
Assume we have
\[
\|D_{\delta}\| < \frac{\min_{1\leq a\leq k} \lambda_2(L^{(a,a)}_{\iso})}{4} = \frac{\lambda_{k+1}(L_{\iso})}{4}
\]
and $B$ is chosen in~\eqref{eq:conB}. There exists a choice of $z$ between $-\lambda_{k+1}(L_{\iso})$ and $-4\|D_{\delta}\|$ such that for all $a\neq b$,
\begin{equation}
B^{(a,b)} > 0, \quad Q_{\TB}\succeq 0,
\end{equation}
hold simultaneously which certifies $X_{\rcut}$ as the unique global minimizer to~\eqref{prog:primal}.
\end{proposition}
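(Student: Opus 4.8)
The plan is to apply the general dual-certificate construction of the previous subsection at the RatioCut parameters $\varphi = 1_N$, $\varphi_a = 1_{n_a}$, $A = L$, and to produce a single scalar $z$ for which the pair $(Q,B)$ built from it satisfies the hypotheses of Proposition~\ref{prop:suff}. Since $B$ has already been fixed by~\eqref{eq:conB} --- which, by Lemma~\ref{lem:Bsym}, is symmetric, lies in $T$, has $B^{(a,a)} = 0$, and makes $QX = 0$ hold automatically --- the only things left to verify for a candidate $z$ are $Q \succeq 0$ (equivalently $Q_{\TB} \succeq 0$, since $QX = 0$ forces $Q \in \TB$) and the strict entrywise positivity of every off-diagonal block $B^{(a,b)}$. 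I will show the first holds whenever $z \ge -\lambda_{k+1}(L_{\iso})$ and the second whenever $z < -4\|D_{\delta}\|$; the hypothesis $4\|D_{\delta}\| < \lambda_{k+1}(L_{\iso})$ then makes the interval $[-\lambda_{k+1}(L_{\iso}),\,-4\|D_{\delta}\|)$ nonempty, and any $z$ in it (for instance $z = -\lambda_{k+1}(L_{\iso})$) does the job, after which Proposition~\ref{prop:suff} delivers $X_{\rcut}$ as the unique global minimizer of~\eqref{prog:primal}.

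For $Q \succeq 0$: using $X\varphi = \varphi$ so that $(I_N - X)\varphi = 0$, $B \in T$ so that $(I_N - X)B(I_N - X) = 0$, and $QX = XQ = 0$, the explicit expression for $Q$ collapses to $Q = z(I_N - X) + (I_N - X)L(I_N - X)$. Decomposing $L = L_{\iso} + L_{\delta}$ and using $L_{\iso}X = XL_{\iso} = 0$ gives $(I_N - X)L(I_N - X) = L_{\iso} + (I_N - X)L_{\delta}(I_N - X) \succeq L_{\iso}$, since $L_{\delta}$ is a graph Laplacian and hence positive semidefinite. On $\Ran(X) = \Null(L_{\iso})$ the matrix $Q$ vanishes, while on the orthogonal complement $L_{\iso} \succeq \lambda_{k+1}(L_{\iso})(I_N - X)$ by the variational characterization of eigenvalues, so $Q \succeq (z + \lambda_{k+1}(L_{\iso}))(I_N - X)$, which is positive semidefinite as soon as $z \ge -\lambda_{k+1}(L_{\iso})$. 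Note this step uses only $L_{\delta}\succeq 0$, so no $\|D_{\delta}\|$-term enters here.

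For $B^{(a,b)} > 0$ with $a \ne b$: specializing~\eqref{def:u} and~\eqref{eq:conB} to $\varphi_a = 1_{n_a}$, $L^{(a,b)} = -W^{(a,b)}$, $L^{(a,a)}1_{n_a} = D_{\delta}^{(a,a)}1_{n_a}$, and expanding $B^{(a,b)}_{ij} = (u_{a,b})_i + (u_{b,a})_j - n_a^{-1}\,1_{n_a}^{\top}u_{a,b}$, one checks that the cluster-level $\cut$ contributions cancel and
\[
B^{(a,b)}_{ij} = -\frac{d^{(a\to b)}_i + d^{\delta}_j}{n_b} - \frac{d^{\delta}_i + d^{(b\to a)}_j}{n_a} - \frac{z}{2}\left(\frac1{n_a} + \frac1{n_b}\right) + \rho_{ab},
\]
where $d^{\delta}_i$ is the inter-cluster degree of vertex $i$, $d^{(a\to b)}_i \le d^{\delta}_i$ is its degree into $\Gamma_b$, and $\rho_{ab} \ge 0$ collects the nonnegative terms $w(\Gamma_a,\Gamma_b)/(n_an_b)$ and $\cut(\Gamma_c,\Gamma_c^c)/(2n_c^2)$. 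Since each inter-cluster degree is at most $\|D_{\delta}\|$ (the largest diagonal entry of $D_{\delta}$), this gives $B^{(a,b)}_{ij} \ge (\frac1{n_a} + \frac1{n_b})(-2\|D_{\delta}\| - \frac z2)$, which is strictly positive exactly when $z < -4\|D_{\delta}\|$.

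The only step demanding genuine care rather than bookkeeping is the expansion of $B^{(a,b)}$, together with the observation that everything except the inter-cluster-degree terms is automatically nonnegative, so that the negative slack $-z/2$ only has to beat $2\|D_{\delta}\|$ (this is where the constant $4$ in~\eqref{mainbound} originates). Everything else reduces to the clean collapse $Q = z(I_N - X) + (I_N - X)L(I_N - X)$ and the positive semidefiniteness of $L_{\iso}$ and $L_{\delta}$; combining the two ranges for $z$ with the hypothesis $4\|D_{\delta}\| < \lambda_{k+1}(L_{\iso})$ completes the argument.
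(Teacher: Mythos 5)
Your proposal is correct and follows essentially the same route as the paper: the same two thresholds on $z$ (entrywise positivity of $B^{(a,b)}$ for $z<-4\|D_{\delta}\|$ via bounding every inter-cluster degree by $\|D_{\delta}\|$, and $Q_{\TB}\succeq 0$ for $z\geq -\lambda_{k+1}(L_{\iso})$ via $B\in T$, the collapse $Q_{\TB}=(L+zI_N)_{\TB}$, and $L=L_{\iso}+L_{\delta}$ with $L_{\delta}\succeq 0$). The only differences are presentational — you carry out the $B^{(a,b)}$ bound entrywise and phrase the PSD step as a matrix inequality rather than through quadratic forms $v^{\top}Qv$ — so no further comparison is needed.
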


\begin{proof}
In this RatioCut case, we set
\[
\varphi_a = 1_{n_a}, \quad A = L, 
\]
according to~\eqref{def:r-varphi}. In this case, $B^{(a,b)}$ is in the form of
\begin{align}
\begin{split}\label{eq:conB-ex1}
B^{(a,b)} & = \frac{1_{n_a}1_{n_a}^{\top}L^{(a,b)}}{n_a} +  \frac{L^{(a,b)}1_{n_b}1_{n_b}^{\top}}{n_b}   - \frac{L^{(a,a)}1_{n_a}1_{n_b}^{\top}}{n_a} - \frac{1_{n_a} 1_{n_b}^{\top}L^{(b,b)}}{n_b} - \frac{z}{2}\left( \frac{1}{n_a} + \frac{1}{n_b}\right)1_{n_a}1_{n_b}^{\top} \\
& \qquad -\left( \frac{1_{n_a}^{\top}L^{(a,b)}1_{n_b}}{n_an_b} - \frac{1}{2}\left( \frac{1_{n_a}^{\top} L^{(a,a)}1_{n_a} } {n_a^2} + \frac{1_{n_b}^{\top} L^{(b,b)}1_{n_b}} {n_b^2} \right) \right)1_{n_a}1_{n_b}^{\top}.
\end{split}
\end{align}
The proof proceeds in two steps: we first give a bound for $z$ such that $B^{(a,b)}  > 0$ hold and  $Q_{\TB}\succeq 0$ respectively; then we show such a parameter $z$ exists under the assumption of Theorem~\ref{thm:main}.

\paragraph{\bf Step 1: A sufficient condition for $B^{(a,b)} > 0$.}

We claim that if $z < -4\|D_{\delta}\|$, then $B^{(a,b)} > 0$.
By the definition of $L^{(a,b)}$, we have $L^{(a,b)} = -W^{(a,b)}$ and 
\begin{equation}\label{eq:Laa1}
L^{(a,a)}1_{n_a} = -\sum_{l\neq a} L^{(a,l)}1_{n_l} = \sum_{l\neq a}W^{(a,l)}1_{n_l}
\end{equation}
which follows from $L1_N = 0.$
Due to the nonnegativity of $W^{(a,b)}$, it holds that
\begin{equation}\label{eq:Wab1}
0\leq L^{(a,a)}1_{n_a} = \sum_{l\neq a}W^{(a,l)}1_{n_l} \leq \|D_{\delta}\| 1_{n_a}.
\end{equation}
Therefore, we get lower bounds for both $L^{(a,b)}1_{n_b}$ and $-L^{(a,a)}1_{n_a} $ as follows:
\begin{equation}\label{eq:B1-b}
L^{(a,b)}1_{n_b}=  -W^{(a,b)}1_{n_b}  \geq -\|D_{\delta}\| 1_{n_a}, \quad -L^{(a,a)}1_{n_a} = -\sum_{l\neq a}W^{(a,l)}1_{n_l} \geq -\|D_{\delta}\|1_{n_a}.
\end{equation}
Naturally, we also have
\begin{align}\label{eq:B1-c}
1_{n_a}^{\top}L^{(a,a)}1_{n_a} \geq 0, \qquad 1_{n_a}^{\top}L^{(a,b)}1_{n_b} = -1_{n_a}^{\top}W^{(a,b)}1_{n_b} \leq 0,
\end{align}
and thus the last term in~\eqref{eq:conB-ex1} is non-positive. Now applying~\eqref{eq:B1-b} and~\eqref{eq:B1-c} to~\eqref{eq:conB-ex1} results in
\[
B^{(a,b)} \geq -2\|D_{\delta}\|\left( \frac{1}{n_a} + \frac{1}{n_b}\right) 1_{n_a}1_{n_b}^{\top} - \frac{z}{2} \left( \frac{1}{n_a} + \frac{1}{n_b}\right) 1_{n_a}1_{n_b}^{\top} > 0
\]
if $z < -4\|D_{\delta}\|.$

\paragraph{\bf Step 2: A sufficient condition for $Q_{\TB}\succeq 0$.}

The equations $B^{(a,b)}1_{n_b} = n_b u_{a,b}$ guarantee $Q\in\TB$. Now we will show  $Q_{\TB}\succeq 0$ if
$z \geq- \min_{1\leq a\leq k}\lambda_2(L^{(a,a)}_{\iso}) $.
First we project $Q$ to $\TB$ and then each projected $Q^{(a,b)}$ obeys
\begin{align*}
Q^{(a,b)}_{T^{\bot}} & = \left(I_{n_a} - \frac{J_{n_a\times n_a}}{n_a}\right) (L^{(a,b)} - B^{(a,b)})\left(I_{n_b} - \frac{J_{n_b\times n_b}}{n_b}\right), \quad a\neq b, \\
Q_{\TB}^{(a,a)} & = \left(I_{n_a} - \frac{J_{n_a\times n_a}}{n_a}\right) (L^{(a,a)} + zI_{n_a})\left(I_{n_a} - \frac{J_{n_a\times n_a}}{n_a}\right), \quad a = b,
\end{align*}
which follow from the expression of $Q^{(a,b)}$ in~\eqref{def:Qab} and~\eqref{def:Qaa}.
Note that $B$ in~\eqref{eq:conB} is inside the subspace $T$ and hence $B_{\TB} = 0.$ As a result, we have
\[
Q_{\TB} = ( L + zI_N - B )_{\TB} = (L + zI_N)_{\TB}.
\]

Let $v\in\RR^N$ be a unit vector in $\TB$ and that means $Xv = 0\in\RR^k$, i.e., $1_{n_a}^{\top}v_a = 0$ for $1\leq a\leq k$ where $v_a$ is the $a$-th block of $v$. We aim to prove $v^{\top}Qv\geq 0$ for all such $v\in\RR^N$ and $\|v\| = 1$. There hold
\begin{align}\label{eq:Qlow1}
\begin{split}
v^{\top}Qv & = v^{\top}Q_{\TB}v = v^{\top}(L + zI_N )v = v^{\top}(L_{\iso} + L_{\delta})v  + z \\
& \geq \sum_{1\leq a\leq k } v_a^{\top}L^{(a,a)}_{\iso} v_a  + z  \geq \min_{1\leq a\leq k} \lambda_2(L_{\iso}^{(a,a)}) + z
\end{split}
\end{align}
where $L = L_{\iso} + L_{\delta}$ follows from~\eqref{def:delta} and $L_{\delta} = D_{\delta} - W_{\delta}\succeq 0$ since $L_{\delta}$ is a graph Laplacian. The last inequality in~\eqref{eq:Qlow1} is guaranteed by the variational characterization of the second smallest eigenvalue of symmetric matrices.
Therefore, $Q_{\TB}\succeq 0$ if
\[
z \geq -\min_{1\leq a\leq k} \lambda_2(L_{\iso}^{(a,a)}).
\]
Combining $z< -4\|D_{\delta}\|$ with $z\geq  -\min_{1\leq a\leq k} \lambda_2(L_{\iso}^{(a,a)})$, such a parameter $z$ exists for~\eqref{cond:Q} if 
\[
\|D_{\delta}\| < \frac{\min_{1\leq a\leq k} \lambda_2(L^{(a,a)}_{\iso})}{4}.
\]
\end{proof}

\subsubsection{Proof of Theorem~\ref{thm:main2}}
\begin{proposition}[{\bf Proof of Theorem~\ref{thm:main2}}]\label{prop:z2}
Assume we have
\[
\frac{\|P_{\delta}\|_{\infty}}{1 - \|P_{\delta}\|_{\infty}} < \frac{\min_{1\leq a\leq k} \lambda_2((D^{(a,a)}_{\iso})^{-1}L^{(a,a)}_{\iso})}{4 }
\]
and $B$ is chosen in~\eqref{eq:conB}.
Then there exists a choice of $z$  such that for all $a\neq b$,
\begin{equation}
B^{(a,b)} > 0, \quad Q_{\TB}\succeq 0,
\end{equation}
hold simultaneously which certifies $X_{\ncut}$ as the unique global minimizer to~\eqref{prog:primal-ncut}.
\end{proposition}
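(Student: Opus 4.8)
The plan is to run the argument of Proposition~\ref{prop:z} with the NCut data from~\eqref{def:n-varphi}, i.e.\ $\varphi_a = (D^{(a,a)})^{1/2}1_{n_a}$ and $A = L_{\sym}$, with the roles played in the RatioCut case by $\|D_\delta\|$ and $\lambda_{k+1}(L_{\iso})$ now taken over by $\|P_\delta\|_\infty$ and $\lambda_{k+1}(L_{\rw,\iso}) = \min_a\lambda_2(L_{\rw,\iso}^{(a,a)})$. As before, take $B$ as in~\eqref{eq:conB}, so that $B$ lies in $T$, $B_{\TB} = 0$, and hence $Q_{\TB} = (L_{\sym}+zI_N)_{\TB}$; it then suffices to exhibit one scalar $z$ for which $B^{(a,b)} > 0$ for all $a\ne b$ and $Q_{\TB}\succeq 0$ hold simultaneously, after which Proposition~\ref{prop:suff} gives the conclusion. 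First I would record the ``NCut dictionary'': using $L_{\sym}^{(a,b)} = (D^{(a,a)})^{-1/2}L^{(a,b)}(D^{(b,b)})^{-1/2}$, $L^{(a,b)} = -W^{(a,b)}$, $L1_N = 0$ (so $L^{(a,a)}1_{n_a} = D_\delta^{(a,a)}1_{n_a}$, the inter-cluster degrees of $\Gamma_a$) and $\|\varphi_a\|^2 = \Vol(\Gamma_a)$, one gets $A^{(a,b)}\varphi_b = -(D^{(a,a)})^{-1/2}W^{(a,b)}1_{n_b}\le 0$, $A^{(a,a)}\varphi_a = (D^{(a,a)})^{-1/2}D_\delta^{(a,a)}1_{n_a}\ge 0$, $\varphi_a^\top A^{(a,a)}\varphi_a = 1_{n_a}^\top L^{(a,a)}1_{n_a} = \cut(\Gamma_a,\Gamma_a^c)\ge 0$ and $\varphi_a^\top A^{(a,b)}\varphi_b = -1_{n_a}^\top W^{(a,b)}1_{n_b}\le 0$. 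The one quantitative ingredient beyond this is the elementary degree comparison $(D_\delta)_{ii}\le \|P_\delta\|_\infty\, d_i$ for all $i$, equivalently $D_\delta^{(a,a)}\preceq \rho\, D_{\iso}^{(a,a)}$ with $\rho := \|P_\delta\|_\infty/(1-\|P_\delta\|_\infty)$, which converts the one-step escape probability of~\eqref{mainbound2} into a comparison of inter- and within-cluster degrees.

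Step 1 (entrywise positivity of $B$): substitute the dictionary into the explicit formula~\eqref{eq:conB-ex}. Each term of $B^{(a,b)}$ carries the rank-one factor $\varphi_a\varphi_b^\top$, whose $(i,j)$ entry is $\sqrt{d_id_j}>0$; the scalar-correction term in~\eqref{eq:conB-ex} turns out nonnegative after the leading minus sign, so it only helps; and each of the four remaining linear terms is, after dividing by $\sqrt{d_id_j}$, bounded below by $-\|P_\delta\|_\infty$ times $1/\Vol(\Gamma_a)$ or $1/\Vol(\Gamma_b)$, using nonnegativity of $W$ and $(D_\delta)_{ii}\le\|P_\delta\|_\infty d_i$. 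The term $-\tfrac z2\bigl(\tfrac1{\Vol(\Gamma_a)}+\tfrac1{\Vol(\Gamma_b)}\bigr)\varphi_a\varphi_b^\top$ then overwhelms all of these once $z$ is below a negative threshold of order $\|P_\delta\|_\infty$, in complete analogy with the bound $z < -4\|D_\delta\|$ obtained in Proposition~\ref{prop:z}.

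Step 2 (positive semidefiniteness of $Q_{\TB}$): since $Q_{\TB} = (L_{\sym}+zI_N)_{\TB}$, it is enough to show $v^\top L_{\sym}v \ge -z$ for every unit $v$ with $Xv = 0$, i.e.\ $1_{n_a}^\top(D^{(a,a)})^{1/2}v_a = 0$ for all $a$. Writing $v = D^{1/2}w$ and using $L = L_{\iso}+L_\delta$ with $L_\delta\succeq 0$ gives $v^\top L_{\sym}v = w^\top Lw \ge \sum_a w_a^\top L_{\iso}^{(a,a)}w_a$. The obstacle is a mismatch of weights: the feasibility constraint places $w_a = (D^{(a,a)})^{-1/2}v_a$ in the \emph{$D^{(a,a)}$}-weighted orthogonal complement of $1_{n_a}$, whereas the spectral gap we want to exploit is $\lambda_2(L_{\rw,\iso}^{(a,a)}) = \lambda_2((D_{\iso}^{(a,a)})^{-1}L_{\iso}^{(a,a)})$, the infimum of $w^\top L_{\iso}^{(a,a)}w/(w^\top D_{\iso}^{(a,a)}w)$ over $w$ with $1_{n_a}^\top D_{\iso}^{(a,a)}w = 0$. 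To bridge this I would split $w_a = c_a1_{n_a} + w_a^\perp$ with $1_{n_a}^\top D_{\iso}^{(a,a)}w_a^\perp = 0$, use the feasibility constraint to express $c_a$ through $1_{n_a}^\top D_\delta^{(a,a)}w_a$, control $c_a$ by Cauchy--Schwarz together with $D_\delta^{(a,a)}\preceq\rho D_{\iso}^{(a,a)}$, and combine with $w_a^\top D_{\iso}^{(a,a)}w_a = \|v_a\|^2 - w_a^\top D_\delta^{(a,a)}w_a \ge (1-\|P_\delta\|_\infty)\|v_a\|^2$. Since $L_{\iso}^{(a,a)}1_{n_a} = 0$, this yields $w_a^\top L_{\iso}^{(a,a)}w_a = (w_a^\perp)^\top L_{\iso}^{(a,a)}w_a^\perp \ge \lambda_2(L_{\rw,\iso}^{(a,a)})(w_a^\perp)^\top D_{\iso}^{(a,a)}w_a^\perp$, and after the bookkeeping (here one must track the lower-order constants carefully) one arrives at $v^\top L_{\sym}v \ge \lambda_{k+1}(L_{\rw,\iso})(1-\|P_\delta\|_\infty)$, hence $Q_{\TB}\succeq 0$ whenever $z$ is at least a negative threshold of order $-\lambda_{k+1}(L_{\rw,\iso})(1-\|P_\delta\|_\infty)$.

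The two windows for $z$ from Steps 1 and 2 overlap precisely when $\|P_\delta\|_\infty$ is small compared with $\lambda_{k+1}(L_{\rw,\iso})(1-\|P_\delta\|_\infty)$; dividing through by $1-\|P_\delta\|_\infty$, this is exactly~\eqref{mainbound2}, $\rho = \|P_\delta\|_\infty/(1-\|P_\delta\|_\infty) < \lambda_{k+1}(L_{\rw,\iso})/4$. Picking any such $z$ produces a valid dual pair $(Q,B)$, and Proposition~\ref{prop:suff} then certifies $X_{\ncut}$ as the unique global minimizer of~\eqref{prog:primal-ncut}. I expect Step 2 to be the main difficulty: it is precisely the transfer from the $D$-weighted geometry of the feasible set (and of $\TB$) to the $D_{\iso}$-weighted Rayleigh quotient carrying the algebraic connectivity of each isolated cluster that forces the factor $1-\|P_\delta\|_\infty$, and hence the denominator in~\eqref{mainbound2}; Step 1 is routine in spirit but requires careful entrywise bookkeeping through the six terms of~\eqref{eq:conB-ex}.
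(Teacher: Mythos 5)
Your proposal follows the paper's architecture exactly in its skeleton: the same dual pair $(Q,B)$ with $B$ from~\eqref{eq:conB} so that $B\in T$ and $Q_{\TB}=(L_{\sym}+zI_N)_{\TB}$, the same Step 1 (conjugate by $(D^{(a,a)})^{-1/2}$ and $(D^{(b,b)})^{-1/2}$, bound the four linear terms by $-\|P_{\delta}\|_{\infty}$ using $(D^{(a,a)})^{-1}W^{(a,b)}1_{n_b}\le \|P_{\delta}\|_{\infty}1_{n_a}$ and $(D^{(a,a)})^{-1}L^{(a,a)}1_{n_a}\le \|P_{\delta}\|_{\infty}1_{n_a}$, discard the nonnegative scalar-correction term, conclude for $z<-4\|P_{\delta}\|_{\infty}$), the same target inequality $v^{\top}L_{\sym}v\ge (1-\|P_{\delta}\|_{\infty})\min_a\lambda_2(L_{\rw,\iso}^{(a,a)})$ for unit $v\in\TB$ in Step 2, and the same overlap-of-windows argument recovering~\eqref{mainbound2}. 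The one genuine divergence is how you prove the Step 2 claim. The paper converts $\lambda_2((D^{(a,a)})^{-1/2}L_{\iso}^{(a,a)}(D^{(a,a)})^{-1/2})$ into $\lambda_2((L_{\iso}^{(a,a)})^{1/2}(D^{(a,a)})^{-1}(L_{\iso}^{(a,a)})^{1/2})$ via the $SS^{\top}$ versus $S^{\top}S$ trick and then inserts $(D^{(a,a)})^{-1}D_{\iso}^{(a,a)}\succeq(1-\|P_{\delta}\|_{\infty})I_{n_a}$ between the two square-root factors; you instead argue through generalized Rayleigh quotients and the explicit splitting $w_a=c_a1_{n_a}+w_a^{\perp}$. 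Both routes isolate the same mechanism (the mismatch between the $D$-weighted feasibility constraint and the $D_{\iso}$-weighted spectral gap), and the paper's version is the cleaner of the two.

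One caution on your Step 2 bookkeeping: the route you describe (solve for $c_a$ from $1_{n_a}^{\top}D^{(a,a)}w_a=0$, Cauchy--Schwarz against $D_{\delta}^{(a,a)}$, combine with $w_a^{\top}D_{\iso}^{(a,a)}w_a\ge(1-\|P_{\delta}\|_{\infty})\|v_a\|^2$) most naturally yields $(w_a^{\perp})^{\top}D_{\iso}^{(a,a)}w_a^{\perp}\ge \frac{(1-s)^2}{1-s+s^2}\|v_a\|^2$ with $s:=\|P_{\delta}\|_{\infty}$, which is strictly smaller than $(1-s)\|v_a\|^2$, so the window for $z$ would not quite close under the stated hypothesis~\eqref{mainbound2}. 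The fix is immediate and stays within your framework: since $u\mapsto u^{\top}L_{\iso}^{(a,a)}u$ and both weighted squared distances to $\mathrm{span}(1_{n_a})$ are invariant under $u\mapsto u+c1_{n_a}$, one has $(w_a^{\perp})^{\top}D_{\iso}^{(a,a)}w_a^{\perp}=\min_c(w_a-c1_{n_a})^{\top}D_{\iso}^{(a,a)}(w_a-c1_{n_a})\ge(1-s)\min_c(w_a-c1_{n_a})^{\top}D^{(a,a)}(w_a-c1_{n_a})=(1-s)w_a^{\top}D^{(a,a)}w_a=(1-s)\|v_a\|^2$, the last equality because the constraint $1_{n_a}^{\top}D^{(a,a)}w_a=0$ forces the minimizing $c$ to be $0$. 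With that replacement your argument delivers exactly the paper's constant and the conclusion follows.
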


\begin{proof}
In the case of normalized cuts, we pick $\varphi_a$ and $A$ according to~\eqref{def:n-varphi}, 
\[
\varphi_a = (D^{(a,a)})^{\frac{1}{2}}1_{n_a}, \quad A = L_{\sym}
\]
where $\|\varphi_a\|^2 = 1_{n_a}^{\top}D^{(a,a)}1_{n_a} = \Vol(\Gamma_a).$ 


\paragraph{\bf Step 1: Proof of $B^{(a,b)} > 0$.}

 Note that the diagonal entries of degree matrix $D$ are strictly positive, and we only need to prove $(D^{(a,a)})^{-\frac{1}{2}}B^{(a,b)}(D^{(b,b)})^{-\frac{1}{2}} > 0$ which is true if $z< -4\|P_{\delta}\|_{\infty}.$ By
using $A^{(a,b)} = L^{(a,b)}_{\sym} = (D^{(a,a)})^{-\frac{1}{2}}L^{(a,b)}(D^{(b,b)})^{-\frac{1}{2}}$, $\varphi_a = (D^{(a,a)})^{\frac{1}{2}}1_{n_a}$, and~\eqref{eq:conB-ex}, we have
\begin{align}
\begin{split}\label{eq:conB-ex2}
(D^{(a,a)})^{-\frac{1}{2}}B^{(a,b)}(D^{(b,b)})^{-\frac{1}{2}} 
& = \frac{1_{n_a}1_{n_a}^{\top}L^{(a,b)} (D^{(b,b)})^{-1} }{\|\varphi_a\|^2} +  \frac{(D^{(a,a)})^{-1} L^{(a,b)}1_{n_b}1_{n_b}^{\top}}{\|\varphi_b\|^2} \\
& \qquad  - \frac{(D^{(a,a)})^{-1} L^{(a,a)}1_{n_a}1_{n_b}^{\top}}{\|\varphi_a\|^2} - \frac{1_{n_a} 1_{n_b}^{\top}L^{(b,b)}(D^{(b,b)})^{-1}  }{\|\varphi_b\|^2}   \\
&  \qquad - \frac{z}{2}\left( \frac{1}{\|\varphi_a\|^2} + \frac{1}{\|\varphi_b\|^2}\right)1_{n_a}1_{n_b}^{\top} \\
& \qquad -\left( \frac{1_{n_a}^{\top}L^{(a,b)}1_{n_b}}{\|\varphi_a\|^2 \|\varphi_b\|^2} - \frac{1}{2}\left( \frac{1_{n_a}^{\top} L^{(a,a)}1_{n_a} } {\|\varphi_a\|^4} + \frac{1_{n_b}^{\top} L^{(b,b)}1_{n_b}} {\|\varphi_b\|^4} \right) \right)1_{n_a}1_{n_b}^{\top}.
\end{split}
\end{align}
The last term in~\eqref{eq:conB-ex2} cannot be positive because of~\eqref{eq:B1-b} and~\eqref{eq:B1-c}.
Note that
\begin{align}
(D^{(a,a)})^{-1}L^{(a,b)} 1_{n_b} & = - (D^{(a,a)})^{-1}W^{(a,b)}1_{n_b} = -P^{(a,b)}1_{n_b} \geq -\|P_{\delta}\|_{\infty} 1_{n_a}, \label{eq:Lphi1} \\
(D^{(a,a)})^{-1} L^{(a,a)} 1_{n_a} & = (I_{n_a} - P^{(a,a)}) 1_{n_a} = \sum_{l\neq a}P^{(a,l)}1_{n_l} \leq \|P_{\delta}\|_{\infty}1_{n_a}, \label{eq:Lphi2}
\end{align}
where $L^{(a,b)} = -W^{(a,b)}$, $L^{(a,a)} = D^{(a,a)} - W^{(a,a)}$, and $P^{(a,b)} = (D^{(a,a)})^{-1}W^{(a,b)}.$

Plugging all these expressions~\eqref{eq:Lphi1} and~\eqref{eq:Lphi2} into~\eqref{eq:conB-ex2} results in
\begin{align*}
(D^{(a,a)})^{-\frac{1}{2}}B^{(a,b)}(D^{(b,b)})^{-\frac{1}{2}} & \geq -\left(2\|P_{\delta}\|_{\infty} + \frac{z}{2}\right)\left( \frac{1}{\|\varphi_a\|^2} + \frac{1}{\|\varphi_b\|^2} \right) 1_{n_a}1_{n_b}^{\top} > 0
\end{align*}
under $z < -4\|P_{\delta}\|_{\infty}$. Therefore, $B^{(a,b)} > 0$ holds for all $a\neq b$ if $z < -4\|P_{\delta}\|_{\infty}.$

\paragraph{\bf Step 2: Proof of $Q_{\TB} \succeq  0$.}

The equations $B^{(a,b)}\varphi_b = \|\varphi_b\|^2 u_{a,b} > 0$ guarantees $Q\in\TB$ and thus it suffices to show $Q_{\TB}\succeq 0.$ We will show the claim is true if
\[
z\geq -(1-\|P_{\delta}\|_{\infty})\min_{1\leq a\leq k}\lambda_2(L^{(a,a)}_{\rw,\iso}) .
\]
First we project $Q$ to $\TB$ and thus
\begin{align*}
Q^{(a,b)}_{T^{\bot}} & = \left(I_{n_a} - \frac{\varphi_a\varphi_a^{\top}}{\|\varphi_a\|^2}\right) (L_{\sym}^{(a,b)} - B^{(a,b)})\left(I_{n_b} - \frac{\varphi_b\varphi_b^{\top}}{\|\varphi_b\|^2}\right), \quad a\neq b, \\
Q_{\TB}^{(a,a)} & = \left(I_{n_a} - \frac{\varphi_a\varphi_a^{\top}}{\|\varphi_a\|^2}\right) (L_{\sym}^{(a,a)} + zI_{n_a})\left(I_{n_a} - \frac{\varphi_a\varphi_a^{\top}}{\|\varphi_a\|^2}\right), \quad a = b,
\end{align*}
where $Q^{(a,b)}$ can be found in~\eqref{def:Qab} and~\eqref{def:Qaa}. 

Let $v\in\RR^N$ be a unit vector in the range of $\TB$ and that means its $a$-th block $v_a$ of $v$ satisfies
\[
v_a^{\top}\varphi_a = v_a^{\top}(D^{(a,a)})^{\frac{1}{2}}1_{n_a} = 0, \quad \forall 1\leq a\leq k.
\]
Note that $v^{\top}B v =0$ which follows from the construction of $B$ and $B\in T.$ Therefore, $v^{\top}Qv$ has a lower bound as
\begin{align}\label{eq:vQv2}
\begin{split}
v^{\top}Qv & = v^{\top} (L_{\sym} + zI_N - B)v = v^{\top} L_{\sym}v  + z \\
& \geq (1- \|P_{\delta}\|_{\infty}) \cdot \min_{1\leq a\leq k}\lambda_2\left(  L_{\rw, \iso}^{(a,a)}\right)  + z
\end{split}
\end{align}
where $L_{\rw, \iso}^{(a,a)} = (D_{\iso}^{(a,a)})^{-1}L_{\iso}^{(a,a)}$ is in~\eqref{def:Piso}.
In the inequality~\eqref{eq:vQv2} above, we use the claim that 
\begin{equation}\label{eq:claim}
v^{\top}L_{\sym}v \geq  (1- \|P_{\delta}\|_{\infty}) \cdot \min_{1\leq a\leq k}\lambda_2\left( L_{\rw, \iso}^{(a,a)}\right)
\end{equation}
for any $v$ in the range of $\TB$ and $\|v\| = 1.$ Now we are going to prove this claim.

\paragraph{\bf Proof of the claim~\eqref{eq:claim}:}
First of all, $v^{\top}L_{\sym}v$ has its lower bound as
\begin{align*}
v^{\top}L_{\sym}v & = v^{\top} D^{-\frac{1}{2}}L D^{-\frac{1}{2}}v = v^{\top} D^{-\frac{1}{2}}(L_{\iso} + L_{\delta}) D^{-\frac{1}{2}}v \\
&  \geq v^{\top} D^{-\frac{1}{2}}L_{\iso}  D^{-\frac{1}{2}}v \geq \sum_{a=1}^k v_a^{\top}(D^{(a,a)})^{-\frac{1}{2}} L_{\iso}^{(a,a)}(D^{(a,a)})^{-\frac{1}{2}} v_a\\ 
&  \geq\sum_{a=1}^k \lambda_2( (D^{(a,a)})^{-\frac{1}{2}}L_{\iso}^{(a,a)}(D^{(a,a)})^{-\frac{1}{2}} )  \|v_a\|^2\\
&  \geq \min_{1\leq a\leq k} \lambda_2( (D^{(a,a)})^{-\frac{1}{2}}L_{\iso}^{(a,a)} (D^{(a,a)})^{-\frac{1}{2}} ).
\end{align*}
which follows from $L = L_{\iso} + L_{\delta}$, $L_{\delta}\succeq 0$, and $v_a\perp (D^{(a,a)})^{\frac{1}{2}} 1_{n_a}.$ In particular, the second last inequality is ensured by the variational characterization of the second smallest eigenvalue of symmetric matrices.
By using the fact that $SS^{\top}$ and $S^{\top}S$ always have the same eigenvalues for any square matrix $S$, then we have
\[
\lambda_2( (D^{(a,a)})^{-\frac{1}{2}}L_{\iso}^{(a,a)}(D^{(a,a)})^{-\frac{1}{2}} ) = \lambda_2( (L_{\iso}^{(a,a)})^{\frac{1}{2}}(D^{(a,a)})^{-1}(L_{\iso}^{(a,a)})^{\frac{1}{2}} ) 
\]
where $S $ is set as $(D^{(a,a)})^{-\frac{1}{2}}(L_{\iso}^{(a,a)})^{\frac{1}{2}}. $
Moreover, it holds that
\begin{align*}
(L_{\iso}^{(a,a)})^{\frac{1}{2}}(D^{(a,a)})^{-1}(L_{\iso}^{(a,a)})^{\frac{1}{2}} 
& = (L_{\iso}^{(a,a)})^{\frac{1}{2}}(D_{\iso}^{(a,a)})^{-\frac{1}{2}}D_{\iso}^{(a,a)} (D^{(a,a)})^{-1} (D_{\iso}^{(a,a)})^{-\frac{1}{2}} (L_{\iso}^{(a,a)})^{\frac{1}{2}}  \\
& \succeq (1 - \|P_{\delta}\|_{\infty}) \cdot (L_{\iso}^{(a,a)})^{\frac{1}{2}}  (D_{\iso}^{(a,a)})^{-1} (L_{\iso}^{(a,a)})^{\frac{1}{2}}
\end{align*}
where the second inequality follows from $D = D_{\iso} + D_{\delta}$, $\|D^{-1}D_{\delta}\| \leq \|P_{\delta}\|_{\infty}$, and
\[
(D^{(a,a)})^{-1} D_{\iso}^{(a,a)} = I_{n_a} - (D^{(a,a)})^{-1} D_{\delta}^{(a,a)} \succeq  (1 - \|P_{\delta}\|_{\infty})I_{n_a}.
\]
Note that both $(L_{\iso}^{(a,a)})^{\frac{1}{2}}(D^{(a,a)})^{-1}(L_{\iso}^{(a,a)})^{\frac{1}{2}} $ and $(L_{\iso}^{(a,a)})^{\frac{1}{2}}  (D_{\iso}^{(a,a)})^{-1} (L_{\iso}^{(a,a)})^{\frac{1}{2}}$ have $1_{n_a}$ in the null space. Thus their corresponding  second smallest eigenvalues satisfy
\begin{align*}
\lambda_2((L_{\iso}^{(a,a)})^{\frac{1}{2}}(D^{(a,a)})^{-1}(L_{\iso}^{(a,a)})^{\frac{1}{2}} ) & \geq (1-\|P_{\delta}\|_{\infty}) \cdot \lambda_2 ((L_{\iso}^{(a,a)})^{\frac{1}{2}}  (D_{\iso}^{(a,a)})^{-1} (L_{\iso}^{(a,a)})^{\frac{1}{2}}) \\
& = (1-\|P_{\delta}\|_{\infty}) \cdot \lambda_2 ((D_{\iso}^{(a,a)})^{-\frac{1}{2}}  L_{\iso}^{(a,a)} (D_{\iso}^{(a,a)})^{-\frac{1}{2}}) \\
& = (1-\|P_{\delta}\|_{\infty}) \cdot \lambda_2 ( (D_{\iso}^{(a,a)})^{-1}L_{\iso}^{(a,a)}) \\
& = (1-\|P_{\delta}\|_{\infty}) \cdot \lambda_2 ( L_{\rw, \iso}^{(a,a)}), 
\end{align*}
where $L_{\rw, \iso}$ is defined in~\eqref{def:Piso}.

\vskip0.5cm
Hence, from~\eqref{eq:vQv2},  the lower bound of $v^{\top}Qv$ satisfies
\begin{align*}
v^{\top}Qv & \geq (1- \|P_{\delta}\|_{\infty}) \cdot \min_{1\leq a\leq k}\lambda_2\left(  L_{\rw, \iso}^{(a,a)}\right)  + z \geq 0
\end{align*}
if $z \geq - (1- \|P_{\delta}\|_{\infty}) \cdot \min_{1\leq a\leq k}\lambda_2\left(  L_{\rw, \iso}^{(a,a)}\right).$

Note that we also require $z< -4\|P_{\delta}\|_{\infty}$ to ensure $B^{(a,b)} > 0$ and thus 
\[
-4\|P_{\delta}\|_{\infty} > z\geq -(1- \|P_{\delta}\|_{\infty}) \cdot \min_{1\leq a\leq k}\lambda_2\left(  L_{\rw, \iso}^{(a,a)}\right)
\]
is needed to ensure the existence of $z$.
This is implied by
\[
\frac{\|P_{\delta}\|_{\infty}}{1 - \|P_{\delta}\|_{\infty}} < \frac{\min_{1\leq a\leq k} \lambda_2( L_{\rw,\iso}^{(a,a)}  ) }{4}
\]
which is exactly the assumption in Theorem~\ref{thm:main2}.
\end{proof}

\subsection{Proof of Theorem~\ref{thm:circles} and~\ref{thm:lines}}
We begin with presenting two useful supporting results.
The first one is the famous Gr\"onwall's inequality which was proposed by Gr\"onwall in~\cite{Gronwall19} and can be found in~\cite{Walter98} as well.
\begin{theorem}[{\bf Gr\"onwall's inequality}]\label{thm:gron}
If $g(t)$ is nonnegative and $f(t)$ satisfies the integral inequality
\[
f(t) \leq f(t_0) + \int_{t_0}^t g(s)f(s)\diff s, \quad\forall t \geq t_0,
\]
then
\[
f(t) \leq f(t_0)\exp\left( \int_{t_0}^t g(s)\diff s\right), \quad \forall t\geq t_0.
\]
\end{theorem}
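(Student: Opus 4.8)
The plan is to run the classical integrating-factor argument. First, introduce the auxiliary function
\[
G(t) := f(t_0) + \int_{t_0}^t g(s) f(s)\diff s ,
\]
so that the hypothesis becomes $f(t)\le G(t)$ for all $t\ge t_0$, while $G(t_0)=f(t_0)$. Assuming $f$ is continuous (so that $s\mapsto g(s)f(s)$ is locally integrable), the fundamental theorem of calculus shows that $G$ is differentiable with $G'(t)=g(t)f(t)$.

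Second, exploit the nonnegativity of $g$: from $f(t)\le G(t)$ and $g(t)\ge 0$ we get $G'(t)=g(t)f(t)\le g(t)G(t)$, i.e.\ $G'(t)-g(t)G(t)\le 0$. Multiplying through by the strictly positive integrating factor $\mu(t):=\exp\!\left(-\int_{t_0}^t g(s)\diff s\right)$ turns the left-hand side into an exact derivative,
\[
\frac{\diff}{\diff t}\bigl(\mu(t)G(t)\bigr)=\mu(t)\bigl(G'(t)-g(t)G(t)\bigr)\le 0,
\]
so $\mu(t)G(t)$ is nonincreasing on $[t_0,\infty)$. Evaluating this monotonicity between $t_0$ and $t$ and using $\mu(t_0)=1$ gives $\mu(t)G(t)\le \mu(t_0)G(t_0)=f(t_0)$; dividing by $\mu(t)>0$ yields $G(t)\le f(t_0)\exp\!\left(\int_{t_0}^t g(s)\diff s\right)$, and combining with $f(t)\le G(t)$ gives the asserted bound.

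The only delicate point, and the main thing to be careful about, is the regularity needed to differentiate $G$ and to invoke the fundamental theorem of calculus: in the generality stated one should assume $f$ continuous, or at least $gf$ locally integrable, in which case $G$ is absolutely continuous, which still suffices for the argument above. If one wishes to avoid differentiability entirely, an alternative I would use is to iterate the hypothesis: substituting the inequality for $f(s)$ back into itself $n$ times produces the partial sums of the exponential series for $f(t_0)\exp\!\left(\int_{t_0}^t g(s)\diff s\right)$ together with a remainder term that tends to $0$ as $n\to\infty$ (by the fast decay of $\tfrac{1}{n!}\left(\int_{t_0}^t g\right)^n$), which gives the cleanest set of hypotheses. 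Either route is routine; no genuine obstacle is expected.
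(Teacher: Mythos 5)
Your integrating-factor argument is correct and is the standard proof of this classical lemma. Note that the paper itself does not prove Theorem~\ref{thm:gron}; it simply cites it to Gr\"onwall's original note and to Walter's ODE text, so there is nothing to compare against beyond observing that your proof (including the sensible remark on the regularity needed to differentiate $G$, or the iteration alternative) matches the textbook treatment.
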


\begin{lemma}\label{lem:gaussian}
For a standard Gaussian random variable $g\sim \mathcal{N}(0,1)$ and $u> 0$,
\[
\Pr(g\geq u) = \frac{1}{\sqrt{2\pi}}\int_u^{\infty}e^{-\frac{t^2}{2}}\diff t \leq \frac{1}{2}e^{-\frac{u^2}{2}}.
\]
\begin{proof}
The proof can be found in~\cite{LLLSW17} but we provide it here for completeness. 
\begin{align*}
\Pr(g \geq u)  & = \frac{1}{\sqrt{2\pi}}\int_u^{\infty} e^{-\frac{t^2}{2}}\diff t = \frac{1}{\sqrt{2\pi}}e^{-\frac{u^2}{2}}\int_u^{\infty} e^{-\frac{t^2 - u^2}{2}}\diff t \\
& \leq \frac{1}{\sqrt{2\pi}}e^{-\frac{u^2}{2}}\int_u^{\infty} e^{-\frac{(t-u)^2}{2}}\diff t  = \frac{1}{2} e^{-\frac{u^2}{2}}
\end{align*}
where $t^2 - u^2 \geq (t-u)^2$ for $t\geq u > 0.$
\end{proof}
\end{lemma}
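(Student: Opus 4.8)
The plan is to prove the tail bound by a direct manipulation of the defining integral: extract the Gaussian factor $e^{-u^2/2}$ from the exponent and bound the leftover integral by a normalization constant. First I would write $\Pr(g\geq u) = \frac{1}{\sqrt{2\pi}}\int_u^{\infty} e^{-t^2/2}\diff t$ and pull $e^{-u^2/2}$ out, obtaining $\Pr(g\geq u) = \frac{1}{\sqrt{2\pi}} e^{-u^2/2}\int_u^{\infty} e^{-(t^2-u^2)/2}\diff t$. The one nontrivial ingredient is the elementary inequality $t^2 - u^2 = (t-u)(t+u) \geq (t-u)^2$ for all $t \geq u > 0$, which holds because $t+u \geq t-u \geq 0$ on that range; hence $e^{-(t^2-u^2)/2} \leq e^{-(t-u)^2/2}$ throughout the domain of integration.

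Next I would substitute $s = t-u$ to get $\int_u^{\infty} e^{-(t-u)^2/2}\diff t = \int_0^{\infty} e^{-s^2/2}\diff s = \sqrt{\pi/2} = \frac{\sqrt{2\pi}}{2}$, using the Gaussian integral $\int_{-\infty}^{\infty} e^{-s^2/2}\diff s = \sqrt{2\pi}$ together with evenness of the integrand. Combining the pieces yields $\Pr(g\geq u) \leq \frac{1}{\sqrt{2\pi}} e^{-u^2/2}\cdot \frac{\sqrt{2\pi}}{2} = \frac12 e^{-u^2/2}$, which is exactly the asserted bound.

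There is no genuine obstacle in this argument; the only points that require a moment's care are the algebraic inequality $t^2-u^2 \geq (t-u)^2$ (which crucially uses $u>0$ and would fail for $u<0$) and the evaluation of the half-line Gaussian integral. I would note in passing that the alternative Chernoff/Laplace-transform route, $\Pr(g\geq u) \leq \inf_{\lambda>0} e^{-\lambda u}\,\E[e^{\lambda g}] = \inf_{\lambda>0} e^{-\lambda u + \lambda^2/2} = e^{-u^2/2}$, is conceptually slicker but produces the weaker constant $1$ in place of $\frac12$; since the factor $\frac12$ is what gets used downstream, the integral-substitution proof above is the one worth carrying out.
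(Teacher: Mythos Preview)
Your proof is correct and follows essentially the same route as the paper: factor out $e^{-u^2/2}$, use the elementary inequality $t^2-u^2\ge(t-u)^2$ for $t\ge u>0$, and evaluate the resulting half-line Gaussian integral. The paper's version is slightly terser (it does not write out the substitution $s=t-u$ explicitly), but the argument is identical.
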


\subsubsection{Proof of Theorem~\ref{thm:circles}: Two concentric circles}
To prove Theorem~\ref{thm:circles} via Theorem~\ref{thm:main}, we need two quantities: a lower bound for the second smallest eigenvalue of  the Laplacian generated from $\{x_{1,i}\}_{i=1}^n$ and $\{x_{2,j}\}_{j=1}^m$ respectively which characterizes the within-cluster connectivity; and an upper bound of $\|D_{\delta}\|$ which quantifies the inter-cluster connectivity.
We first give a lower bound for the graph Laplacian generated from data on a single circle with Gaussian kernel.

\begin{lemma}\label{lem:lambda-2c}
Suppose $n$ data points  (with $n \geq 7$) are equi-spaced on a circle with radius $r$. Let 
\[
\sigma^2 = \frac{16r^2\gamma}{n^2 \log(\frac{n}{2\pi})}
\]
and the second smallest eigenvalue of the associated graph Laplacian $L = D-W$ satisfies
\[
\lambda_2(L) \gtrsim   \left( \frac{2\pi}{n}\right)^{\frac{1}{2\gamma} + 2} = \left( \frac{2\pi}{n}\right)^{\frac{8r^2}{\sigma^2 n^2 \log(\frac{n}{2\pi})} + 2}, \quad \forall \gamma > 0.
\]
\end{lemma}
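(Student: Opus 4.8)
The plan is to exploit that this is a \emph{circulant} graph. Since the $n$ points are equi-spaced on the circle, $\|x_i-x_j\| = 2r\bigl|\sin\tfrac{\pi(i-j)}{n}\bigr|$, so the weight $w_{ij}=\exp\!\bigl(-2r^2\sin^2(\pi(i-j)/n)/\sigma^2\bigr)$ depends only on $(i-j)\bmod n$; write $w_1:=\exp\!\bigl(-2r^2\sin^2(\pi/n)/\sigma^2\bigr)$ for the common nearest-neighbour weight. First I would compare $L$ with the Laplacian $L_0$ of the $n$-cycle whose every edge has weight $w_1$: the difference $L-L_0$ collects the remaining edges, with nonnegative weights $w_k$ ($k\ge 2$), hence $L-L_0$ is itself a weighted graph Laplacian and $L-L_0\succeq 0$; moreover both $L$ and $L_0$ have kernel exactly $\mathrm{span}(1_n)$ (the graphs are connected). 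Therefore $v^{\top}Lv\ge v^{\top}L_0v$ for every $v\perp 1_n$, so $\lambda_2(L)\ge\lambda_2(L_0)=w_1\cdot 2\bigl(1-\cos(2\pi/n)\bigr)=4w_1\sin^2(\pi/n)$, using the known cycle spectrum $\{2-2\cos(2\pi\ell/n)\}$. (Equivalently one can diagonalise $L$ directly by the DFT: $\lambda_\ell=2\sum_{k=1}^{n-1}w_k\sin^2(\pi k\ell/n)\ge 4w_1\sin^2(\pi\ell/n)\ge 4w_1\sin^2(\pi/n)$, since $w_{n-1}=w_1$ and $\sin^2(\pi\ell/n)$ is minimised over $\ell\in\{1,\dots,n-1\}$ at $\ell=1$.) Thus the whole problem collapses to lower-bounding the scalar $4w_1\sin^2(\pi/n)$.

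For that step I would use the elementary two-sided bound $\tfrac{2}{n}\le\sin(\pi/n)\le\tfrac{\pi}{n}$ (valid for $n\ge 2$, from $\tfrac{2}{\pi}x\le\sin x\le x$ on $[0,\pi/2]$). The lower bound gives $\sin^2(\pi/n)\ge 4/n^2$; the upper bound gives $w_1\ge\exp\!\bigl(-2\pi^2 r^2/(\sigma^2 n^2)\bigr)$. Substituting $\sigma^2=16r^2\gamma/\bigl(n^2\log(n/2\pi)\bigr)$ turns the exponent into $2\pi^2 r^2/(\sigma^2 n^2)=\pi^2\log(n/2\pi)/(8\gamma)$ (here $n\ge 7$ is used so that $\log(n/2\pi)>0$ and $2\pi/n<1$), hence $w_1\ge(2\pi/n)^{\pi^2/(8\gamma)}$. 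Combining,
\[
\lambda_2(L)\ \ge\ 4w_1\sin^2(\pi/n)\ \ge\ \frac{16}{n^2}\,(2\pi/n)^{\pi^2/(8\gamma)}\ =\ \frac{4}{\pi^2}\,(2\pi/n)^{\,2+\pi^2/(8\gamma)}\ \gtrsim\ (2\pi/n)^{\,2+\pi^2/(8\gamma)}.
\]

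I expect the only delicate point to be the exact constant in the exponent. The argument above lands on $\pi^2/8\approx 1.23$ (equivalently $2\pi^2 r^2/(\sigma^2 n^2\log(n/2\pi))$) in place of the stated $\tfrac12$ (equivalently $8r^2/(\sigma^2 n^2\log(n/2\pi))$); to obtain exactly $\tfrac{1}{2\gamma}+2$ one would have to use $\sin(\pi/n)\le 2/n$ when lower-bounding $w_1$, which is the reverse of the true inequality $\sin(\pi/n)\ge 2/n$, so I suspect a minor slip in the stated constant. This is immaterial for the application: a larger constant $c$ in $\lambda_2(L)\gtrsim(2\pi/n)^{c/\gamma+2}$ only forces a correspondingly slightly larger separation $\Delta$ in Theorem~\ref{thm:circles}, and the $\widetilde{\mathcal O}(1/n)$ scaling is unchanged. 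Everything else — the circulant/comparison reduction and the $\sin$-convexity estimates — is routine, so there is no serious obstacle beyond this bookkeeping.
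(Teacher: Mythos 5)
Your proof is correct and takes a genuinely different, and in fact cleaner, route than the paper's. The paper rescales $L$ by $e^{\frac{2r^2}{\sigma^2}\sin^2(\pi/n)}$ so that the nearest-neighbour weights become $1$, invokes Weyl's inequality $\lambda_2(\widetilde L)\ge\lambda_2(L_0)-\|\widetilde L-L_0\|$ against the unit-weight cycle Laplacian $L_0$, and controls $\|\widetilde L-L_0\|$ via the Gershgorin circle theorem and Gaussian tail integrals; this perturbation is only negligible for small bandwidth, i.e.\ $\gamma\le\frac14$, so the paper then needs a second, separate argument (monotonicity of $\lambda_2$ in $t=\sigma^2$ combined with Gr\"onwall's inequality) to extend the bound to all $\gamma>0$. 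Your observation that $L-w_1L_0=\sum_{|i-j|\not\equiv\pm1}w_{ij}(e_i-e_j)(e_i-e_j)^{\top}\succeq0$, with the kernels of both $L$ and $w_1L_0$ equal to $\mathrm{span}(1_n)$, gives $\lambda_2(L)\ge\lambda_2(w_1L_0)=4w_1\sin^2(\pi/n)$ in one stroke for every $\sigma$, eliminating both the perturbation estimate and the Gr\"onwall extension; the circulant/DFT identity $\lambda_\ell=2\sum_{k}w_k\sin^2(\pi k\ell/n)$ you give as an alternative is equally valid. Discarding the $k\ge2$ terms costs nothing of substance, since for the relevant bandwidths they are dominated by the nearest-neighbour term anyway.

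Your diagnosis of the exponent is also accurate. The paper's own chain contains the step $e^{-\frac{2r^2}{\sigma^2}\sin^2(\pi/n)}\ge e^{-\frac{8r^2}{n^2\sigma^2}}$, which would require $\sin(\pi/n)\le 2/n$ --- the reverse of the true inequality $\sin(\pi/n)\ge 2/n$. The honest exponent obtained from $\sin(\pi/n)\le\pi/n$ is $\frac{\pi^2}{8\gamma}+2$ rather than $\frac{1}{2\gamma}+2$, exactly as you compute, and since $\frac{\pi^2}{8}>\frac12$ and $\frac{2\pi}{n}<1$ the stated bound cannot be rescued by an absolute constant; the lemma should be read with $\pi^2/(8\gamma)$ in place of $1/(2\gamma)$, or equivalently with the numerical constant in the definition of $\sigma^2$ adjusted. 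As you note, this only perturbs the constants inside the square root in Theorem~\ref{thm:circles} and leaves the $\widetilde{\mathcal O}(1/n)$ separation scaling intact.
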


\begin{proof}
Let $x_i = r\begin{bmatrix} \cos(\frac{2\pi i}{n}) \\ \sin(\frac{2\pi i}{n})\end{bmatrix}$ with $1\leq i\leq n$. The weight $w_{ij}$ obeys
\[
w_{ij} = e^{-\frac{\|x_i - x_j\|^2}{2\sigma^2}} = e^{-\frac{r^2}{\sigma^2}\left(1 - \cos\left(\frac{2(i-j)\pi}{n}\right)\right)} = e^{-\frac{2r^2}{\sigma^2}\sin^2\left(\frac{(i-j)\pi}{n}\right)}
\]
where $\| x_i  -x_j\|^2 = 2r^2(1 - \cos(\frac{2(i-j)\pi}{n})) = 4r^2 \sin^2 (\frac{(i-j)\pi}{n})$. 

The key to this estimation is the fact that when $\sigma^2$ is small, $ \widetilde{L} : =  e^{ \frac{2r^2}{\sigma^2}\sin^2(\frac{\pi}{n}) } L$ is very close to $L_0$ where
\[
L_0 = 
\begin{bmatrix}
2 & -1 & 0 & \cdots & 0 & -1 \\
-1 & 2 & -1 & \cdots & 0 & 0 \\
0 & -1 & 2 & \cdots & 0 & 0 \\
\vdots & \vdots & \vdots & \ddots & \vdots & \vdots \\
0 & 0 & 0 & \cdots & 2 & -1 \\
-1 & 0 & 0 & \cdots & -1 & 2
\end{bmatrix}
\]
which is the graph Laplacian of an $n$-cycle.
We write down the explicit formula for each entry of $\widetilde{L}$ as
\[
\widetilde{L}_{ij} = 
\begin{dcases}
-e^{ \frac{2r^2}{\sigma^2}\left( \sin^2(\frac{\pi}{n}) - \sin^2(\frac{(i-j)\pi}{n})\right) } , & \text{ if } i\neq j, \\
2 + e^{ \frac{2r^2}{\sigma^2}\sin^2(\frac{\pi}{n}) }  \sum_{l=2}^{n-2} e^{ - \frac{2r^2}{\sigma^2}\sin^2(\frac{\pi l}{n})) },  & \text{ if } i = j.
\end{dcases}
\]
From Weyl's Inequality in~\cite{Stewart90}, we have
\[
\lambda_2(\widetilde{L}) \geq \lambda_2(L_0) - \|\widetilde{L} - L_0\|.
\]

In fact, $\widetilde{L} - L_0$ is also a graph Laplacian generated from the weight matrix with $(i,j)$-entry $e^{ \frac{2r^2}{\sigma^2} \sin^2 (\frac{\pi}{n}) }w_{ij}\cdot 1_{\{2\leq |i-j|\leq n-2\}}$.
Thus, the operator norm of $\widetilde{L} - L_0$ is bounded by twice the maximal degree, i.e., 
\[
\|\widetilde{L} - L_0 \| \leq 2 e^{\frac{2r^2}{\sigma^2}\sin^2(\frac{\pi}{n})} \sum_{l=2}^{n-2}e^{-\frac{2r^2}{\sigma^2} \sin^2(\frac{\pi l}{n})}  \leq 4 e^{\frac{2r^2}{\sigma^2} \sin^2 (\frac{\pi}{n})}  \sum_{l=2}^{\lfloor \frac{n}{2}\rfloor} e^{- \frac{2r^2}{\sigma^2} \sin^2 (\frac{\pi l}{n}) } 
\]
because of the Gershgorin circle theorem and the symmetry of $\widetilde{L} - L_0$. 

Note that $\frac{2x}{\pi} \leq \sin(x) \leq x$ for $0\leq x\leq \frac{\pi}{2}$, and then
\begin{align*}
\|\widetilde{L} - L_0\| 
&   \leq 4 e^{ \frac{2r^2\pi^2}{n^2\sigma^2}}  \sum_{l=2}^{\lfloor \frac{n}{2}\rfloor} e^{-  \frac{8r^2l^2}{n^2\sigma^2}  }  \leq 4 e^{\frac{2r ^2\pi^2}{n^2\sigma^2} }  \left( e^{-\frac{32r^2 }{n^2\sigma^2}} +\int_2^{\infty} e^{- \frac{8r^2t^2}{n^2\sigma^2} }\diff t \right)\\
& =  4e^{\frac{2r^2(\pi^2-16)}{n^2\sigma^2}} + \frac{n\sigma}{r} e^{\frac{2r^2\pi^2}{n^2\sigma^2} }  \int_{\frac{8r}{n\sigma}}^{\infty} e^{- \frac{s^2}{2} }\diff s \\
& \leq \left(4 + \frac{n\sigma}{r} \sqrt{\frac{\pi}{2}}\right)  e^{\frac{2r^2(\pi^2 - 16)}{n^2\sigma^2} },
\end{align*}
where $s = \frac{4r t}{n\sigma}$; the second inequality is due to the monotonicity of  the Gaussian kernel, and the last inequality follows from Lemma~\ref{lem:gaussian}.

Note that $\lambda_2(L_0) = 2 - 2\cos\left(\frac{2\pi}{n}\right)$, see~\cite[Chapter 1]{BrouwerH11}, and hence
\begin{align*}
\lambda_2(\widetilde{L})  & \geq 2 - 2\cos\left( \frac{2\pi}{n}\right) -\left(4 + \frac{n\sigma}{r} \sqrt{\frac{\pi}{2}} \right) e^{\frac{2r^2(\pi^2 - 16)}{n^2\sigma^2} }  \\
& \gtrsim \frac{4\pi^2}{n^2}  - \left(4 + \frac{n\sigma}{r} \sqrt{\frac{\pi}{2}}  \right)e^{\frac{2r^2(\pi^2 - 16)}{n^2\sigma^2} }.
\end{align*}
Now we substitute $\sigma^2 = \frac{16r^2\gamma}{n^2\log \left( \frac{n}{2\pi}\right)}$ back into $\lambda_2(\widetilde{L})$, and by definition of $\widetilde{L}$, the second smallest eigenvalue of $L$ satisfies
\begin{align*}
\lambda_2(L) & = e^{-\frac{2r^2}{\sigma^2} \sin^2(\frac{\pi}{n})}\lambda_2(\widetilde{L}) 
\geq e^{-\frac{8r^2}{n^2\sigma^2}}\lambda_2(\widetilde{L}) \\
& \gtrsim \left(\frac{2\pi}{n}\right)^{\frac{1}{2\gamma }} \left( \frac{4\pi^2}{n^2}  -\left(4+ \frac{n\sigma}{r} \sqrt{\frac{\pi}{2}} \right) e^{\frac{2r^2(\pi^2 - 16)}{n^2\sigma^2} } \right) \\
& \geq \left(\frac{2\pi}{n}\right)^{\frac{1}{2\gamma}} \left( \frac{4\pi^2}{n^2}  - \left(4 + 2\sqrt{\frac{2\pi \gamma}{ \log(\frac{n}{2\pi})}}\right) \left(\frac{2\pi}{n}\right)^{\frac{3}{4\gamma}} \right).
\end{align*}
By letting $0<\gamma \leq \frac{1}{4}$, we have
\begin{equation}\label{eq:L2-small}
\lambda_2(L)  \gtrsim \left( \frac{2\pi}{n}\right)^{\frac{1}{2\gamma}+2}.
\end{equation}
So far, we have established a lower bound of $\lambda_2(L)$ for small $\gamma \leq \frac{1}{4}$ (or small $\sigma^2$ equivalently). Now we extend this bound for any $\gamma > 0.$
Let $L(t)$ be the graph Laplacian w.r.t. the weight matrix $W(t)$ whose $(i,j)$-entry is $w_{ij}(t) = e^{-\frac{2r^2}{t}\sin^2\left(\frac{(i-j)\pi}{n}\right)}$ and the derivative of each $w_{ij}(t)$ obeys
\begin{align}
\begin{split}\label{eq:wlow1}
\frac{\diff w_{ij}(t)}{\diff t} & = \frac{2r^2}{t^2} \sin^2\left(\frac{(i-j)\pi}{n}\right) w_{ij}(t)  \\
& \geq \frac{2r^2}{t^2} \cdot \frac{4}{\pi^2} \cdot \frac{\pi^2}{n^2}w_{ij}(t) = \frac{8r^2}{n^2 t^2}w_{ij}(t) > 0, \qquad \text{if } i\neq j.
\end{split}
\end{align}
Note that $L(t) = \diag(W(t)1_{N}) - W(t)$ and by fundamental theorem of calculus, we have
\[
L(t) = \int_{t_0}^t \frac{\diff L(s)}{\diff s}\diff s +L(t_0)
\]
where $\frac{\diff L(t)}{\diff t}$ is also a graph Laplacian w.r.t. the weight matrix $\frac{\diff w_{ij}(t)}{\diff t}$. 
For given $t$, let $v$  be the normalized eigenvector w.r.t. the second smallest eigenvalue of $L(t)$, then
\begin{align*}
\lambda_2(L(t)) & = v^{\top}L(t)v = \int_{t_0}^t v^{\top}\left(\frac{\diff L(s)}{\diff s}\right)v \diff s + v^{\top} L(t_0) v \\
& \geq \frac{8r^2}{n^2}\int_{t_0}^t \frac{v^{\top}\left(L(s)\right)v}{s^2} \diff s + \lambda_2(L(t_0)) \\
& \geq \frac{8r^2}{n^2 } \int_{t_0}^t \frac{\lambda_2(L(s))}{s^2}\diff s + \lambda_2(L(t_0))
\end{align*}
where the first inequality follows from the quadratic form of graph Laplacian~\eqref{eq:qual} and~\eqref{eq:wlow1}, 
\[
v^{\top}\left(\frac{\diff L(s)}{\diff s}\right)v = \sum_{i<j} \frac{\diff w_{ij}(s)}{\diff s}(v_i - v_j)^2 \geq \frac{8r^2}{n^2 s^2} \sum_{i<j} w_{ij}(s) (v_i - v_j)^2 = \frac{8r^2}{n^2 s^2} v^{\top}L(s)v.
\]
By Gr\"onwall's inequality, i.e., Theorem~\ref{thm:gron}, with $f(t) = -\lambda_2(L(t))$ and $g(t) = \frac{8r^2}{n^2 t^2},$
\[
\lambda_2(L(t)) 
\geq \lambda_2(L(t_0)) e^{\frac{8r^2}{n^2}\int_{t_0}^{t} \frac{1}{s^2}\diff s } = \lambda_2(L(t_0)) e^{-\frac{8r^2}{n^2}\left( \frac{1}{t} - \frac{1}{t_0}\right) }.
\]

Finally, we let $t = \sigma^2 = \frac{16 r^2\gamma}{n^2\log(\frac{n}{2\pi})} $ with $\gamma \geq  \frac{1}{4}$ and $t = \sigma_0^2$ with $\gamma_0 = \frac{1}{4}$ $(\sigma \geq \sigma_0)$. Then by substituting these parameters into the estimation above and applying~\eqref{eq:L2-small}, the Fiedler eigenvalue of $L(\sigma^2)$ satisfies
\[
\lambda_2(L(\sigma^2)) \geq \lambda_2(L(\sigma_0^2)) \left(\frac{2\pi}{n}\right)^{\frac{1}{2\gamma} - \frac{1}{2\gamma_0}} =\left( \frac{2\pi}{n} \right)^{\frac{1}{2\gamma_0} + 2} \left(\frac{2\pi}{n}\right)^{\frac{1}{2\gamma} - \frac{1}{2\gamma_0}}  = \left( \frac{2\pi}{n}\right)^{\frac{1}{2\gamma}+2}
\]
for any $\gamma > 0.$
\end{proof}

Note that the lower bound in Lemma~\ref{lem:lambda-2c} is not tight at all. Fortunately, it will not affect our performance bound too much. 
With Lemma~\ref{lem:lambda-2c} at our disposal, we are ready to prove Theorem~\ref{thm:circles}.

\begin{proof}[{\bf Proof of Theorem~\ref{thm:circles}}]
Suppose the data satisfy~\eqref{eq:model-2c} and consider the associated weight matrix
\[
W = 
\begin{bmatrix}
W^{(1,1)} & W^{(1,2)} \\
W^{(2,1)} & W^{(2,2)}
\end{bmatrix}\in\RR^{(n +m)\times (n + m)}
\]
where $\kappa = \frac{r_2}{r_1} > 1$ and $m = \lfloor n\kappa\rfloor > n$.
To apply Theorem~\ref{thm:main}, we should estimate $\lambda_2(L_{\iso}^{(a,a)})$ where
\[
\quad L_{\iso}^{(a,a)} = \diag(W^{(a,a)}1_{n_a}) - W^{(a,a)}, \quad a=1,2,
\]
with $n_1 = n$ and $n_2 = m$, and the inter-cluster connectivity $\|W^{(1,2)}1_m\|_{\infty}$ and $\|W^{(2,1)}1_n\|_{\infty}$. Note that the distance between two points in different clusters is always greater than $r_2 - r_1.$ 
As a result, every entry in $W^{(1,2)}$ is bounded by $e^{-\frac{(r_1 - r_2)^2}{2\sigma^2}}$ 
where the bandwidth $\sigma$ is chosen as
\begin{equation}\label{eq:sigma_cir}
\sigma^2 = \frac{16r_1^2 \gamma}{n^2 \log(\frac{m}{2\pi})}. 
\end{equation}
The inter-cluster connectivity is bounded by
\begin{align*}
\|D_{\delta}\| & = \max\{\| W^{(1,2)}1_m \|_{\infty}, \|W^{(2,1)}1_n\|_{\infty}\} \leq m e^{-\frac{(r_1 - r_2)^2}{2\sigma^2}}  \\
& = me^{-\frac{n^2(\kappa-1)^2 \log(\frac{m}{2\pi})}{32\gamma }} = m\left( \frac{2\pi}{m}\right)^{\frac{n^2\Delta^2 }{32\gamma }}
\end{align*}
where $\Delta = \kappa - 1 = \frac{r_2-  r_1}{r_1}$.

By Lemma~\ref{lem:lambda-2c} and the $\sigma^2$ in~\eqref{eq:sigma_cir}, the second smallest eigenvalue of $L_{\iso}^{(a,a)}$ satisfies
\begin{align*}
\lambda_2(L_{\iso}^{(1,1)}) & \gtrsim \left( \frac{2\pi}{n}\right)^{\frac{\log(\frac{m}{2\pi})}{2\gamma\log(\frac{n}{2\pi})} + 2} \geq \left( \frac{2\pi}{m}\right)^{\frac{1}{2\gamma}}  \frac{4\pi^2}{n^2}, \qquad \lambda_2(L_{\iso}^{(2,2)}) \gtrsim \left( \frac{2\pi}{m}\right)^{\frac{1}{2\gamma} + 2}
\end{align*}
where $\frac{n}{r_1}\approx \frac{m}{r_2}.$
Since $m > n$, the lower bound for $\min_{a=1,2}\lambda_2(L_{\iso}^{(a,a)})$ satisfies
\[
\min_{a=1,2} \lambda_2(L_{\iso}^{(a,a)}) \gtrsim \left( \frac{2\pi}{m} \right)^{\frac{1}{2\gamma } + 2}.
\]
Under the separation condition of Theorem~\ref{thm:circles}, 
\[
 \Delta^2 \geq \frac{16}{n^2 } + \frac{32\gamma}{n^2}\left(2 + \frac{\log (4 m)}{\log (\frac{m}{2\pi})}\right),
\]
we have
\begin{align*}
\|D_{\delta}\| & \leq m \left(\frac{2\pi}{m}\right)^{\frac{n^2\Delta^2}{32\gamma}} \leq m\left(\frac{2\pi}{m}\right)^{ \frac{1}{2\gamma} + 2 + \frac{\log (4 m)}{\log (\frac{m}{2\pi})} }\\
& \leq \frac{1}{4}  \left(\frac{2\pi}{m}\right)^{\frac{1}{2\gamma} + 2} \lesssim\frac{\min_{a=1,2} \lambda_2(L_{\iso}^{(a,a)})}{4}
\end{align*} 
where $\frac{2\pi}{m} < 1.$
As a consequence of Theorem~\ref{thm:main}, the exact recovery via~\eqref{prog:primal} is guaranteed under the conditions stated in Theorem~\ref{thm:circles}.
\end{proof}

\subsubsection{Proof of Theorem~\ref{thm:lines}: Two parallel lines}

We first estimate the second smallest eigenvalue of the graph Laplacian of one single cluster and then apply Theorem~\ref{thm:main}.

\begin{lemma}\label{lem:line}
Suppose there are $n$ equispaced points on the unit interval and the weight matrix $W$ is constructed via the Gaussian kernel with $\sigma^2 = \frac{\gamma }{(n-1)^2 \log (\frac{n}{\pi})}$. Then, the second smallest eigenvalue of graph Laplacian $L = D - W$ satisfies
\begin{equation}\label{eq:line-fl}
\lambda_2(L) \gtrsim  \left( \frac{\pi}{n}\right)^{\frac{1}{2\gamma } + 2}, \quad \gamma > 0.
\end{equation}
\end{lemma}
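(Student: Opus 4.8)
The plan is to follow the template of the circle estimate in Lemma~\ref{lem:lambda-2c}, with the $n$-cycle replaced by the path graph on $n$ vertices. Place the equispaced points at $x_i=\frac{i-1}{n-1}$, $1\le i\le n$; then $\|x_i-x_j\|=\frac{|i-j|}{n-1}$ \emph{exactly}, so $w_{ij}=e^{-(i-j)^2/(2(n-1)^2\sigma^2)}$, and with $\sigma^2=\frac{\gamma}{(n-1)^2\log(n/\pi)}$ one has the clean identity $w_{ij}=q^{(i-j)^2}$ with $q:=(\pi/n)^{1/(2\gamma)}<1$. In particular the nearest-neighbour weight is exactly $q$, which is why a lower bound of the form $q\cdot(\pi/n)^2=(\pi/n)^{1/(2\gamma)+2}$ is the right target.

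First I would handle small bandwidths, say $0<\gamma\le\tfrac14$. Rescale by the reciprocal of the nearest-neighbour weight: set $\widetilde L:=q^{-1}L$, so $\lambda_2(L)=q\,\lambda_2(\widetilde L)$, and write $\widetilde L=L_0+E$, where $L_0$ is the Laplacian of the path $P_n$. Then $E$ is itself a graph Laplacian, supported on the ``far'' edges $\{i,j\}$ with $|i-j|\ge2$ and carrying weights $q^{(i-j)^2-1}$, so by the Gershgorin circle theorem (Theorem~\ref{thm:gersh}) $\|E\|\le 2\,d_{\max}(E)\le 4\sum_{l\ge2}q^{l^2-1}\le Cq^{3}$, the last step being a geometric-series bound valid once $q\le\tfrac12$ (i.e.\ $n$ large relative to $\pi$). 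The path Laplacian has spectrum $\{2-2\cos(\tfrac{(k-1)\pi}{n})\}_{k=1}^n$, see \cite[Chapter 1]{BrouwerH11}, hence $\lambda_2(L_0)=2-2\cos(\pi/n)\gtrsim(\pi/n)^2$, and Weyl's inequality~\cite{Stewart90} gives
\[
\lambda_2(\widetilde L)\ \ge\ \lambda_2(L_0)-\|E\|\ \gtrsim\ (\pi/n)^2-C(\pi/n)^{3/(2\gamma)}\ \gtrsim\ (\pi/n)^2 ,
\]
where the correction is of strictly higher order since $3/(2\gamma)\ge6>2$ for $\gamma\le\tfrac14$. Multiplying back by $q$ yields $\lambda_2(L)\gtrsim q\,(\pi/n)^2=(\pi/n)^{1/(2\gamma)+2}$, which is~\eqref{eq:line-fl} for $\gamma\le\tfrac14$; the finitely many small values of $n$ excluded by the condition $q\le\tfrac12$ are absorbed into the implied constant because $\lambda_2(L)>0$ for a connected graph.

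To pass to arbitrary $\gamma>0$ I would reuse the monotonicity argument of Lemma~\ref{lem:lambda-2c} verbatim. Let $L(t)$ be the Laplacian of the weights $w_{ij}(t)=e^{-(i-j)^2/(2(n-1)^2 t)}$; each off-diagonal weight is increasing in $t$ with $\frac{d}{dt}w_{ij}(t)=\frac{(i-j)^2}{2(n-1)^2 t^2}w_{ij}(t)\ge\frac{1}{2(n-1)^2 t^2}w_{ij}(t)$ for $i\ne j$. Plugging this into the quadratic form~\eqref{eq:qual} at a Fiedler vector of $L(t)$ gives the integral inequality $\lambda_2(L(t))\ge\lambda_2(L(t_0))+\frac{1}{2(n-1)^2}\int_{t_0}^t s^{-2}\lambda_2(L(s))\,ds$, and Gr\"onwall's inequality (Theorem~\ref{thm:gron}, applied to $f=-\lambda_2(L(\cdot))$) yields $\lambda_2(L(t))\ge\lambda_2(L(t_0))\,e^{-\frac{1}{2(n-1)^2}(1/t-1/t_0)}$. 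Taking $t_0=\sigma_0^2$ corresponding to $\gamma_0=\tfrac14$ and $t=\sigma^2$ to a general $\gamma\ge\tfrac14$, the exponential factor is exactly $(\pi/n)^{1/(2\gamma)-2}$, and combining with the already-proven case $\gamma_0=\tfrac14$ (where $\lambda_2(L)\gtrsim(\pi/n)^4$) gives $\lambda_2(L)\gtrsim(\pi/n)^4\cdot(\pi/n)^{1/(2\gamma)-2}=(\pi/n)^{1/(2\gamma)+2}$ for all $\gamma>0$.

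The only genuine work is the bookkeeping in these two estimates: pinning down the constant in $\|E\|\le Cq^3$ together with the exact threshold on $n$, and checking $2-2\cos(\pi/n)\gtrsim(\pi/n)^2$ while confirming the $(\pi/n)^{3/(2\gamma)}$ term is lower order. None of this is conceptually new beyond what is already done for the circle in Lemma~\ref{lem:lambda-2c}; the main (mild) obstacle is simply ensuring that the comparison of $L_0$ with $\widetilde L$ leaves a large enough spectral gap to survive subtracting the perturbation $\|E\|$.
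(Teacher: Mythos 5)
Your proposal is correct and follows essentially the same route as the paper's proof: rescale by the nearest-neighbour weight, compare $\widetilde L$ with the path Laplacian $L_0$ via Weyl's inequality and a Gershgorin bound on the tail edges, then extend from small $\gamma$ to all $\gamma>0$ by the identical Gr\"onwall monotonicity argument. The only (immaterial) differences are that you bound $\sum_{l\ge2}q^{l^2-1}$ by a geometric series where the paper uses a Gaussian-integral comparison, and you take the threshold $\gamma_0=\tfrac14$ where the paper takes $\tfrac12$.
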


\begin{proof}
 For each cluster in~\eqref{eq:distlines}, its weight matrix is a Toeplitz matrix and satisfies
\begin{equation}\label{def:Wtwolines}
w_{ij} = e^{-\frac{|i-j|^2}{2\sigma^2(n-1)^2 }}, \quad 1\leq i, j\leq n,
\end{equation}
with Gaussian kernel $\Phi_{\sigma}(x,y) = e^{-\frac{\|x-y\|^2}{2\sigma^2}}$. 

The proof strategy is similar to that of Lemma~\ref{lem:lambda-2c}.
We first show the lower bound of the Fiedler eigenvalue~\eqref{eq:line-fl} holds for $\gamma \leq \frac{1}{2}$ and later extend this to any $\gamma > 0.$
We claim that $\widetilde{L}:=e^{\frac{1}{2\sigma^2(n-1)^2}} L$ is very close to $L_0$ if  $\sigma^2$ is small,  where
\[
L_0 = 
\begin{bmatrix}
1 & -1 & 0 & \cdots & 0 & 0 \\
-1 & 2 & -1 & \cdots & 0 & 0 \\
0 & -1 & 2 & \cdots & 0 & 0 \\
\vdots & \vdots & \vdots & \ddots & \vdots & \vdots \\
0 & 0 & 0 & \cdots & 2 & -1 \\
0 & 0 & 0 & \cdots & -1 & 1
\end{bmatrix}
\]
which is the graph Laplacian of a path of $n$ nodes.
Note that  $\lambda_2(L_0)$ is given explicitly by $\lambda_2(L_0)= 2 - 2\cos\left( \frac{\pi}{n}\right)$, also see~\cite[Chapter 1]{BrouwerH11}.
All the entries in $\widetilde{L}$ are of the following form:
\[
\widetilde{L}_{ij} = 
\begin{dcases}
-e^{\frac{1 - |i-j|^2}{2\sigma^2(n-1)^2}}, & \text{ if }i\neq j, \\
2 + e^{\frac{1}{2\sigma^2(n-1)^2}}  \sum_{l: |l-i| \geq 2} e^{\frac{-|i-l|^2}{2\sigma^2(n-1)^2}},  & \text{ if } i = j, \text{ and }i\neq 1 \text{ or }n,\\
1 + e^{\frac{1}{2\sigma^2(n-1)^2}} \sum_{l:|l-i|\geq 2}e^{\frac{-|i-l|^2}{2\sigma^2(n-1)^2}},  & \text{ if } i = j, \text{ and } i= 1 \text{ or }n .
\end{dcases}
\]
Still, by Weyl's inequality in~\cite{Stewart90}, the second smallest eigenvalue of $\widetilde{L}$ satisfies
\[
\lambda_2(\widetilde{L}) \geq \lambda_2(L_0) - \|\widetilde{L} - L_0\|.
\]
To have a lower bound for $\lambda_2(\widetilde{L})$, it suffices to get an upper bound for $\|\widetilde{L} - L_0\|.$ Note $\widetilde{L} - L_0$ is also a graph Laplacian generated from the weight matrix whose $(i,j)$-entry is $e^{\frac{1}{2\sigma^2(n-1)^2}}w_{ij}\cdot 1_{\{|i-j|\geq 2\}}$. Thus, the operator norm of $\widetilde{L} - L_0$ is bounded by twice the maximal degree of the weight matrix $\{e^{\frac{1}{2\sigma^2(n-1)^2}}w_{ij}\cdot 1_{\{|i-j|\geq 2\}}\}$. Therefore, the operator norm of  $\widetilde{L} - L_0$ satisfies
\[
\|\widetilde{L} - L_0\| \leq 4 e^{\frac{1}{2\sigma^2(n-1)^2}}  \sum_{l=2}^{\lfloor \frac{n+1}{2}\rfloor} e^{-\frac{l^2}{2\sigma^2(n-1)^2}}    \leq 4 e^{\frac{1}{2\sigma^2(n-1)^2}}  \sum_{l=2}^{\infty} e^{-\frac{l^2}{2\sigma^2(n-1)^2}}.
\]
This is also due to Gershgorin's circle theorem, see Theorem~\ref{thm:gersh}, as well as the symmetry of $\widetilde{L} - L_0$. 
By using Lemma~\ref{lem:gaussian}, we immediately have an upper bound of $\|\widetilde{L} - L_0\|$ as follows
\begin{align*}
\|\widetilde{L} - L_0\| & \leq 4 e^{\frac{1}{2\sigma^2(n-1)^2}}  \sum_{l=2}^{\infty} e^{-\frac{l^2}{2\sigma^2(n-1)^2}} \\
&   \leq 4 e^{\frac{1}{2\sigma^2(n-1)^2}} \left(e^{-\frac{2}{\sigma^2(n-1)^2}}+ \int_2^{\infty} e^{-\frac{t^2}{2\sigma^2(n-1)^2}}\diff t\right) \\
& =  4e^{-\frac{3}{2\sigma^2(n-1)^2}} + 4\sigma(n-1) e^{\frac{1}{2\sigma^2(n-1)^2}} \int_{\frac{2}{\sigma(n-1)}}^{\infty} e^{-\frac{s^2}{2}} \diff s \\
& \leq \left( 4 + 2\sqrt{2\pi} \sigma (n-1)\right)e^{-\frac{3}{2\sigma^2(n-1)^2}}
\end{align*}
where $s = \frac{t}{\sigma(n-1)}$. Hence,
\begin{align*}
\lambda_2(\widetilde{L})  & \geq 2 - 2\cos\left( \frac{\pi}{n}\right) - \left( 4 +2\sqrt{2\pi}\sigma (n-1)\right) e^{-\frac{3}{2\sigma^2(n-1)^2}} \\
& \gtrsim \frac{\pi^2}{n^2}  -  \left( 4 +2\sqrt{2\pi}\sigma (n-1)\right)e^{-\frac{3}{2\sigma^2(n-1)^2}}.
\end{align*}
Note $\sigma^2 = \frac{\gamma}{(n-1)^2\log(\frac{n}{\pi})}$ and we have
\begin{align*}
\lambda_2(L) & = e^{-\frac{1}{2\sigma^2(n-1)^2}}\lambda_2(\widetilde{L}) \gtrsim e^{-\frac{\log(\frac{n}{\pi})}{2\gamma}} \left( \frac{\pi^2}{n^2} - \left( 4 + 2\sqrt{\frac{2\pi\gamma}{\log (\frac{n}{\pi})}} \right)\left( \frac{\pi}{n}\right)^{\frac{3}{2\gamma}}\right) \\
& = \left( \frac{\pi}{n}\right)^{\frac{1}{2\gamma}} \left( \frac{\pi^2}{n^2} - \left( 4 + 2\sqrt{\frac{2\pi\gamma}{\log (\frac{n}{\pi})}} \right) \left( \frac{\pi}{n}\right)^{\frac{3}{2\gamma}}\right).
\end{align*}
To ensure that $\lambda_2(L)$  has a non-trivial lower bound, we pick $\gamma \leq \frac{1}{2}$ and then 
\[
\lambda_2(L) \gtrsim  \left( \frac{\pi}{n}\right)^{\frac{1}{2\gamma}+2}.
\]

Now we extend $\gamma$ to $\RR_+.$ It is not hard to see that $\lambda_2(L)$ (if you treat the weight as a function of bandwidth $\sigma$) is an increasing function of $\sigma^2$ because the larger $\sigma^2$ is, the more connected the graph becomes. 
Define $L(t)$ to be the Laplacian matrix associated with $W(t) = (w_{ij}(t))_{ij}$ where $w_{ij} = e^{-\frac{|i - j|^2}{2(n-1)^2t}}.$ There holds
\begin{equation}\label{eq:dwt2}
\frac{\diff w_{ij}(t)}{\diff t} = \frac{|i  - j|^2}{2(n-1)^2t^2}e^{-\frac{|i - j|^2}{2(n-1)^2t}} \geq \frac{1}{2(n-1)^2 t^2}e^{-\frac{|i-j|^2}{2(n-1)^2 t}}, \quad i\neq j.
\end{equation}
Then
\[
L(t)  = \int_{t_0}^{t} \frac{\diff L(s)}{\diff s} \diff s  + L(t_0) 
\]
where $ \frac{\diff L(s)}{\diff s}$ denotes the Laplacian matrix generated by the weight $\left(\frac{\diff w_{ij}(s)}{\diff s} \right)_{1\leq i,j\leq n}$. Therefore, the three matrices $L(t)$, $L(t_0)$, and $\frac{\diff L(s)}{\diff s}$ are all graph Laplacians and have the constant function in their nullspace. 

We apply $\lambda_2(\cdot)$ to  both sides of the equation above, and then the following relation holds 
\begin{equation}\label{eq:lam2Lt}
\lambda_2(L(t)) \geq \int_{t_0}^{t}\lambda_2\left( \frac{\diff L(s)}{\diff s}\right)\diff s + \lambda_2(L(t_0))
\end{equation}
which follows from the variational form of the second smallest eigenvalue.
For $ \frac{\diff L(s)}{\diff s}$ and $v\perp 1_n$, we have
\begin{align*}
v^{\top}\left(\frac{\diff L(s)}{\diff s}\right) v & = \sum_{i<j} \frac{ \diff w_{ij}(s)}{\diff s}(v_i - v_j)^2 \\
& \geq \frac{1}{2(n-1)^2 s^2} \sum_{i<j}w_{ij}(s)(v_i - v_j)^2  = \frac{1}{2(n-1)^2s^2}v^{\top}L(s)v
\end{align*}
which follows from~\eqref{eq:qual} and~\eqref{eq:dwt2}. Hence
\[
\lambda_2\left(\frac{\diff L(s)}{\diff s}\right) \geq \frac{1}{2(n-1)^2 s^2}\lambda_2(L(s)).
\]
By substituting this expression into~\eqref{eq:lam2Lt}, we have
\[
\lambda_2(L(t)) \geq \frac{1}{2(n-1)^2}\int_{t_0}^{t}\frac{\lambda_2(L(s))}{s^2}\diff s + \lambda_2(L(t_0)).
\]
By applying Gr\"onwall's inequality (Theorem~\ref{thm:gron}) with $f(t) = -\lambda_2(L(t))$ and $g(t) = \frac{1}{2(n-1)^2 t^2}$,  we obtain
\[
\lambda_2(L(t)) \geq \lambda_2(L(t_0)) e^{\frac{1}{2(n-1)^2}\int_{t_0}^t \frac{1}{s^2}\diff s}  = \lambda_2(L(t_0)) e^{\frac{1}{2(n-1)^2} \left(\frac{1}{t_0} - \frac{1}{t} \right) }.
\]

So we get a lower bound of $\lambda_2(L(t))$ for all $t > t_0$.
Setting $t=\sigma^2$ and $t_0 = \sigma_0^2$ with $t> t_0$, we get
\[
\lambda_2(L(\sigma^2)) \geq \lambda_2(L(\sigma_0^2)) \exp\left(\frac{1}{2(n-1)^2} \left(\frac{1}{\sigma^2_0} - \frac{1}{\sigma^2} \right) \right).
\]

By letting $\sigma^2_0 = \frac{\gamma_0}{(n-1)^2 \log (\frac{n}{\pi})}$ ($\gamma_0 = \frac{1}{2}$) and using $\lambda_2(L(\sigma_0^2)) \gtrsim  \frac{\pi^3}{n^3}$, we see that
 the second smallest eigenvalue of $L(\sigma^2)$ is bounded by
\[
\lambda_2(L(\sigma^2)) \gtrsim \frac{\pi^3}{n^3} \cdot \frac{n}{\pi} \cdot e^{-\frac{1}{2\sigma^2(n-1)^2}} = \frac{\pi^2}{n^2} \cdot e^{-\frac{1}{2\sigma^2(n-1)^2}}.
\]
Now set $\sigma^2 = \frac{\gamma}{(n-1)^2 \log (\frac{n}{\pi})}$, which yields
\[
\lambda_2(L(\sigma^2)) \gtrsim \left( \frac{\pi}{n}\right)^{\frac{1}{2\gamma}+2}
\]
for any $\gamma>0.$

\end{proof}

We are ready to proceed to the proof of Theorem~\ref{thm:lines}.
\begin{proof}[{\bf Proof of Theorem~\ref{thm:lines}}]
Note that the any two points on different lines are separated by at least $\Delta$. Under 
\[
\Delta^2 \geq 6\sigma^2 \log n + \frac{1}{(n-1)^2}, \quad \sigma^2 = \frac{\gamma}{(n-1)^2\log(\frac{n}{\pi})}
\] 
it holds that
\[
\| D_{\delta}\| \leq n e^{-\frac{\Delta^2}{2\sigma^2}} \leq ne^{-\left( 3\log n + \frac{1}{2\sigma^2(n-1)^2}\right)} = e^{-\left( 2\log n + \frac{1}{2\sigma^2 (n-1)^2} \right)}.
\]
Then we apply Lemma~\ref{lem:line} and  get 
\[
\min_{a=1,2}\lambda_2(L_{\iso}^{(a,a)}) 
\gtrsim \left( \frac{\pi}{n}\right)^{\frac{1}{2\gamma} + 2}
= e^{-\left(\frac{1}{2\gamma } + 2\right)\log \left(\frac{n}{\pi}\right)} 
= e^{-\left(2 \log(\frac{n}{\pi})+ \frac{1}{2\sigma^2(n-1)^2}\right)}.
\]
Thus, exact recovery is guaranteed since the assumptions of Theorem~\ref{thm:main} are fulfilled,
\[
\|D_{\delta}\|  \leq e^{-\left( 2\log n + \frac{1}{2\sigma^2 (n-1)^2} \right)} \leq  \frac{1}{4}\cdot e^{-\left(2\log(\frac{n}{\pi}) + \frac{1}{2\sigma^2(n-1)^2}\right)}  \lesssim \frac{1}{4} \min_{a=1,2} \lambda_2(L_{\iso}^{(a,a)}).
\]

\end{proof}

\subsection{Proof of Theorem~\ref{thm:sbm}: Stochastic block model}
\label{ss:proof-sbm}

The proof relies on two ingredients: a lower bound of the second smallest eigenvalue of the random graph Laplacian; and an upper bound of $\|D_{\delta}\|$. Both quantities can be easily obtained via tools from random matrix theory~\cite{Tropp12} and the Bernstein inequality for scalar random variables~\cite{Ver12}.
\begin{theorem}[{\bf Bernstein inequality}]
For a finite sequence of centered independent random variables $\{z_k\}$ with $|z_k|\leq R$, 
\[
\Pr\left( \sum_{k=1}^n z_k \geq t \right) \leq \exp\left( -\frac{t^2/2}{\sum_{k=1}^n \E z_k^2 + \frac{1}{3}Rt}\right).
\]
\end{theorem}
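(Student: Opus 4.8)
The plan is to use the classical exponential-moment (Cram\'er--Chernoff) method. Write $\sigma^2 := \sum_{k=1}^n \E z_k^2$. The inequality is trivial when $t=0$, so assume $t>0$. For a parameter $\lambda>0$ to be chosen later, apply Markov's inequality to $e^{\lambda\sum_k z_k}$ and use independence of the $z_k$:
\[
\Pr\left(\sum_{k=1}^n z_k \geq t\right) \leq e^{-\lambda t}\,\E\exp\left(\lambda\sum_{k=1}^n z_k\right) = e^{-\lambda t}\prod_{k=1}^n \E e^{\lambda z_k}.
\]
So everything reduces to a good bound on each moment-generating factor $\E e^{\lambda z_k}$.

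For the second step I would expand the exponential, use $\E z_k = 0$ to kill the linear term, and bound the higher moments via $|\E z_k^m| \leq R^{m-2}\E z_k^2$ for $m\geq 2$ (which follows from $|z_k|\leq R$). This gives
\[
\E e^{\lambda z_k} \leq 1 + \E z_k^2 \sum_{m\geq 2}\frac{\lambda^m R^{m-2}}{m!} = 1 + \frac{\E z_k^2}{R^2}\bigl(e^{\lambda R}-1-\lambda R\bigr) \leq \exp\left(\frac{\E z_k^2}{R^2}\bigl(e^{\lambda R}-1-\lambda R\bigr)\right),
\]
using $1+x\leq e^x$ at the end. Multiplying over $k$ yields
\[
\Pr\left(\sum_{k=1}^n z_k \geq t\right) \leq \exp\left(-\lambda t + \frac{\sigma^2}{R^2}\bigl(e^{\lambda R}-1-\lambda R\bigr)\right).
\]

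The third step replaces the transcendental term with a rational one that can be optimized in closed form. The key elementary estimate is $e^x-1-x \leq \frac{x^2/2}{1-x/3}$ for $0\leq x<3$, which follows from the factorial bound $(m+2)!\geq 2\cdot 3^m$ (a one-line induction, since $(m+3)!/2 = (m+3)\,(m+2)!/2 \geq 3\cdot 3^m$) applied to $e^x-1-x = \sum_{m\geq 0}\frac{x^{m+2}}{(m+2)!} \leq \frac{x^2}{2}\sum_{m\geq 0}(x/3)^m$. Inserting this with $x=\lambda R$ (legitimate whenever $0\leq \lambda R<3$) leaves
\[
\Pr\left(\sum_{k=1}^n z_k \geq t\right) \leq \exp\left(-\lambda t + \frac{\sigma^2\lambda^2/2}{1-\lambda R/3}\right).
\]
Finally I would choose $\lambda = \frac{t}{\sigma^2 + Rt/3}$; one checks $0\leq \lambda R < 3$, and since $1-\lambda R/3 = \frac{\sigma^2}{\sigma^2+Rt/3}$, substituting and simplifying collapses the exponent to exactly $-\frac{t^2/2}{\sigma^2+Rt/3}$, which is the claimed bound. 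There is no genuine obstacle in this argument: the only points demanding a little care are the elementary factorial/geometric-series inequality in the third step and verifying that the optimizing $\lambda$ lies in the admissible range $[0,3/R)$; the rest is the standard machinery.
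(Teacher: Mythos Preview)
Your argument is the standard Cram\'er--Chernoff proof of Bernstein's inequality and is correct in all details: the moment bound via $|z_k|\leq R$, the factorial estimate $(m+2)!\geq 2\cdot 3^m$, and the explicit choice of $\lambda$ all check out. Note, however, that the paper does not supply its own proof of this statement; it is quoted as a known result with a reference to~\cite{Ver12}, so there is no ``paper's proof'' to compare against. Your write-up is exactly the classical derivation one finds in that reference.
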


\begin{theorem}[{\bf Matrix Chernoff inequality}]\label{thm:chernoff}
Consider a finite sequence $\{Z_k\}$ of independent, random, self-adjoint matrices with dimension $n$. Assume that each random matrix satisfies 
\[
Z_k\succeq 0, \quad \|Z_k\| \leq R.
\]
Let $Z = \sum_{k=1}^n Z_k$ and define $\mu_{\min} = \lambda_{\min}(\E(Z))$. Then
\begin{equation}\label{eq:chern}
\Pr(\lambda_{\min}\left( Z\right) \leq (1-\eta)\mu_{\min}) \leq n \left[ \frac{e^{-\eta}}{(1-\eta)^{1-\eta}}\right]^{\mu_{\min}/R}, \quad 0\leq \eta\leq 1.
\end{equation}
\end{theorem}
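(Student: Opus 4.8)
The statement is the standard matrix Chernoff lower tail bound of Tropp~\cite{Tropp12}, and the plan is to reproduce its proof via the matrix Laplace transform method. The first step is to pass to the lower tail by exponentiation: for any $\theta<0$, since $x\mapsto e^{\theta x}$ is decreasing we have $e^{\theta\lambda_{\min}(Z)}=\lambda_{\max}(e^{\theta Z})\le\tr\,e^{\theta Z}$, the last inequality because $e^{\theta Z}\succeq 0$. Markov's inequality then gives
\begin{equation*}
\Pr\bigl(\lambda_{\min}(Z)\le t\bigr)\;\le\;e^{-\theta t}\,\E\,\tr\,e^{\theta Z}\qquad(\theta<0),
\end{equation*}
so the whole problem reduces to controlling the trace moment generating function $\E\,\tr\exp\bigl(\theta\sum_k Z_k\bigr)$.

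The crux of the argument is the subadditivity of matrix cumulant generating functions: by Lieb's concavity theorem (concavity of $A\mapsto\tr\exp(H+\log A)$ on the positive definite cone) combined with Jensen's inequality one obtains
\begin{equation*}
\E\,\tr\exp\Bigl(\theta\sum_{k=1}^n Z_k\Bigr)\;\le\;\tr\exp\Bigl(\sum_{k=1}^n\log\E\,e^{\theta Z_k}\Bigr).
\end{equation*}
This is the only step requiring a nontrivial operator-theoretic input; everything afterwards is elementary. To bound the individual factors I would use $0\preceq Z_k\preceq RI$ together with convexity of $x\mapsto e^{\theta x}$ on $[0,R]$: the chord bound $e^{\theta x}\le 1+\frac{e^{\theta R}-1}{R}x$ transfers to $\E\,e^{\theta Z_k}\preceq I+\frac{e^{\theta R}-1}{R}\E Z_k\preceq\exp\bigl(\frac{e^{\theta R}-1}{R}\E Z_k\bigr)$, whence, after the operator-monotone logarithm and summation, $\sum_k\log\E\,e^{\theta Z_k}\preceq\frac{e^{\theta R}-1}{R}\,\E Z$.

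Finally I would assemble the pieces. Monotonicity of the trace exponential ($A\preceq B\Rightarrow\tr\,e^A\le\tr\,e^B$) and the fact that $g(\theta):=(e^{\theta R}-1)/R<0$ for $\theta<0$, applied to the eigenvalues $\mu_{\min}=\lambda_1(\E Z)\le\cdots\le\lambda_n(\E Z)$ (note $\E Z\succeq 0$), give $\tr\exp(g(\theta)\E Z)=\sum_i e^{g(\theta)\lambda_i}\le n\,e^{g(\theta)\mu_{\min}}$. Substituting $t=(1-\eta)\mu_{\min}$ into the Laplace bound leaves the scalar optimization
\begin{equation*}
\Pr\bigl(\lambda_{\min}(Z)\le(1-\eta)\mu_{\min}\bigr)\;\le\;n\,\exp\!\Bigl(\tfrac{\mu_{\min}}{R}\bigl(e^{\theta R}-1-\theta R(1-\eta)\bigr)\Bigr),
\end{equation*}
whose exponent is minimized at $\theta R=\log(1-\eta)<0$ (admissible since $0<\eta\le 1$); plugging this in yields exponent $\tfrac{\mu_{\min}}{R}\bigl(-\eta-(1-\eta)\log(1-\eta)\bigr)$, i.e.\ exactly $n\,\bigl[e^{-\eta}/(1-\eta)^{1-\eta}\bigr]^{\mu_{\min}/R}$, and the endpoint $\eta=1$ follows by letting $\theta\to-\infty$. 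The only real obstacle is the standing-assumption status of Lieb's concavity theorem; granting it (or citing~\cite{Tropp12}), the remainder is a short computation.
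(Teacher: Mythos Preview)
Your proof is correct and is precisely Tropp's argument from~\cite{Tropp12}. Note, however, that the paper itself does not prove this theorem: it is stated as a cited tool (the Matrix Chernoff inequality from~\cite{Tropp12}) and applied directly in the proof of Lemma~\ref{lem:l2}, so there is no ``paper's own proof'' to compare against. Your write-up is thus more than what the paper requires, but it is a faithful and complete reproduction of the standard proof.
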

\begin{remark}
Instead of using the right hand side of~\eqref{eq:chern} directly, one can use the following simpler form,
\[
\frac{e^{-\eta}}{(1-\eta)^{1-\eta}} \leq e^{-\frac{\eta^2}{2}}, \quad 0\leq \eta\leq 1.
\]
\end{remark}

Utilizing the matrix Chernoff inequality, we present the following lemma for a lower bound of the eigengap.
\begin{lemma}\label{lem:l2}
Let $W$ be an $n\times n$ symmetric random matrix whose $(i,j)$ entry is binary and takes  value 1 with probability $p$. Then the second smallest eigenvalue of its corresponding graph Laplacian satisfies
\[
\Pr\left(\lambda_{2}(L) \geq (1-\eta) np\right) \geq 1 - n\exp\left(-\frac{np\eta^2}{4}\right)
\]
where $0<\eta<1.$
\end{lemma}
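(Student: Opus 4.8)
The plan is to apply the matrix Chernoff inequality (Theorem~\ref{thm:chernoff}) to the restriction of $L$ to the hyperplane $1_n^{\perp}$, on which the smallest eigenvalue is exactly $\lambda_2(L)$.

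First I would use the standard rank-one decomposition of a graph Laplacian. Since $L = D - W = \diag(W 1_n) - W$ depends only on the off-diagonal entries of $W$, we have
\[
L = \sum_{1 \le i < j \le n} w_{ij}\,(e_i - e_j)(e_i - e_j)^{\top},
\]
and each vector $e_i - e_j$ lies in $1_n^{\perp}$. Fix an $n \times (n-1)$ matrix $V$ whose columns form an orthonormal basis of $1_n^{\perp}$ and set $Z_{ij} := w_{ij}\,\big(V^{\top}(e_i-e_j)\big)\big(V^{\top}(e_i-e_j)\big)^{\top} \in {\cal S}_{n-1}$, so that $Z := \sum_{i<j} Z_{ij} = V^{\top} L V$. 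Since $L \succeq 0$ and $L 1_n = 0$, the eigenvalues of $V^{\top} L V$ are precisely $\lambda_2(L) \le \cdots \le \lambda_n(L)$; hence $\lambda_{\min}(Z) = \lambda_2(L)$.

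Next I would check the hypotheses of Theorem~\ref{thm:chernoff}. The matrices $\{Z_{ij}\}_{i<j}$ are independent (they are functions of the independent entries $w_{ij}$), symmetric and positive semidefinite, and
\[
\|Z_{ij}\| = w_{ij}\,\|V^{\top}(e_i-e_j)\|^2 = w_{ij}\,\|e_i-e_j\|^2 = 2 w_{ij} \le 2,
\]
because $V^{\top}$ is an isometry on $1_n^{\perp}$ and $e_i - e_j \in 1_n^{\perp}$; thus $R = 2$. For the mean, using $\E w_{ij} = p$, the identity $\sum_{i<j}(e_i-e_j)(e_i-e_j)^{\top} = n I_n - 1_n 1_n^{\top}$, and $V^{\top} 1_n = 0$, $V^{\top} V = I_{n-1}$, we obtain $\E Z = p\,V^{\top}\big(n I_n - 1_n 1_n^{\top}\big) V = np\, I_{n-1}$, so $\mu_{\min} = \lambda_{\min}(\E Z) = np$.

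Finally, Theorem~\ref{thm:chernoff} (applied in dimension $n-1$) together with the simplification $\frac{e^{-\eta}}{(1-\eta)^{1-\eta}} \le e^{-\eta^2/2}$ gives
\[
\Pr\big(\lambda_2(L) \le (1-\eta)np\big) = \Pr\big(\lambda_{\min}(Z) \le (1-\eta)np\big) \le (n-1)\,\big(e^{-\eta^2/2}\big)^{np/2} \le n\, e^{-np\eta^2/4},
\]
and passing to the complement yields the claimed bound. There is no real obstacle here; the one point to get right is the passage to the $(n-1)$-dimensional subspace $1_n^{\perp}$, which is exactly what keeps the rank-one summands positive semidefinite with operator norm $\le 2$ while turning $\E Z$ into a clean multiple of the identity. (Working instead with $L + p\,1_n1_n^{\top}$ on all of $\RR^n$ would force one to split the rank-one term $p\,1_n1_n^{\top}$, whose operator norm is $np$, into admissible pieces — restricting to $1_n^{\perp}$ sidesteps this entirely.)
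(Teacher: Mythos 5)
Your proposal is correct and follows essentially the same route as the paper: decompose $L$ into the rank-one Laplacians $w_{ij}(e_i-e_j)(e_i-e_j)^{\top}$, compute $\E(L)=p(nI_n-J_{n\times n})$ so that $\mu_{\min}=np$ with $R=2$, and apply the matrix Chernoff bound restricted to $1_n^{\perp}$. The only difference is cosmetic — you make the restriction explicit via an orthonormal basis $V$ of $1_n^{\perp}$, whereas the paper simply remarks that all summands and $\E(L)$ annihilate $1_n$ so the bound applies directly to $\lambda_2$.
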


\begin{proof}
Suppose $W$ is an $n\times n$ self-adjoint matrix, and each entry $w_{ij}$ is binary and takes value $1$ with probability $p$. Let $\{e_i\}_{i=1}^N$ be the canonical basis in $\RR^N.$ Then the graph Laplacian of $W$ is the sum of weighted rank-1 Laplacian matrices,
\[
L = \sum_{i< j} w_{ij} L_{ij}, \qquad L_{ij} := (e_i - e_j)(e_i-e_j)^{\top}, 
\] 
which follow from~\eqref{eq:qual} directly. By the definition of $W$, the expectation of $L$ satisfies
\[
\E (L) = p \sum_{i<j}L_{ij} = p(nI_n - J_{n\times n}).
\]
Hence, we have $\lambda_2(\E (L)) = np$, $L_{ij}\succeq 0$, and $\|L_{ij}\| \leq 2.$ Before applying Theorem~\ref{thm:chernoff}, we need to clarify one thing: the matrix Chernoff inequality estimates the smallest eigenvalue while one cares about the second smallest eigenvalue of $L$. This discrepancy can be easily resolved since all $\{L_{ij}\}_{i<j}$, $L$, and $\E(L)$ have 0 as the smallest eigenvalue and $1_n$ as the corresponding eigenvector. Therefore, when restricted on the complement of $1_n$, the matrix Chernoff inequality immediately applies to the second smallest eigenvalue. 
Thus Theorem~\ref{thm:chernoff} implies that
\begin{align*}
& \Pr(\lambda_{2}(L) \leq (1-\eta) np) \leq n\exp\left(-\frac{np\eta^2}{4}\right) \\
& \qquad\qquad\qquad \Longleftrightarrow \Pr(\lambda_{2}(L) \geq (1-\eta) np) \geq 1 - n\exp\left(-\frac{np\eta^2}{4}\right).
\end{align*}
\end{proof}

Now we are ready to present the proof of Theorem~\ref{thm:sbm}.
\begin{proof}[{\bf Proof of Theorem~\ref{thm:sbm}}]
For the stochastic block model with two clusters, the corresponding weight matrix and its expectation are
\[
W = \begin{bmatrix}
W^{(1,1)} & W^{(1,2)} \\
W^{(2,1)} & W^{(2,2)}
\end{bmatrix}, \quad 
\E(W) = \begin{bmatrix}
pJ_{n\times n} & qJ_{n\times n} \\
qJ_{n\times n} & pJ_{n\times n}
\end{bmatrix}\in\RR^{N\times N}
\]
where $N = 2n.$
By Lemma~\ref{lem:l2}, the graph Laplacian $L_{\iso}^{(a,a)}$ of $W^{(a,a)}$ has its second smallest eigenvalue bounded by
\[
\lambda_2(L^{(a,a)}_{\iso}) \geq (1-\eta)np, \quad a=1,2,
\]
with probability at least $1 - 2n\exp\left(-\frac{np\eta^2}{4}\right)$. 
In particular, if we pick $p = \frac{\alpha \log N}{N}$, then $\min \lambda_2(L_{\iso}^{(a,a)}) \geq \frac{(1-\eta)\alpha\log N}{2}$ holds with probability at least $1-{\cal O}(N^{-\eps})$ for $\eps>0$ if $\eta^2\alpha > 8$.

Now we take a look at $\| D_{\delta}\|$ which equals 
\[
\|D_{\delta}\| = \max \left\{ \|W^{(1,2)}1_n\|_{\infty}, \|W^{(2,1)}1_n\|_{\infty} \right\}.
\]
Each diagonal entry of $D_{\delta}$ is a sum of $n$ i.i.d. Bernoulli random variables and each of these random variables takes 1 with probability $q$.
By applying Bernstein's inequality and then taking the union bound over all entries in $D_{\delta}$, we have
\[
\Pr\left( \|D_{\delta}\| \leq nq + t \right) \geq 1 - N\exp\left( -\frac{t^2/2}{nq(1-q) + t/3}\right).
\]
To fulfill the assumptions in Theorem~\ref{thm:main}, we need $nq + t \leq \frac{(1-\eta)np}{4} \leq \frac{\min \lambda_2(L^{(a,a)}_{\iso})}{4}$ and it suffices to have $t \leq \frac{(1-\eta)np}{4} - nq = \frac{1}{2}\left( \frac{(1-\eta)\alpha}{4} - \beta\right)\log N $ where $q = \frac{\beta\log N}{N}$. Substituting these estimations into the formula above results in
\[
\Pr\left( \|D_{\delta}\| \leq \frac{\min\lambda_2(L_{\iso}^{(a,a)})}{4} \right) \geq 1 -N\exp\left( -\frac{\left( \frac{(1-\eta)\alpha}{4} - \beta\right)^2\frac{\log N}{8}}{\frac{\beta}{2} + \frac{1}{6}\left( \frac{(1-\eta)\alpha}{4} - \beta\right)}\right) = 1- {\cal O}(N^{-\eps})
\]
if we require
\[
\left( \frac{(1-\eta)\alpha}{4} - \beta\right)^2 > 4\beta + \frac{4}{3}\left( \frac{(1-\eta)\alpha}{4} - \beta\right), \qquad \eta^2 \alpha > 8.
\]
Note that the first inequality satisfies
\[
\frac{(1-\eta)\alpha}{4} - \beta > \frac{2}{3} + 2\sqrt{\frac{1}{9} + \beta} \Longleftrightarrow \alpha > \frac{8}{1-\eta}\left(  \frac{1}{3} + \frac{\beta}{2} + \sqrt{\frac{1}{9} + \beta} \right).
\]
Let $\eta = 0.6861$ and we arrive at the desired bound,
\[
\alpha > 26\left(  \frac{1}{3} + \frac{\beta}{2} + \sqrt{\frac{1}{9} + \beta} \right)
\]
which guarantees the exact recovery of the planted communities.
\end{proof}

Finally, here is the promised  proof of claim~\eqref{eq:gw}.
\begin{proof}[{\bf Proof of claim~\eqref{eq:gw}}]
We will prove that
$\lambda_2(D_{\iso} - D_{\delta} - W +\frac{1}{2}1_N1_N^{\top}) > 0$
is ensured by~\eqref{eq:gw}, i.e., $\min \lambda_2(L^{(a,a)}_{\iso}) > 2\|D_{\delta}\|.$
Let $g := \begin{bmatrix}1_n\\ -1_n \end{bmatrix}$ which is perpendicular to $1_N.$ Hence 
\begin{align*}
& \lambda_2(D_{\iso} - D_{\delta} - W + \frac{1}{2} 1_{N}1_{N}^{\top}) \\
& \qquad \qquad = \lambda_2\left(\diag( g)\left(L_{\iso} - D_{\delta} - W_{\delta} + \frac{1}{2} 1_{N}1_{N}^{\top}\right)\diag(g)\right) \\
& \qquad \qquad = \lambda_2( L_{\iso} - D_{\delta} + W_{\delta} + \frac{1}{2}gg^{\top} ) 
\end{align*}
which follows from the diagonal-block structure of $L_{\iso}$ and $D_{\delta}$, and the diagonal blocks of $W_{\delta}$ are zero.
Note that $L_{\iso} + \frac{1}{2}gg^{\top}$ and $D_{\delta} - W_{\delta}$ share the same null space spanned by $1_N$. Moreover, we have 
\[
\lambda_2(L_{\iso} + \frac{1}{2}gg^{\top}) = \lambda_3(L_{\iso})
\]
because $g$ is in the null space of $L_{\iso}$ and $\|\frac{1}{2}gg^{\top}\| = n$ cannot be equal to $\lambda_2(L_{\iso} + \frac{1}{2}gg^{\top})$ since $\|L_{\iso}\| \leq n$ holds.

Hence, by Weyl's inequality and $\|D_{\delta} - W_{\delta}\| \leq 2\|D_{\delta}\|$, it holds that
\begin{align*}
\lambda_2(D_{\iso} - D_{\delta} - W + \frac{1}{2} 1_{N}1_{N}^{\top}) & \geq \lambda_2(L_{\iso} + \frac{1}{2}gg^{\top}) - \|D_{\delta} - W_{\delta}\| \\
 & = \lambda_3(L_{\iso}) - 2\|D_{\delta}\|  = \min \lambda_2(L_{\iso}^{(a,a)}) - 2\|D_{\delta}\| > 0.
\end{align*}
\end{proof}

\section*{Acknowledgement}
S.L. thanks Afonso S. Bandeira for fruitful discussions about stochastic block models. The authors are also grateful to the anonymous referees for their careful reading of this paper and suggestions.




\end{document}